\setlist{noitemsep, topsep=0cm}
\tikzset{cross/.style={cross out, draw=black, minimum size=2*(#1-\pgflinewidth), inner sep=0pt, outer sep=0pt},cross/.default={1pt}}
\newtheorem{theorem}{Theorem}[section]
\newtheorem{proposition}[theorem]{Proposition}
\newtheorem{lemma}[theorem]{Lemma}
\newtheorem{corollary}[theorem]{Corollary}
\newtheorem{definition}[theorem]{Definition}
\theoremstyle{remark}
\newtheorem{remark}[theorem]{Remark}
\newtheorem{assumption}[theorem]{Assumption}
\theoremstyle{claim}
\theoremstyle{definition}
\newcommand{\norm}[2]{\left\lVert#1\right\rVert_{#2}}
\newcommand{\pmat}[3]{\begin{pmatrix} #1 & #2 &\cdots & #3 \end{pmatrix}}
\newcommand{\dict}[1]{d_{#1}}
\newcommand{\inprod}[2]{\left\langle#1 \; , \; #2\right\rangle}
\newcommand{\inpnorm}[1]{\norm{#1}{}}
\newcommand{\dualnorm}[1]{\norm{#1}{\linmap}'}
\newcommand{\dimension}{n}
\newcommand{\indicator}{\mathds{1}}
\newcommand{\summ}[2]{\sum_{#1}^{#2}}
\DeclareSymbolFont{symbolsC}{U}{pxsyc}{m}{n}
\DeclareMathOperator{\relinterior}{relint}
\DeclareMathOperator{\image}{image}
\DeclareMathOperator{\convhull}{conv}
\DeclareMathOperator{\rank}{rank}
\DeclareMathOperator{\EE}{\mathsf{E}}
\DeclareMathOperator{\PP}{\mathsf{P}}
\DeclareMathOperator*{\minimize}{minimize}
\DeclareMathOperator*{\sbjto}{subject \; to}
\DeclareMathOperator*{\argmin}{argmin}
\DeclareMathOperator*{\argmax}{argmax}
\DeclarePairedDelimiterX\set[1]\lbrace\rbrace{#1}
\newcommand{\define}{\coloneqq}
\newcommand{\R}[1]{\mathbb{R}^{#1}}
\newcommand{\opt}{^\ast}
\newcommand{\transp}{^\top}
\newcommand{\dsize}{K}
\newcommand{\dictionary}{D}
\newcommand{\unitdictionaryset}{\mathcal{\dictionary}}
\newcommand{\nbhood}[2]{B(#1, #2)}
\newcommand{\closednbhood}[2]{B[#1, #2]}
\renewcommand{\geq}{\geqslant}
\renewcommand{\leq}{\leqslant}
\newcommand{\hilbert}{\mathbb{H}_{\dimension}}
\newcommand{\rv}{X}
\newcommand{\repvec}{f}
\newcommand{\linmap}{\phi}
\newcommand{\error}{\epsilon}
\newcommand{\regulizer}{\delta}
\newcommand{\scaling}{r}
\newcommand{\atomset}[1]{S_{\regulizer}(\linmap, #1)}
\newcommand{\atomball}{S_{\regulizer}(\linmap, 1)}
\newcommand{\atomscaled}{S_{\regulizer}(\linmap, \samplevec, \error)}
\newcommand{\horizon}{T}
\newcommand{\data}{[\rv : \horizon]}
\newcommand{\sample}[1]{x_{#1}}
\newcommand{\samplevec}{x}
\newcommand{\dualvar}{\eta}
\newcommand{\hvar}{h}
\newcommand{\separator}{\lambda}
\newcommand{\separatorset}{\Lambda_{\regulizer} (\linmap, \samplevec, \error)}
\newcommand{\Depsdelta}{(\linmap, \error, \regulizer)}
\newcommand{\encodermap}[4]{F_{#4} (#1, #2, #3)}
\newcommand{\codes}{\encodermap{\linmap}{\sample}{\error}{\regulizer}}
\newcommand{\encodedcost}[4]{C_{#4} (#1, #2, #3)}
\newcommand{\samplecost}{\encodedcost{\linmap}{\sample}{\error}{\regulizer}}
\newcommand{\cost}{c}
\newcommand{\costatomset}{V_{\cost}}
\newcommand{\horder}{p}
\title[On Convex Duality of Linear Inverse Problems]{On Convex Duality of Linear Inverse Problems}
\author[M. Rayyan Sheriff and D. Chatterjee]{Mohammed Rayyan Sheriff, Debasish Chatterjee}
\address{Systems \& Control Engineering\\ Indian Institute of Technology Bombay\\ Powai, Mumbai 400076\\ India.\\
\indent \url{http://www.sc.iitb.ac.in/~mohammedrayyan}\\
\indent \url{http://www.sc.iitb.ac.in/~chatterjee}
}
\thanks{Emails: \texttt{mohammedrayyan@sc.iitb.ac.in, dchatter@iitb.ac.in}}
\begin{document}
\maketitle 

\begin{abstract}
In this article we dwell into the class of so called ill posed Linear Inverse Problems (LIP) in machine learning,
which has become almost a classic in recent times. The fundamental task in an LIP is to recover the entire signal / data from its relatively few random linear measurements. Such problems arise in variety of settings with applications ranging from medical image processing, recommender systems etc. We provide an exposition to the convex duality of the linear inverse problems, and obtain a novel and equivalent convex-concave min-max reformulation that gives rise to simple ascend-descent type algorithms to solve an LIP. Moreover, such a reformulation is crucial in developing methods to solve the dictionary learning problem with almost sure recovery constraints.
\end{abstract}

\section{Introduction}
A Linear Inverse Problem can be simply stated as the task of recovering a signal from its linear measurements. Increasingly, a signal encountered in practise has very high ambient dimension. For example, audio signals and images typically have ambient dimension ranging from a few thousands to millions. However, the number of linear measurements that are available to recover the entire signal from, are relatively very few compared to their ambient dimension. This makes such an LIP ill posed. Fortunately, high dimensional data of present day and age often has a lot of underlying low dimensional characteristics to it. Such low dimensional characteristics when taken into consideration in an LIP, often suffices to overcome the ill posedness of the problem.

One of the early instances that gave recognition to LIPs is Compressed Sensing \cite{donoho2006compressed}, \cite{candes2008introduction}, \cite{candes2006robust}, \cite{candes2006stable} where a given signal \( \repvec' \) is assumed to be sparse or sparse in some known basis. So, given the partial information of the signal in the form of a collection of random linear measurements \( \samplevec = \linmap (\repvec') \), the objective is to recover the entire signal almost accurately. Since the given signal is known to be sparse, one would expect that the true signal can be recovered accurately by finding a sparsest solution to the under determined system of linear equations \( \samplevec = \linmap (\repvec) \) given by the linear measurements. However, finding sparsest solutions (i.e., minimum \( \ell_0 \) pseudo norm) to linear equations is NP hard and therefore, not practical in the intended applications due to the size of the data that is typically encountered there. Fortunately, it is now well established that under very mild conditions, the simple convex heuristic of minimizing the \( \ell_1 \)-norm 
\begin{equation}
\begin{cases}
\begin{aligned}
& \minimize && \norm{\repvec}{1} \\
& \sbjto && \samplevec = \linmap (\repvec) ,
\end{aligned}
\end{cases}
\end{equation}
instead of the \( \ell_0 \)-penalty, finds the sparsest solution almost always. Thus, the true signal can be recovered exactly by simply solving a convex optimization problem. Moreover, even if the linear measurements are noisy, recovery done via minimizing the \( \ell_1 \)-penalty is reasonably accurate.

Similar to compressed sensing is the problem of low rank matrix recovery / completion, where the objective is to reconstruct an entire matrix \( M' \) from only a few of its entries \( [M']_{ij} \) for \( (i,j) \in I \). Matrix recovery / completion problems arise regularly in recommender systems with case in point, the Netflix challenge. Since the unknown matrix is known to be of low rank, one expects that the true matrix is the solution to the following rank minimization problem.
\[
\begin{cases}
\begin{aligned}
& \minimize && \rank (M) \\
& \sbjto && [M']_{ij} = [M]_{ij} \quad \text{for } (i,j) \in I .
\end{aligned}
\end{cases}
\]
However, minimizing rank exactly, is extremely hard and impractical for most applications. It is now well established \cite{candes2009exact}, \cite{recht2010guaranteed}, \cite{chandrasekaran2012convex} that under mild conditions, the simple convex heuristic of minimizing the matrix nuclear norm \( \norm{\cdot}{*} \), recovers the true matrix.
\begin{equation}
\begin{cases}
\begin{aligned}
& \minimize && \norm{M'}{*} \\
& \sbjto && [M']_{ij} = [M]_{ij} \quad \text{for } (i,j) \in I .
\end{aligned}
\end{cases}
\end{equation}

Often, signals that are encountered in practice have low dimensional characteristics to them. In particular, they can be written as a linear combination of relatively few elements from some atomic set \( \mathcal{A} \). The low dimensional characteristic of the signal depends on this atomic set. For instance, in compressed sensing, since the signal is assumed to be sparse, the atomic set \( \mathcal{A} \) is considered to be the standard euclidean basis of appropriate dimension. Whereas, in the matrix recovery problem, since the unknown matrix is assumed to be of low rank, it can be written as the sum of a few rank-\( 1 \) matrices, and thus the atomic set \( \mathcal{A} \) is the set of all rank-\( 1 \) matrices of appropriate dimensions. So, given such a signal with the corresponding atomic set \( \mathcal{A} \), an LIP tries to finding a linear combination of a few elements of the atomic set \( \mathcal{A} \) that satisfy the given linear measurements of the signal. However, as seen in compressed sensing and matrix recovery problems, finding such a linear combination by simply searching the atomic set is impractical.

It is to be observed that the \( \ell_1 \) and the nuclear norms are the \emph{guage} functions corresponding to the convex hull of the standard euclidean basis (atomic set in compressed sensing) and the set of rank-\( 1 \) matrices (atomic set in matrix recovery problem) respectively. By minimizing such convex functions subject to the linear measurement constraints, guarantees have been obtained for fruitful signal recovery in compressed sensing and matrix recovery problems. Motivated by this observation, it was established in \cite{chandrasekaran2012convex} that for a generic LIP with a generic atomic set \( \mathcal{A} \), the analogous convex heuristic of minimizing the guage function corresponding to the set \( \convhull (\mathcal{A}) \) provides exact recovery under mild conditions. Thus, given linear measurements \( \samplevec = \linmap (\repvec') \) of a signal \( \repvec' \), an LIP seeks to solve
\begin{equation}
\label{eq:LIP}
\begin{cases}
\begin{aligned}
& \minimize && \cost (\repvec) \\
& \sbjto && \samplevec = \linmap (\repvec) ,
\end{aligned}
\end{cases}
\end{equation}
where \( \cost \) is a positively homogenous convex cost function such that \( \convhull (\mathcal{A}) = \{ \repvec : \cost (\repvec) \leq 1 \} \). If the observed linear measurements are noisy, a slightly modified problem is solved
\begin{equation}
\label{eq:robust-LIP}
\begin{cases}
\begin{aligned}
& \minimize && \cost (\repvec') \\
& \sbjto && \inpnorm{\samplevec - \linmap (\repvec')} \leq \error ,
\end{aligned}
\end{cases}
\end{equation}
where \( \error \geq 0 \) is a ``statistically   good'' bound on the measurement noise. 

If \( \repvec \) is a linear combination of only a ``few'' elements of \( \mathcal{A} \), true signal can be recovered from only the linear measurements by solving the LIP \eqref{eq:LIP}. Quantitative analysis on the number of measurements required to guarantee fruitful recovery, can be found in \cite{chandrasekaran2012convex} along with the constraints on the type of measurements that are suited for a given atomic set.

A better understanding of various LIPs has led to the inception of state of the art methods in medical imaging, recommender systems, image processing applications like denoising, super resolution and many more. A detailed exposition to a variety of applications where solving an LIP is central can be found in \cite{chandrasekaran2012convex}. Due to their importance, we would like to have simple, fast and efficient algorithms that solve these LIPs. One of the objective of the work carried out in this article is to provide the same. 

The other primary motivation for our work in this article comes from the related \emph{Dictionary Learning} problem. The objective in dictionary learning is to find a standard database of vectors called the \emph{dictionary} such that the given data samples \( (\samplevec_t)_t \) are expressed as linear combinations of the dictionary vectors. For a given dictionary, the linear combination corresponding to every sample is regarded as the representation of the respective sample under that particular dictionary. Depending on the application, a dictionary is learned so that the resulting representation of the data has some desirable features. One of the main feature to have in the representation is sparsity. With the recent success of sparsity based techniques in a bewildering range of topics in signal processing, the importance of sparsity in modern day data science can hardly be overstated. Due to the many benefits of sparse representation in applications such as compression, robustness, clustering etc., there is an ever increasing demand to learn good dictionaries that offer maximally sparse but also reasonably accurate representation of the data.  A brief overview on the relevance of the dictionary learning problem and methods used to learn a `good' dictionary are given in \cite{tosic2011dictionary}.

To this end, let \( \dictionary = \pmat{\dict{1}}{\dict{2}}{\dict{\dsize}} \) denote a dictionary of \( \dsize \) vectors, where \( \dsize \) is some positive integer. Let \( \repvec_t \) denote the representation of sample \( \samplevec_t \) for every \( t \in 1,2,\ldots, T \). Then the dictionary learning problem that we aim to solve is written as
\begin{equation}
\label{eq:DL-fixed-error}
\begin{cases}
\begin{aligned}
		& \minimize_{ (\repvec_t)_t, \; \dictionary } && \frac{1}{T} \sum\limits_{t = 1}^T \norm{\repvec_t}{1} \\
		& \sbjto				&& 
		\begin{cases}
		    \dictionary \in \unitdictionaryset , \\
			\norm{\samplevec_t - \dictionary \repvec_t }{2} \leq \error_t \quad \text{for every \( t = 1, 2, \ldots, T \)},
		\end{cases}
\end{aligned}
\end{cases}
\end{equation}
where \( \unitdictionaryset \) is some convex subset of \( \R{\dimension \times \dsize} \) and \( \error_t \) is a non-negative real number. In many image processing applications like denoising etc., \( \error_t \) corresponds to the bounds on the additive noise in the noisy data. Minimization of the \( \ell_1 \) penalty enforces sparsity in the representation vectors \( (\repvec_t)_t \). For different applications, the dictionary can be learned to optimize a generic cost function \( \cost (\cdot) \) that is task specific, instead of the \( \ell_1 \)-norm. It is to be noted that for a fixed dictionary \( \dictionary \), optimization over \( (\repvec_t)_t \) simply is to solve the LIP \eqref{eq:robust-LIP} for each \( t \).

Conventionally, a dictionary is learnt by solving the following optimization problem
\begin{equation}
	\label{eq:DL-l1-regularized}
	\minimize_{(\repvec_t)_t, \; \dictionary \; \in \; \unitdictionaryset } \quad \frac{1}{T} \sum\limits_{t = 1}^T  \Big( \norm{ f_t }{1} \; + \; \gamma \norm{x_t - \dictionary f_t}{2}^2  \Big)  
\end{equation}
where \( \gamma > 0 \) is the regularization parameter. It should be noted that the cost function in \eqref{eq:DL-l1-regularized} is a weighted cost of the sparsity inducing \( \ell_1 \)-penalty \( \norm{\repvec_t}{1} \) and the error term \( \norm{x_t - \dictionary f_t}{2}^2 \). The regularization parameter \( \gamma \) influences the tradeoff between the level of sparsity and the error. For a given value of \( \gamma \), this tradeoff is specific to a given distribution or data set. However, the precise relation between the value of regularization parameter \( \gamma \) and the tradeoff is not straightforward. Thus, apriori one doesn't know which value of the regularization parameter \( \lambda \) to choose for a given distribution or data set. It is a tuning parameter that has to be learned from the data. 

In various image processing applications like compressed sensing \cite{donoho2006compressed}, \cite{candes2008introduction}, inpainting, denoising \cite{elad2006image} etc., good estimates of \( (\error_t)_t \) to be used in \eqref{eq:DL-fixed-error} are known apriori. In contrast, if we were to learn the dictionary for the same applications but by solving \eqref{eq:DL-l1-regularized} instead, the appropriate value of the regularization parameter to be used is not known. This poses additional computational challenges. Furthermore, with a single parameter the problem formulation \eqref{eq:DL-l1-regularized} doesn't provide the level of customisability that exists in \eqref{eq:DL-fixed-error}. Therefore, learning dictionaries by solving \eqref{eq:DL-fixed-error} is more appealing in situations where good estimates of \( (\error_t)_t \) to be used are known beforehand.

One of main challenges in solving \eqref{eq:DL-fixed-error} is that there is no provable algorithm that computes an optimal dictionary for \eqref{eq:DL-fixed-error}. The existing techniques \cite{mairal2010online}, \cite{aharon2006k}, that learn a dictionary aim to solve \eqref{eq:DL-l1-regularized} by employing the technique of alternating the minimization over the variables \( (\repvec_t)_t \) and \( \dictionary \).
\begin{equation}
\begin{cases}
\begin{aligned}
	\label{eq:Bach-alternating-problems}
		 & \minimize_{(\repvec_t)_t} && \frac{1}{T} \sum\limits_{t = 1}^T  \Big( \norm{ f_t }{1} \; + \; \gamma \norm{x_t - \dictionary f_t}{2}^2 \Big) \; , \text{ and} \\
         & \minimize_{\dictionary \; \in \; \unitdictionaryset} && \frac{1}{T} \sum\limits_{t = 1}^T \norm{x_t - \dictionary f_t}{2}^2  .
\end{aligned}
\end{cases}
\end{equation}
It should be noted that, individually both the problems in \eqref{eq:Bach-alternating-problems} are convex problems. In particular, the optimization over the dictionaries is a QP and admits very easy and effective co-ordinate descent like algorithms. In contrast, we notice that such an alternating minimization technique \eqref{eq:Bach-alternating-problems} is completely ineffective in order to solve \eqref{eq:DL-fixed-error}. In particular, once the variables \( (\repvec_t)_t \) are fixed, there is no evident way to update the dictionary variable such that the resulting dictionary minimizes the cost. We observe that the objective function in \eqref{eq:DL-l1-regularized} depends directly on the dictionary variable \( \dictionary \), whereas, it affects the objective function of \eqref{eq:DL-fixed-error} indirectly. This is the main reason which makes it difficult to update the dictionary in any meaningful way when solving \eqref{eq:DL-fixed-error}. The equivalent reformulation of the LIP provided in this article, is crucial in tackling this issue. When learning a dictionary, we replace the LIPs of optimizing over variables \( (\repvec_t)_t \) with their respective equivalent reformulations provided in this article. Interestingly, by doing so, the dictionary variable is pushed to the cost function in a meaningful manner. Making use of this reformulation, we have provided dictionary learning methods in \cite{sheriff2019dictionary} that update the dictionary variable in \eqref{eq:DL-fixed-error} by performing an ascent-descent like algorithm. To the best of our knowledge, these are the first set of results that effectively solve the dictionary learning problem for situations where solving the formulation \eqref{eq:DL-fixed-error} is natural.

The article unfolds as follows: In Section \ref{section:Problem-statement-and-main-result} we formally introduce the LIP in a more generalised form and provide the main results including the equivalent convex-concave min-max reformulation. In Section \ref{section:convex-geometry}, we expose the duality by studying the underlying convex geometry of the LIP and provide proofs for all the results. We employ standard notations, and specific ones are explained as they appear.

\section{Formal problem statement and main results}
\label{section:Problem-statement-and-main-result}
Let \( \dimension \) be a positive integer, \( \hilbert \) be an \( \dimension \)-dimensional Hilbert space equipped with an innerproduct \( \inprod{\cdot}{\cdot} \) and its associated norm \( \inpnorm{\cdot} \). For every \( z \in \hilbert \) and \( r > 0 \), let \( \nbhood{z}{r} \define \{ y \in \hilbert : \inpnorm{\samplevec - y} < \error \} \) and let \( \closednbhood{z}{r} \define \{ y \in \hilbert : \inpnorm{\samplevec - y} \leq \error \} \). Let \( \cost : \R{\dsize} \longrightarrow [0, +\infty[ \) be a cost function such that it satisfies the following assumption.
\begin{assumption}
\label{assumption:cost-function}
    The cost function \( \cost : \R{\dsize} \longrightarrow [0, +\infty[ \) has the following properties
\begin{itemize}
        \item \emph{Positive Homogeneity} : There exists a positive real number \( \horder \) such that for every \( \alpha \geq 0 \) and \( \repvec \in \R{\dsize} \), we have \( \cost (\alpha \repvec ) = \alpha^{\horder} \cost (\repvec) \).
        
        \item \emph{Pseudo-Convexity} : The unit sublevel set \( \costatomset \define \{ \repvec \in \R{\dsize} : \cost (\repvec) \leq 1 \} \) is convex.
        
        \item \emph{Inf-Compactness} : The unit sublevel set \( \costatomset \) is compact.
\end{itemize}
\end{assumption}

Let \( \samplevec \in \hilbert \), non-negative real numbers \( \error \) and \( \regulizer \), and the linear map \( \linmap : \hilbert \longrightarrow \R{\dsize} \) be given. We consider the following general formulation of the linear inverse problem
\begin{equation} 
	\label{eq:coding-problem}
	\begin{cases}
	  \begin{aligned}
		& \minimize_{( \mathsf{c},\; \repvec ) \; \in \; \R{} \times \R{\dsize}}  && \quad \mathsf{c}^{\horder} \\
		& \sbjto							  &&
		\begin{cases}
			\big( \cost (\repvec) \big)^{1/\horder} \leq \mathsf{c} \\
            \inpnorm{ \samplevec - \linmap (\repvec) } \leq \error + \regulizer \mathsf{c} .
		\end{cases}
	\end{aligned}
	\end{cases}
\end{equation}
When \( \regulizer = 0 \), we see that the feasible collection of \( \repvec \) is independent from the variable \( \mathsf{c} \). As a consequence we see that for every feasible \( \repvec \in \R{\dsize} \), the minimization over the variable \( \mathsf{c} \) is achieved for \( \mathsf{c} = \cost (\repvec) \). Thus the linear inverse problem \eqref{eq:coding-problem} reduces to the following more familiar formulation.
\begin{equation} 
	\label{eq:coding-problem-absolute-error}
	\begin{cases}
	 \begin{aligned}
		& \minimize_{\repvec \; \in \; \R{\dsize}} && \cost ( \repvec ) \\
		& \sbjto && \inpnorm{ \samplevec - \linmap (\repvec) } \leq \error .
	\end{aligned}
	\end{cases}
\end{equation}
The non-negative real number \( \regulizer \) acts as a regularization parameter. If \( \regulizer > 0 \), by considering \( \mathsf{c} > \big( \inpnorm{\samplevec} / \regulizer \big)^{\horder} \) and \( \repvec = 0 \), we see that the linear inverse problem \eqref{eq:coding-problem} is always feasible. On the contrary, if \( \regulizer = 0 \), it is immediately apparent that \eqref{eq:coding-problem} is feasible if and only if \( \closednbhood{\samplevec}{\error} \cap \image (\linmap) \neq \emptyset \).

It might be surprising at first to see the rather unusual formulation \eqref{eq:coding-problem} of the linear inverse problem. Our formulation makes way for the possibility of \( \regulizer \) to take positive values also. BY considering a positive value for the regularization parameter \( \regulizer \), we obtain several advantages:
\begin{itemize}[leftmargin = *]
    \item A positive value of \( \regulizer \) amounts to having the effect of regularization in the problem. Thus, one can harvest all the advantages that come from regularization like numerical stability in algorithms, well conditioning etc.
    
    \item Whenever \( \regulizer > 0 \), the LIP \eqref{eq:coding-problem} is always strictly feasible. This is a crucial feature in the initial stages of dictionary learning, in particular, when the data lies in a subspace of lower dimension \( m \), such that \( m, \dsize \ll \dimension \), where \( \dsize \) is the number of dictionary vectors.
    
    \item Having a positive value of \( \regulizer \) eliminates the pathological cases that arise in dictionary learning and provides guarantees for convergence of dictionary learning algorithms. Moreover, it leads to useful fixed point characterization of the optimal dictionary, which in turn lead to simple online dictionary update algorithms.
\end{itemize}

\subsection{Main results and discussion}
We observe that the mapping \( \repvec \longmapsto (\cost (\repvec))^{1/\horder} \) is an inf-compact, convex and positively homogeneous of order \( 1 \). Therefore, it is immediate that the constraints of the LIP \eqref{eq:coding-problem} are convex. Furthermore, the objective function is also convex whenever \( \horder \geq 1 \). Thus, it is apparent that the LIP \eqref{eq:coding-problem} is a convex problem when \( \horder \geq 1 \). When \( \horder < 1 \), we highlight that \( [0, +\infty[ \ni (\cdot) \longmapsto (\cdot)^{\horder} \) is an increasing function, and therefore, minimizing \( \mathsf{c}^{\horder} \) is equivalent to minimizing \( \mathsf{c} \). Thus, the LIP \eqref{eq:coding-problem} has an underlying convex problem (except that the objective function is a non-convex power).

We emphasise that whenever the optimization problem \eqref{eq:coding-problem} is feasible, the feasible set is closed and the cost function is continuous and \emph{coercive}.\footnote{Recall that a continuous function \( \cost \) defined over an unbounded set \( U \) is said to be coercive in the context of an optimization problem, if : \( \lim\limits_{\inpnorm{u} \to \infty } \cost (u) = +\infty \; (- \infty), \) in the context of minimization (maximization) of \( \cost \) and the limit is considered from within the set \( U \).
}
Therefore, from the Weierstrass theorem \cite[Theorem 4.16]{rudin1964principles} we conclude that whenever \eqref{eq:coding-problem} is feasible, it admits an optimal solution. To this end, let
\begin{equation}
\label{eq:definition-of-encoding-cost-and-codes}
\big( (\samplecost)^{\frac{1}{\horder}} , \codes \big) \define
\begin{cases}
	  \begin{aligned}
		& \argmin_{( \mathsf{c},\; \repvec ) \; \in \; \R{} \times \R{\dsize}}  && \quad \mathsf{c}^{\horder} \\
		& \sbjto							  &&
		\begin{cases}
			\big( \cost (\repvec) \big)^{1/\horder} \leq \mathsf{c} \\
            \inpnorm{ \samplevec - \linmap (\repvec) } \leq \error + \regulizer \mathsf{c} .
		\end{cases}
	\end{aligned}
	\end{cases}
\end{equation}
Note that \( \samplecost \) is also the optimal value achieved in \eqref{eq:coding-problem}. In view of this, we slightly abuse the definition \eqref{eq:definition-of-encoding-cost-and-codes}, and follow the convention that if \eqref{eq:coding-problem} is infeasible, then \( \samplecost \define +\infty \) and \( \codes \define \emptyset \).

The optimization problem \eqref{eq:coding-problem} depending on the parameters \( \samplevec, \linmap, \error , \regulizer \) could potentially have multiple solutions. However, in signal recovery from ill posed linear inverse problems, if there are sufficiently many linear measurements, and of correct type, the LIP \eqref{eq:coding-problem} admits a unique solution and the set \( \codes \) is then a singleton containing the true signal to be recovered. In other situations like sparse encoding, the LIP \eqref{eq:coding-problem} could have multiple solutions, and if it does, we see that \( \codes \) is a convex set.

\begin{definition}
\label{def:representability}
Let \( \linmap : \R{\dsize} \longrightarrow \hilbert \) be a linear map, and let \( \error , \regulizer \geq 0 \). A vector \( \samplevec \in \hilbert \) is said to be \( \Depsdelta \)-feasible if \( \samplecost < +\infty \).
\end{definition}
\begin{remark}
We see that \( \samplevec \in \hilbert \) is \( \Depsdelta \)-feasible if and only if at least one of the following holds:
\begin{itemize}
    \item \( \regulizer > 0 \),
    \item \( \closednbhood{\samplevec}{\error} \cap \image(\linmap) \neq \emptyset \).
\end{itemize}
\end{remark}

\begin{definition}
\label{definition:atomball}
For the linear map \( \linmap : \hilbert \longrightarrow \R{\dsize} \), non-negative real number \( \regulizer \) and the cost function \( \cost : \hilbert \longrightarrow [0, +\infty [ \) satisfying Assumption \ref{assumption:cost-function}, let us define
\begin{equation}
\label{eq:atomic-ball-definition}
\atomball \define \{ z \in \hilbert : \text{there exists \( \repvec \in \costatomset  \) satisfying \( \inpnorm{z - \linmap (\repvec)} \leq \regulizer \) } \} ,
\end{equation}
where \( \costatomset \) is the unit sub level set of the cost function \( \cost \).
\end{definition}
By denoting \( S' \coloneqq \{ \linmap (\repvec) : \repvec \in \costatomset \} \), it is clear that \( S' \) is the image of the compact and convex set \( \costatomset \) under the linear map \( \linmap \), and is therefore, compact and convex.\footnote{Considering, for instance, \( \cost (\cdot) = \norm{\cdot}{1} \) and the linear map \( \linmap \) given by the matrix \( \dictionary = \pmat{\dict{1}}{\dict{2}}{\dict{\dsize}} \in \R{\dimension \times \dsize} \), we see that \( \costatomset \) is the \( \ell_1 \)-closed ball in \( \R{\dsize} \) and \( S' = \convhull (\pm \dict{i})_{i = 1}^{\dsize} \).} Moreover, since the set \( \atomball \) is the image of the linear map \( S' \times \closednbhood{0}{\regulizer} \ni (z', y) \longmapsto z' + y \), \( \atomball \) is also compact and convex. Furthermore, for every \( \scaling \geq 0 \) the set \( \atomset{\scaling} \define \scaling \cdot \atomball \), obtained by linearly scaling \( \atomball \) by an amount of \( \scaling \), is also compact and convex.

The \emph{guage function} \( \norm{\cdot}{\linmap} : \hilbert \longrightarrow [0 , +\infty [ \) corresponding to the set \( \atomball \)is given by
\begin{equation}
\label{eq:guage-function-definition}
\norm{z}{\linmap} \define \min\big\{\scaling \geq 0 : z \in \atomset{\scaling} \big\} .
\end{equation}
When \( \regulizer > 0 \), we know that \( \atomball \) has non-empty interior, therefore, \( \norm{z}{\linmap} < +\infty \) for every \( z \in \hilbert \). Similarly when \( \regulizer = 0 \), \( \norm{z}{\linmap} < +\infty \) if and only if \( z \in \image (\linmap) \). Moreover, due to the set \( \atomset{\scaling} \) being compact for every \( \scaling \geq 0 \), the minimization in the definition of the guage function is always achieved. In other words, for every \( z \in \hilbert \) such that \( \norm{z}{\linmap} < +\infty \), we have \( z \in \atomset{\norm{z}{\linmap}} \).

The underlying convexity of the linear inverse problem \eqref{eq:coding-problem} gives rise to an interplay of the convex bodies \( \closednbhood{\samplevec}{\error} \) and \( \atomball \). As a result, we obtain the relation between the optimal cost \( \samplecost \), the guage function \( \norm{\cdot}{\linmap} \) and the set \( \closednbhood{\samplevec}{\error} \).
\begin{lemma}
\label{lemma:coding-problem-guage-function-equivalent}
Consider the LIP \eqref{eq:coding-problem} for the linear map \( \linmap \), cost function \( \cost \), non-negative real numbers \( \error, \regulizer \) and \( \samplevec \in \hilbert \). The optimal cost \( \samplecost \) of the LIP \eqref{eq:coding-problem} and the guage function \( \norm{\cdot}{\linmap} \) satisfy
\begin{equation}
\label{eq:coding-problem-guage-function-equivalent}
(\samplecost)^{1/\horder} = \min_{y \; \in \;  \closednbhood{\samplevec}{\error}} \; \norm{y}{\linmap} .
\end{equation}
\end{lemma}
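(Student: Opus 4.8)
The plan is to rewrite both sides of \eqref{eq:coding-problem-guage-function-equivalent} as ``the smallest scaling $\scaling\ge 0$ for which a certain pair of convex bodies meets'', and then observe that the two descriptions are literally the same.

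First I would remove the exponent $\horder$. Since $t\longmapsto t^{\horder}$ is strictly increasing on $[0,+\infty[$ and every feasible $(\mathsf c,\repvec)$ of \eqref{eq:coding-problem} has $\mathsf c\ge(\cost(\repvec))^{1/\horder}\ge 0$, the optimal value obeys $(\samplecost)^{1/\horder}=\inf\{\scaling\ge 0 : \exists\,\repvec\in\R{\dsize}\text{ with }\cost(\repvec)\le\scaling^{\horder}\text{ and }\inpnorm{\samplevec-\linmap(\repvec)}\le\error+\regulizer\scaling\}$, with the convention $\inf\emptyset=+\infty$; by the Weierstrass argument already given above, this infimum is attained whenever it is finite. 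For fixed $\scaling\ge 0$, positive homogeneity of order $\horder$ yields $\{\repvec : \cost(\repvec)\le\scaling^{\horder}\}=\scaling\costatomset$ — the degenerate case $\scaling=0$ needs inf-compactness of $\costatomset$, which forces $\cost(\repvec)=0\Rightarrow\repvec=0$ so that both sides equal $\{0\}$ — hence, as $\repvec$ runs over that sublevel set, $\linmap(\repvec)$ runs over $\scaling S'$ with $S'\coloneqq\linmap(\costatomset)$. Therefore ``$\mathsf c=\scaling$ is feasible for \eqref{eq:coding-problem}'' is equivalent to $\scaling S'\cap\closednbhood{\samplevec}{\error+\regulizer\scaling}\neq\emptyset$.

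The one place that needs care is the Minkowski-sum bookkeeping. Writing $\closednbhood{\samplevec}{\error+\regulizer\scaling}=\closednbhood{\samplevec}{\error}+\closednbhood{0}{\regulizer\scaling}$ and, straight from Definition \ref{definition:atomball}, $\atomball=S'+\closednbhood{0}{\regulizer}$ so that $\atomset{\scaling}=\scaling S'+\closednbhood{0}{\regulizer\scaling}$, and using $-\closednbhood{0}{\regulizer\scaling}=\closednbhood{0}{\regulizer\scaling}$, the intersection $\scaling S'\cap\closednbhood{\samplevec}{\error+\regulizer\scaling}$ is nonempty if and only if some $y\in\closednbhood{\samplevec}{\error}$ lies in $\scaling S'+\closednbhood{0}{\regulizer\scaling}=\atomset{\scaling}$, i.e.\ if and only if $\closednbhood{\samplevec}{\error}\cap\atomset{\scaling}\neq\emptyset$. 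Since $y\in\atomset{\scaling}\iff\norm{y}{\linmap}\le\scaling$ (from the discussion immediately after \eqref{eq:guage-function-definition}), this says: $\mathsf c=\scaling$ is feasible $\iff$ there is $y\in\closednbhood{\samplevec}{\error}$ with $\norm{y}{\linmap}\le\scaling$. The set of such $\scaling$ is an up-set in $[0,+\infty[$, so taking its infimum gives $(\samplecost)^{1/\horder}=\inf_{y\in\closednbhood{\samplevec}{\error}}\norm{y}{\linmap}$, which the attainment clause promotes to the $\min$ in \eqref{eq:coding-problem-guage-function-equivalent}; the infeasible case $\samplecost=+\infty$ corresponds exactly to $\norm{y}{\linmap}=+\infty$ for every $y\in\closednbhood{\samplevec}{\error}$, matching the feasibility characterisation following Definition \ref{def:representability}.

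If one prefers to sidestep the Minkowski sums, the same conclusion follows from the pointwise equivalence (in $\repvec$) ``$\exists\,y:\ \inpnorm{\samplevec-y}\le\error$ and $\inpnorm{y-\linmap(\repvec)}\le\regulizer\scaling$'' $\iff$ ``$\inpnorm{\samplevec-\linmap(\repvec)}\le\error+\regulizer\scaling$'', whose nontrivial direction is a one-line construction of $y$ on the segment joining $\samplevec$ and $\linmap(\repvec)$. Either way the lemma reduces to definition-chasing; the only genuine content is aligning the substitution $\mathsf c\leftrightarrow\scaling$ with the ``slack placed on the $\hilbert$-side'' structure of $\atomball$.
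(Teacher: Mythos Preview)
Your argument is correct and ultimately rests on the same geometric fact the paper uses, namely that $(\samplecost)^{1/\horder}$ equals the least $\scaling\ge 0$ with $\closednbhood{\samplevec}{\error}\cap\atomset{\scaling}\neq\emptyset$. The paper isolates that fact as a separate lemma (Lemma~\ref{lemma:coding-problem-reformulation-1}), proved by an explicit point on the segment between $\samplevec$ and $\linmap(\repvec)$ --- exactly the construction you sketch as an alternative at the end --- and then, in the proof of Lemma~\ref{lemma:coding-problem-guage-function-equivalent}, invokes the uniqueness-of-intersection Lemma~\ref{lemma:uniqueness-of-intersection} to produce a specific $y\opt\in\closednbhood{\samplevec}{\error}\cap\atomscaled$ witnessing the upper bound $\min_y\norm{y}{\linmap}\le(\samplecost)^{1/\horder}$, with Lemma~\ref{lemma:coding-problem-reformulation-1} providing the reverse inequality.

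Your route is more economical: the Minkowski-sum identity $\closednbhood{\samplevec}{\error+\regulizer\scaling}=\closednbhood{\samplevec}{\error}+\closednbhood{0}{\regulizer\scaling}$ dispatches both Lemma~\ref{lemma:coding-problem-reformulation-1} and the passage to $\norm{\cdot}{\linmap}$ in one stroke, and you avoid any appeal to the uniqueness of the intersection point (which is overkill here, since only existence is needed). The paper's approach buys a concrete $y\opt$ that it reuses later for the separation arguments; yours buys a self-contained, shorter proof of the present lemma.
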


\subsubsection{Duality}
The guage function \( \norm{\cdot}{\linmap} \) gives rise to its corresponding dual function \( \dualnorm{\cdot} : \hilbert \longrightarrow [0 , +\infty[ \) defined by:
\begin{equation}
\label{eq:dual-norm-definition}
\dualnorm{\separator} \define \sup\limits_{\norm{z}{\linmap} \leq 1} \inprod{\separator}{z} \; = \sup\limits_{z \in \atomball} \; \inprod{\separator}{z} . 
\end{equation}
Let \( \separator , y \in \hilbert \), we recall that \( y \in \atomset{\norm{y}{\linmap}} \), and consequently, we have the following Holder like inequality
\[
\inprod{\separator}{y} \; \leq \sup_{z \in \atomset{\norm{y}{\linmap}}} \inprod{\separator}{z}\; = \; \norm{y}{\linmap} \sup_{z \in \atomball} \inprod{\separator}{z} \; = \; \norm{y}{\linmap} \dualnorm{\separator} .
\]
This gives rise to strong duality between the guage function \( \norm{\cdot}{\linmap} \) and its associated dual function \( \dualnorm{\cdot} \) in the following way
\begin{equation}
\label{eq:strong-duality-of-guage-function}
\norm{y}{\linmap} =
\begin{cases}
\begin{aligned}
& \sup_{\separator} && \inprod{\separator}{y} \\
& \sbjto && \dualnorm{\separator} \leq 1 .
\end{aligned}
\end{cases} 
\end{equation}
By replacing the guage function \( \norm{\cdot}{\linmap} \) in \eqref{eq:coding-problem-guage-function-equivalent} with its equivalent sup formulation provided in \eqref{eq:strong-duality-of-guage-function}, we obtain the Convex Dual of the LIP \eqref{eq:coding-problem}. First, we will define the set \( \separatorset \) which is the collection of optimal dual variables.
\begin{definition}
\label{def:optimal-separator-set}
Let the linear map \( \linmap \) and \( \error, \regulizer \geq 0 \) and a cost function \( \cost \) satisfying Assumption \ref{assumption:cost-function} be given. Then for every \( \samplevec \in \hilbert \) that is \( \Depsdelta \)-feasible, let \( \separatorset \subset \hilbert \) denote the collection of points \( \separator \in \hilbert \setminus \closednbhood{0}{\error} \) that satisfy the following two conditions simultaneously:
\begin{itemize}
     \item \( \dualnorm{\separator} = \ 1 \), and
    \item \( \inprod{\separator}{\samplevec} - \error \inpnorm{\separator} \ = \ (\samplecost)^{1/\horder} \).
\end{itemize}
\end{definition}

\begin{theorem}
\label{theorem:coding-problem-equivalent-sup-problem}
Let the linear map \( \linmap : \R{\dsize} \longrightarrow \hilbert \), real numbers \( \error, \regulizer \geq 0 \) and \( \samplevec \in \hilbert \) be given. Consider the linear inverse problem \eqref{eq:coding-problem} and its convex dual problem:
\begin{equation}
\label{eq:coding-problem-equivalent-sup-problem}
\begin{cases}
\begin{aligned}
& \sup_{\separator} && \inprod{\separator}{\samplevec} - \error \inpnorm{\separator}  \\
&\sbjto && \dualnorm{\separator} \leq 1 .
\end{aligned}
\end{cases}
\end{equation}
\begin{enumerate}[label = \rm{(\roman*)}, leftmargin=*]
\item Strong Duality: The supremum value in \eqref{eq:coding-problem-equivalent-sup-problem} is finite if and only if \( \samplevec \) is \( \Depsdelta \)-feasible, and is equal to the optimal cost \( ( \samplecost )^{\frac{1}{\horder}} \).

\item Existence and description of an optimal solution to \eqref{eq:coding-problem-equivalent-sup-problem}.
\begin{enumerate}[label = \rm{(\alph*)}, leftmargin=*]
\item Irrespective of the value of \( \regulizer \), for any \( \error \geq 0 \) if \( \inpnorm{\samplevec} \leq \error \), then \( \separator\opt = 0 \) is an optimal solution.

\item Whenever \( \inpnorm{\samplevec} > \error \), the optimization problem \eqref{eq:coding-problem-equivalent-sup-problem} admits an optimal solution if and only if the set \( \separatorset \) defined in \ref{def:optimal-separator-set} is non-empty and \( \separator\opt \) is a solution if and only if \( \separator\opt \in \separatorset \). As a result, the supremum is indeed a maximum and it is achieved at \( \separator\opt \).

\item Whenever \( \inpnorm{\samplevec} > \error \) and the set \( \separatorset \) is empty, the optimization problem \eqref{eq:coding-problem-equivalent-sup-problem} does not admit an optimal solution even though the value of the supremum is finite.
\end{enumerate}
\end{enumerate}
\end{theorem}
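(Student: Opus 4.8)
The plan is to obtain the entire theorem from Lemma~\ref{lemma:coding-problem-guage-function-equivalent} together with the gauge/dual-gauge duality \eqref{eq:strong-duality-of-guage-function}, so that the only genuine analytic ingredient is a finite-dimensional minimax interchange. Substituting the sup-representation \eqref{eq:strong-duality-of-guage-function} of \( \norm{\cdot}{\linmap} \) into \eqref{eq:coding-problem-guage-function-equivalent} turns the optimal cost into
\[
(\samplecost)^{1/\horder} \;=\; \min_{y \,\in\, \closednbhood{\samplevec}{\error}} \ \sup_{\dualnorm{\separator}\,\le\,1} \ \inprod{\separator}{y}.
\]
Since \( (\separator,y)\mapsto\inprod{\separator}{y} \) is bilinear (hence concave in \( \separator \), convex in \( y \), and continuous), \( \closednbhood{\samplevec}{\error} \) is convex and compact, and \( \{\separator:\dualnorm{\separator}\le 1\} \) is convex, Sion's minimax theorem permits swapping the \( \min \) and the \( \sup \); and for a fixed \( \separator \), Cauchy--Schwarz gives \( \min_{y\in\closednbhood{\samplevec}{\error}}\inprod{\separator}{y}=\inprod{\separator}{\samplevec}-\error\inpnorm{\separator} \), the minimizer being \( \samplevec-\error\separator/\inpnorm{\separator} \) for \( \separator\neq 0 \). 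This is precisely the objective of \eqref{eq:coding-problem-equivalent-sup-problem} and proves part (i) when \( \samplevec \) is \( \Depsdelta \)-feasible. If \( \samplevec \) is not \( \Depsdelta \)-feasible then necessarily \( \regulizer=0 \) and \( \closednbhood{\samplevec}{\error}\cap\image(\linmap)=\emptyset \); I would strictly separate these two disjoint closed convex sets (one compact) to get a normal \( \separator \) that must be orthogonal to \( \image(\linmap) \) (otherwise the separating functional is unbounded on that subspace), so \( \dualnorm{\separator}=0\le 1 \) while \( \inprod{\separator}{\samplevec}-\error\inpnorm{\separator}>0 \), and scaling \( \separator \) drives the supremum to \( +\infty \); this completes part (i).

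For part (ii)(a): \( \inpnorm{\samplevec}\le\error \) puts \( 0\in\closednbhood{\samplevec}{\error} \), so Lemma~\ref{lemma:coding-problem-guage-function-equivalent} gives \( (\samplecost)^{1/\horder}=\norm{0}{\linmap}=0 \), while \( \inprod{\separator}{\samplevec}-\error\inpnorm{\separator}\le\inpnorm{\separator}(\inpnorm{\samplevec}-\error)\le 0 \) for all \( \separator \) with equality at \( \separator=0 \); hence \( \separator\opt=0 \) is optimal, irrespective of \( \regulizer \). For part (ii)(b), assume \( \inpnorm{\samplevec}>\error \); then \( 0\notin\closednbhood{\samplevec}{\error} \) and, the gauge being strictly positive off the origin (compactness of \( \atomball \)), \( (\samplecost)^{1/\horder}>0 \). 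By part (i) a feasible \( \separator\opt \) is optimal iff \( \inprod{\separator\opt}{\samplevec}-\error\inpnorm{\separator\opt}=(\samplecost)^{1/\horder} \), and I would upgrade \( \dualnorm{\separator\opt}\le 1 \) to \( \dualnorm{\separator\opt}=1 \) as follows: if \( \dualnorm{\separator\opt}=0 \), scaling \( \separator\opt \) makes the positive objective unbounded, contradicting finiteness; if \( 0<\dualnorm{\separator\opt}<1 \), then \( \separator\opt/\dualnorm{\separator\opt} \) remains feasible and scales the positive optimal value by \( 1/\dualnorm{\separator\opt}>1 \), also impossible. Thus \( \dualnorm{\separator\opt}=1 \) (so \( \separator\opt\neq 0 \)), i.e.\ \( \separator\opt\in\separatorset \); conversely any \( \separator\opt\in\separatorset \) is feasible with objective value \( (\samplecost)^{1/\horder} \), hence optimal, and so the supremum is attained precisely when \( \separatorset\neq\emptyset \).

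Part (ii)(c) then follows: for \( \Depsdelta \)-feasible \( \samplevec \) with \( \inpnorm{\samplevec}>\error \), part (i) gives a finite supremum \( (\samplecost)^{1/\horder} \), while by part (ii)(b) no maximizer exists once \( \separatorset=\emptyset \). I would also record why this case is non-vacuous only when \( \regulizer=0 \): if \( \regulizer>0 \) then \( \closednbhood{0}{\regulizer}\subseteq\atomball \), so \( \dualnorm{\separator}\ge\regulizer\inpnorm{\separator} \), the dual feasible set \( \{\dualnorm{\separator}\le1\} \) is compact, and the continuous objective attains its maximum --- forcing \( \separatorset\neq\emptyset \); whereas for \( \regulizer=0 \) this set contains arbitrarily large vectors in \( (\image\linmap)^{\perp} \), along which the objective can climb to its supremum without reaching it. The hard part, I expect, is exactly this attainment analysis --- deciding when the supremum in \eqref{eq:coding-problem-equivalent-sup-problem} is achieved --- since it requires a recession-direction argument over a possibly unbounded dual feasible set combined with the relative position of \( \closednbhood{\samplevec}{\error} \) and \( \image(\linmap) \); the value equality in part (i), by contrast, is a routine application of Sion's theorem once Lemma~\ref{lemma:coding-problem-guage-function-equivalent} is on hand.
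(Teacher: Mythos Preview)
Your proof is correct and, for part~(i), takes a cleaner route than the paper. The paper first records only the weak-duality inequality by writing \( \min_y \sup_\separator \ge \sup_\separator \min_y \), and then closes the gap through a four-case analysis: when \( \separatorset \neq \emptyset \) it exhibits an element of \( \separatorset \) attaining the bound, and when \( \separatorset = \emptyset \) (which, via Proposition~\ref{proposition:description-of-the-separator-set}, occurs only when \( \regulizer = 0 \) and \( \nbhood{\samplevec}{\error} \cap \image(\linmap) = \emptyset \)) it constructs an explicit maximizing sequence \( \separator(\alpha) = \separator' + \alpha(\samplevec - \pi_{\linmap}(\samplevec)) \) whose objective converges to \( (\samplecost)^{1/\horder} \) from below. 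Your single invocation of Sion's theorem dispenses with this case split for the value equality. For part~(ii) the arguments coincide: both you and the paper use the same scaling trick to force \( \dualnorm{\separator\opt} = 1 \) at any maximizer. Your final compactness remark---that \( \regulizer > 0 \) gives \( \dualnorm{\separator} \ge \regulizer\inpnorm{\separator} \), making the dual feasible set bounded and hence forcing attainment---is a neat shortcut; the paper reaches the same conclusion indirectly through the explicit formula \eqref{eq:unique-separator-solution} in Proposition~\ref{proposition:description-of-the-separator-set}. What the paper's longer route buys is constructiveness: that proposition identifies the dual optimizer in closed form from the primal solution, and the supporting geometric lemmas (\ref{lemma:uniqueness-of-intersection}--\ref{lemma:optimality-condition-of-hvar-and-separator}) are reused in the proof of the subsequent min--max reformulation.
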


\begin{remark}
We provide a complete description of the set \( \separatorset \) in Proposition \ref{proposition:description-of-the-separator-set}. It turns out that the optimal solution to the dual problem \eqref{eq:coding-problem-equivalent-sup-problem} can be entirely characterization in terms of the optimal solution \( \big( \samplecost , \codes \big) \) to the LIP \eqref{eq:coding-problem} itself. Therefore, if we only have access to a black box that produces an optimal solution to the LIP \eqref{eq:coding-problem}, a corresponding dual optimal solution can be easily computed from the solutions to the LIP itself, and we need not solve the dual problem again separately. This is very advantageous in dictionary learning, where the optimal value of dual variable is required to compute a better dictionary.
\end{remark}

\begin{remark}
We look ahead at Proposition \ref{proposition:description-of-the-separator-set} and see that the dual problem does not admit any optimal solution only when \( \regulizer = 0 \) and \( \nbhood{\samplevec}{\error} \cap \image (\linmap) = \emptyset \). Interestingly, in that case, we also observe that the corresponding primal problem is not strictly feasible.
\end{remark}

\subsubsection{Equivalent min-max formulation of the LIP}
Whenever \( \inpnorm{\samplevec} \leq \error \), we immediately see that the pair \( \R{}_+ \times \R{\dsize} \ni (\mathsf{c}_{\samplevec}, \repvec_{\samplevec} ) \coloneqq (0,0) \) is feasible for \eqref{eq:coding-problem}. Moreover, since \( \cost (\repvec) > 0 \) for every \( \repvec \neq 0 \) due to inf-compactness and positive homogeneity, we conclude that:
\begin{equation}
\label{eq:zero-approximate-sample}
\samplecost = 0 \quad \text{if and only if } \inpnorm{\samplevec} \leq \error .
\end{equation}
Therefore, the case: \( \inpnorm{\samplevec} \leq \error \) is uninteresting and inconsequential. In fact, in dictionary learning, since every such sample can be effectively represented by the zero vector, irrespective of the dictionary. The average cost of representation then depends only on the samples that satisfy \( \inpnorm{\samplevec} > \error \). Therefore, for the convex-concave min-max formulation of the LIP \eqref{eq:coding-problem}, we consider only the case when \( \inpnorm{\samplevec} > \error \).

\begin{theorem}
\label{} 
Let the linear map \( \linmap : \R{\dsize} \longrightarrow \hilbert \) , real numbers \( \error , \regulizer \geq 0 \),  \( q \in ]0 , 1[ \), \( r > 0 \) and \( \samplevec \in \hilbert \setminus \closednbhood{\samplevec}{\error} \) be given. Consider the linear inverse problem \eqref{eq:coding-problem} and the following inf-sup problem:
\begin{equation}
\label{eq:coding-problem-primal-dual}
\begin{cases}
\begin{aligned}
& \min_{\hvar \; \in \; \costatomset} \; \sup_{\separator } \ \ && r \Big( \inprod{\separator}{\samplevec} -  \error \inpnorm{\separator} \Big)^q - \Big( \regulizer \inpnorm{\separator} + \inprod{\separator}{\linmap (\hvar)} \Big) \\
& \sbjto &&  \inprod{\separator}{\samplevec} - \error \inpnorm{\separator} > 0 .
\end{aligned}
\end{cases}
\end{equation}
The following assertions hold with regards to the coding problem \eqref{eq:coding-problem} and its equivalent \eqref{eq:coding-problem-primal-dual}. 
\begin{enumerate}[leftmargin = *, label = \rm{(\roman*)}]
\item The optimal value of \eqref{eq:coding-problem-primal-dual} is equal to: \( \; s(r , q) \; ( \samplecost )^\frac{q}{\horder (1 - q)} \), and therefore finite if and only if \( \samplevec \) is \( \Depsdelta \)-feasible.\footnote{ \( s(r, q) \define \Big( (1 - q) (q^q r )^{\frac{1}{1 - q}} \Big) \; \) }

\item Existence and description of an optimal solution to \eqref{eq:coding-problem-primal-dual}.
\begin{enumerate}[leftmargin = *, label = \rm{(\alph*)}]
\item The minimization over variables \( \hvar \) in \eqref{eq:coding-problem-primal-dual} is achieved, and 
\begin{equation*}
\hvar\opt \in \;
\argmin\limits_{\hvar \; \in \; \costatomset}
\begin{cases}
\begin{aligned}
& \sup_{\separator } \ \ &&  r \Big( \inprod{\separator}{\samplevec} -  \error \inpnorm{\separator} \Big)^q - \Big( \regulizer \inpnorm{\separator} + \inprod{\separator}{\linmap (\hvar)} \Big) \\
& \sbjto &&  \inprod{\separator}{\samplevec} - \error \inpnorm{\separator} > 0 ,
\end{aligned}
\end{cases}    
\end{equation*}
if and only if \( \hvar\opt \in \frac{1}{(\samplecost)^{1/\horder}} \cdot \codes \).

\item In addition, the inf-sup problem \eqref{eq:coding-problem-primal-dual} admits a saddle point solution if and only if the set \( \separatorset \) is non-empty, then a pair \( (\hvar\opt , \separator\opt) \in \costatomset \times \hilbert \) is a saddle point solution to \eqref{eq:coding-problem-primal-dual} if and only if 
\[
\begin{aligned}
\hvar\opt & \in \frac{1}{(\samplecost)^{1/\horder}} \cdot \codes \ \text{and} \\
\separator\opt & \in ( r q )^{\frac{1}{1 - q}} \big( \samplecost \big)^{\frac{q}{\horder (1 - q)}} \cdot \separatorset .
\end{aligned}
\]
\end{enumerate}
 \end{enumerate}
\end{theorem}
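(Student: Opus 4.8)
The plan is to carry out the supremum over $\separator$ in closed form, turning \eqref{eq:coding-problem-primal-dual} into a plain minimization over $\hvar\in\costatomset$, and then to evaluate that minimization via the dual norm $\dualnorm{\cdot}$ and Theorem~\ref{theorem:coding-problem-equivalent-sup-problem}. Fix $\hvar\in\costatomset$ and abbreviate $a(\separator):=\inprod{\separator}{\samplevec}-\error\inpnorm{\separator}$ and $b(\separator):=\regulizer\inpnorm{\separator}+\inprod{\separator}{\linmap(\hvar)}$, both positively homogeneous of degree one; the feasible region is the open convex cone $C:=\{\separator:a(\separator)>0\}$, nonempty because $\inpnorm{\samplevec}>\error$. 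Along a ray $\separator=t\mu$ the objective is $r\,a(\mu)^q t^q-b(\mu)t$, and the elementary identity $\sup_{t>0}(At^q-Bt)=(1-q)(q^qr)^{1/(1-q)}A^{1/(1-q)}B^{-q/(1-q)}$ for $A,B>0$ gives $\sup_{t>0}\!\big(r\,a(\mu)^qt^q-b(\mu)t\big)=s(r,q)\big(a(\mu)/b(\mu)\big)^{q/(1-q)}$ if $b(\mu)>0$ and $+\infty$ if $b(\mu)\le 0$. Since $C$ is connected and $b$ continuous, either $b>0$ on all of $C$, in which case (as $t\mapsto t^{q/(1-q)}$ is increasing) the inner supremum equals $s(r,q)\,\psi(\hvar)^{q/(1-q)}$ with $\psi(\hvar):=\sup_{\separator\in C}a(\separator)/b(\separator)$, or $b$ is somewhere nonpositive on $C$ and the inner supremum is $+\infty$; so \eqref{eq:coding-problem-primal-dual} reduces to minimizing $s(r,q)\psi(\cdot)^{q/(1-q)}$ over $\costatomset$. (Alternatively, one may first interchange $\inf_\hvar$ and $\sup_\separator$ by Sion's theorem, noting $g$ is affine in $\hvar$ and concave in $\separator$ on $C$, and obtain the value directly.)

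\textbf{Step 2 (value and one inclusion in (ii)(a)).} Write $m:=(\samplecost)^{1/\horder}$, which is $>0$ since $\inpnorm{\samplevec}>\error$. For $\separator\in C$ with $b(\separator)>0$ one has $b(\separator)\le\dualnorm{\separator}$, because $\inprod{\separator}{\linmap(\hvar)}\le\sup_{\repvec\in\costatomset}\inprod{\separator}{\linmap(\repvec)}=\dualnorm{\separator}-\regulizer\inpnorm{\separator}$ as $\hvar\in\costatomset$; hence $a(\separator)/b(\separator)\ge a(\separator)/\dualnorm{\separator}$, and after normalizing $\dualnorm{\separator}=1$ and invoking Theorem~\ref{theorem:coding-problem-equivalent-sup-problem}(i) we get $\psi(\hvar)\ge\sup_{\dualnorm{\separator}\le1}a(\separator)=m$ whenever the inner supremum is finite; thus the value of \eqref{eq:coding-problem-primal-dual} is $\ge s(r,q)m^{q/(1-q)}$. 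Equality holds at $\hvar^\ast:=\repvec^\ast/m$ for any $\repvec^\ast\in\codes$: there $\cost(\hvar^\ast)\le 1$, so $\hvar^\ast\in\costatomset$, and Cauchy--Schwarz with the primal feasibility bound $\inpnorm{\samplevec-\linmap(\repvec^\ast)}\le\error+\regulizer m$ yields $a(\separator)\le m\,b(\separator)$ for all $\separator\in C$, i.e.\ $\psi(\hvar^\ast)\le m$. This proves (i): the value equals $s(r,q)m^{q/(1-q)}=s(r,q)(\samplecost)^{q/(\horder(1-q))}$, finite iff $\samplecost<\infty$, i.e.\ iff $\samplevec$ is $\Depsdelta$-feasible (if $\samplevec$ is infeasible, Theorem~\ref{theorem:coding-problem-equivalent-sup-problem}(i) forces $\psi\equiv+\infty$); and it shows $\frac{1}{(\samplecost)^{1/\horder}}\cdot\codes$ consists of minimizers of $\hvar$, one direction of (ii)(a).

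\textbf{Step 3 (the reverse inclusion in (ii)(a)).} Suppose $\psi(\hvar)=m$; then the inner supremum is finite, so $b>0$ on $C$ and $a(\separator)\le m\,b(\separator)$ on $C$. Set $\repvec:=m\hvar$; then $\cost(\repvec)^{1/\horder}=m\,\cost(\hvar)^{1/\horder}\le m$, so the first constraint of \eqref{eq:coding-problem} holds, and it suffices to show $\inpnorm{\samplevec-\linmap(\repvec)}\le\error+\regulizer m$, for then $(m,\repvec)$ is feasible with objective $m^\horder=\samplecost$, hence optimal, so $\repvec\in\codes$ and $\hvar\in\frac1m\codes$. If instead $\inpnorm{\samplevec-\linmap(\repvec)}>\error+\regulizer m$, the compact convex balls $\closednbhood{\samplevec}{\error}$ and $\closednbhood{\linmap(\repvec)}{\regulizer m}$ are disjoint, hence strictly separated by some $\separator$, i.e.\ $a(\separator)=\min_{\closednbhood{\samplevec}{\error}}\inprod{\separator}{\cdot}>\max_{\closednbhood{\linmap(\repvec)}{\regulizer m}}\inprod{\separator}{\cdot}=m\,b(\separator)$; one then argues that such a $\separator$ can be chosen with $b(\separator)>0$ — otherwise the inequalities $b>0$ on $C$ and $a(\separator)>m\,b(\separator)$ force a separator inside $C$ with nonpositive $b$, whence $\psi(\hvar)=+\infty$, a contradiction — and this $\separator$ lies in $C$ with $a(\separator)/b(\separator)>m=\psi(\hvar)$, absurd. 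I expect this separation step to be the main obstacle of the proof: the ``obvious'' separator $\samplevec-\linmap(\repvec)$ need not a priori belong to $C$, and one must exploit the already-established positivity of $b$ on $C$ to place the separating functional correctly.

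\textbf{Step 4 (saddle points, (ii)(b)).} A pair $(\hvar^\ast,\separator^\ast)\in\costatomset\times C$ is a saddle point of \eqref{eq:coding-problem-primal-dual} iff $\hvar^\ast$ minimizes the inner supremum, $\separator^\ast$ attains it for $\hvar=\hvar^\ast$, and $\hvar^\ast$ minimizes $\hvar\mapsto r\,a(\separator^\ast)^q-\regulizer\inpnorm{\separator^\ast}-\inprod{\separator^\ast}{\linmap(\hvar)}$ over $\costatomset$. By (ii)(a) the first means $\hvar^\ast\in\frac1m\codes$; the third, since only $-\inprod{\separator^\ast}{\linmap(\hvar)}$ varies with $\hvar$, means $\inprod{\separator^\ast}{\linmap(\hvar^\ast)}=\sup_{\repvec\in\costatomset}\inprod{\separator^\ast}{\linmap(\repvec)}$, i.e.\ $b(\separator^\ast)=\dualnorm{\separator^\ast}$; and attainment of the supremum forces, via Step 1, $a(\separator^\ast)/b(\separator^\ast)=\psi(\hvar^\ast)=m$. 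Hence $\bar\separator:=\separator^\ast/b(\separator^\ast)$ satisfies $\dualnorm{\bar\separator}=1$ and $\inprod{\bar\separator}{\samplevec}-\error\inpnorm{\bar\separator}=m$, so $\bar\separator\in\separatorset$ (in particular $\separatorset\neq\emptyset$), and computing the ray-maximizer in Step 1 along $\bar\separator$ gives $\separator^\ast=(rq)^{1/(1-q)}m^{q/(1-q)}\bar\separator$, which is the asserted description since $m^{q/(1-q)}=(\samplecost)^{q/(\horder(1-q))}$. Conversely, if $\separatorset\neq\emptyset$, take $\separator^0\in\separatorset$ and $\hvar^\ast\in\frac1m\codes$; the description of $\separatorset$ in Proposition~\ref{proposition:description-of-the-separator-set} (which couples $\separatorset$ to $\codes$) provides the alignment $\inprod{\separator^0}{\linmap(\hvar^\ast)}=\sup_{\repvec\in\costatomset}\inprod{\separator^0}{\linmap(\repvec)}$, so $b(\separator^0)=\dualnorm{\separator^0}=1$ and $a(\separator^0)=m$, and one checks directly, using Step 1 for the attainment, that $\big(\hvar^\ast,(rq)^{1/(1-q)}m^{q/(1-q)}\separator^0\big)$ meets the three saddle-point conditions.
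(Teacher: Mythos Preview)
Your approach is essentially the same as the paper's: both first evaluate the inner supremum in closed form (the paper via Lemmas~\ref{lemma:sup-problem-1}--\ref{lemma:sup-problem-2}, you via the ray analysis in Step~1) and then minimize the result over $\hvar$. Your $\psi(\hvar)$ coincides with the paper's $\dualvar_{\hvar}$, defined geometrically in Lemma~\ref{lemma:sup-problem-1} as $\min\{\theta\ge 0:\closednbhood{\samplevec}{\error}\cap\closednbhood{\linmap(\theta\hvar)}{\theta\regulizer}\neq\emptyset\}$. Steps~1, 2 and 4 are correct and parallel the paper closely (Step~4's ``alignment'' claim is exactly Lemma~\ref{lemma:optimality-condition-of-hvar-and-separator}).

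Step~3 has a genuine gap, which you yourself flag. Separating the two balls $\closednbhood{\samplevec}{\error}$ and $\closednbhood{\linmap(\repvec)}{\regulizer m}$ yields some $\separator$ with $a(\separator)>m\,b(\separator)$, but nothing forces $a(\separator)>0$; if $b(\separator)\le 0$ and $a(\separator)\le 0$ simultaneously, the inequality $a(\separator)>m\,b(\separator)$ is perfectly consistent, and your parenthetical argument (``force a separator inside $C$ with nonpositive $b$'') is circular. The fix---which is precisely what the paper does in proving Lemma~\ref{lemma:sup-problem-1}---is to separate $\closednbhood{\samplevec}{\error}$ not from the single ball but from the convex set $S'(m)\coloneqq\bigcup_{\theta\in[0,m]}\closednbhood{\linmap(\theta\hvar)}{\regulizer\theta}=\convhull\big(\{0\}\cup\closednbhood{\linmap(m\hvar)}{\regulizer m}\big)$. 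Since $0\in S'(m)$, any strict separator $\separator$ automatically has $a(\separator)>0$ (so $\separator\in C$) as well as $a(\separator)>m\,b(\separator)$, contradicting $\psi(\hvar)=m$. Non-disjointness then gives $\inpnorm{\samplevec-\linmap(\theta\hvar)}\le\error+\regulizer\theta$ for some $\theta\in[0,m]$; but $(\theta,\theta\hvar)$ is then feasible for \eqref{eq:coding-problem} with objective $\theta^{\horder}$, so optimality forces $\theta\ge m$, hence $\theta=m$ and $m\hvar\in\codes$.
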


\begin{corollary}
\label{corollary:encoding-cost-min-max-problem}
By considering \( r = r (\horder) \define (1 + \horder) \horder^{\frac{ - \horder}{1 + \horder}} \) and \( q = q (\horder) \define \frac{\horder}{1 + \horder} \), we get
\[
\samplecost = 
\begin{cases}
\begin{aligned}
& \min_{\hvar \; \in \; \costatomset} \; \sup_{\separator } \ \ && r (\horder) \Big( \inprod{\separator}{\samplevec} -  \error \inpnorm{\separator} \Big)^{q (\horder)} - \Big( \regulizer \inpnorm{\separator} + \inprod{\separator}{\linmap (\hvar)} \Big) \\
& \sbjto &&  \inprod{\separator}{\samplevec} - \error \inpnorm{\separator} > 0 .
\end{aligned}
\end{cases}
\]
In particular, if the cost function \( \cost (\cdot) \) is positively homogeneous of order \( 1 \) (like any norm), we have
\[
\samplecost = 
\begin{cases}
\begin{aligned}
& \min_{\hvar \; \in \; \costatomset} \; \sup_{\separator } \ \ && 2 \sqrt{ \inprod{\separator}{\samplevec} -  \error \inpnorm{\separator} } - \Big( \regulizer \inpnorm{\separator} + \inprod{\separator}{\linmap (\hvar)} \Big) \\
& \sbjto &&  \inprod{\separator}{\samplevec} - \error \inpnorm{\separator} > 0 .
\end{aligned}
\end{cases}
\]
\end{corollary}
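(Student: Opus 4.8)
The plan is to derive this Corollary directly from the preceding Theorem on the min-max reformulation \eqref{eq:coding-problem-primal-dual}, by choosing the two free parameters $r$ and $q$ so that the multiplicative constant $s(r,q)$ and the exponent $\tfrac{q}{\horder(1-q)}$ appearing in part~(i) of that Theorem simultaneously reduce to $1$. Once that is arranged, part~(i) reads exactly ``optimal value of \eqref{eq:coding-problem-primal-dual} $=\samplecost$'', and the ``in particular'' assertion is then one further substitution.

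First I would check that the proposed choices are admissible. Since $\horder>0$ by the positive-homogeneity clause of Assumption~\ref{assumption:cost-function}, the numbers $q(\horder)=\tfrac{\horder}{1+\horder}$ and $r(\horder)=(1+\horder)\,\horder^{-\horder/(1+\horder)}$ satisfy $q(\horder)\in]0,1[$ and $r(\horder)>0$, so the hypotheses $q\in]0,1[$, $r>0$ of the Theorem hold; the remaining hypothesis $\inpnorm{\samplevec}>\error$ (equivalently $\samplevec\notin\closednbhood{0}{\error}$) is precisely the standing assumption under which the Corollary is stated, and it guarantees that the feasible set $\{\separator:\inprod{\separator}{\samplevec}-\error\inpnorm{\separator}>0\}$ is non-empty, so the reformulation is non-vacuous and the Theorem applies verbatim.

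The substance of the argument is then two short algebraic identities. For the exponent: $1-q(\horder)=\tfrac{1}{1+\horder}$, hence $\horder\big(1-q(\horder)\big)=\tfrac{\horder}{1+\horder}=q(\horder)$, so $\tfrac{q(\horder)}{\horder(1-q(\horder))}=1$ and the value $s(r,q)\,(\samplecost)^{q/(\horder(1-q))}$ furnished by part~(i) collapses to $s\big(r(\horder),q(\horder)\big)\cdot\samplecost$. For the constant: substituting into the definition $s(r,q)=(1-q)\,(q^{q}r)^{1/(1-q)}$ one computes $q(\horder)^{q(\horder)}\,r(\horder)=(1+\horder)^{1/(1+\horder)}$, hence $\big(q(\horder)^{q(\horder)}r(\horder)\big)^{1/(1-q(\horder))}=1+\horder$ and therefore $s\big(r(\horder),q(\horder)\big)=\tfrac{1}{1+\horder}\,(1+\horder)=1$. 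Combining these, the optimal value of \eqref{eq:coding-problem-primal-dual} under the stated parameter choice is exactly $\samplecost$, which is the first displayed identity of the Corollary.

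The ``in particular'' statement is simply the case $\horder=1$: then $q(1)=\tfrac12$ and $r(1)=2\cdot 1^{-1/2}=2$, so the term $r(\horder)\big(\inprod{\separator}{\samplevec}-\error\inpnorm{\separator}\big)^{q(\horder)}$ becomes $2\sqrt{\inprod{\separator}{\samplevec}-\error\inpnorm{\separator}}$, and the second display follows by inserting these values into the first. I do not expect a genuine obstacle: all of the content lies in the choice of $r$ and $q$ and in the two exponent computations above, and the only thing one must be careful not to drop is that the hypothesis $\inpnorm{\samplevec}>\error$ is what licenses the application of the Theorem.
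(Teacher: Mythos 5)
Your proposal is correct and is exactly the intended argument: the corollary is an immediate specialization of the min-max theorem, and your two computations (that $\horder\bigl(1-q(\horder)\bigr)=q(\horder)$ forces the exponent $\tfrac{q}{\horder(1-q)}$ to equal $1$, and that $q(\horder)^{q(\horder)}r(\horder)=(1+\horder)^{1/(1+\horder)}$ forces $s\bigl(r(\horder),q(\horder)\bigr)=1$) are precisely what makes the optimal value collapse to $\samplecost$, with the $\horder=1$ case giving $r=2$, $q=\tfrac12$ and hence the $2\sqrt{\cdot}$ form. This matches the paper's (implicit) route, so there is nothing further to add.
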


\begin{remark}
The objective function in the min-max problem \eqref{eq:coding-problem-primal-dual} is convex w.r.t. to minimization variable \( \hvar \) and concave w.r.t. the maximization variable \( \separator \).
\end{remark}

\begin{remark}
In several scenarios like the problem of non-negative matrix factorization \cite{lee2001algorithms}, \cite{lee1999learning}, \cite{hoyer2004non}, one has to solve \eqref{eq:coding-problem} with the additional constraint that \( \repvec \in Q \), where \( Q \subset \R{\dsize} \) is a convex cone. From similar analysis provided in this article, it can be easily verified that the resulting equivalent min-max formulation analogous to \eqref{eq:coding-problem-primal-dual} is
\[
\begin{cases}
\begin{aligned}
& \min_{\hvar \; \in \; \costatomset \cap Q} \; \sup_{\separator } \ \ && r \Big( \inprod{\separator}{\samplevec} -  \error \inpnorm{\separator} \Big)^q - \Big( \regulizer \inpnorm{\separator} + \inprod{\separator}{\linmap (\hvar)} \Big) \\
& \sbjto &&  \inprod{\separator}{\samplevec} - \error \inpnorm{\separator} > 0 .
\end{aligned}
\end{cases}
\]
It should be noted that the definition of the set \( \atomball \) then changes to
\[
\atomball \define \{ z \in \hilbert : \text{ there exists } \repvec \in \costatomset \cap Q \text{ satisfying } \inpnorm{\samplevec - \linmap (\repvec)} \leq \regulizer \} ,
\]
and the quantities \( \norm{\cdot}{\linmap} \), \( \dualnorm{\cdot} \) and \( \separatorset \) are accordingly defined w.r.t. the appropriate definition of the set \( \atomball \).
\end{remark}

\begin{remark}
When the order of homogeneity of the cost \( \cost \) is \( 1 \), the min-max problem \eqref{eq:coding-problem-primal-dual} can be alternatively written by eliminating the constraint \( \inprod{\separator}{\samplevec} - \error \inpnorm{\separator} > 0 \) in the following way
\[
\min_{\hvar \; \in \; \costatomset} \inf_{\dualvar > 0} \; \sup_{\separator } \quad \dualvar + \frac{1}{\dualvar} \Big( \inprod{\separator}{\samplevec} -  \error \inpnorm{\separator} \Big) - \Big( \regulizer \inpnorm{\separator} + \inprod{\separator}{\linmap (\hvar)} \Big) .
\]
Even though this formulation might suffice for solving a variety of practical LIPs, the exact form of \eqref{eq:coding-problem-primal-dual} is necessary and inevitable for solving the dictionary learning problem.
\end{remark}

\begin{remark}
When the error constraint \( \inpnorm{\samplevec - \linmap (\repvec)} \leq \error + \regulizer \mathsf{C} \) is measured using a generic norm \( \inpnorm{\cdot} \), the min-max problem is written using the corresponding dual norm \( \inpnorm{\cdot}' \) as
\[
\begin{cases}
\begin{aligned}
& \min_{\hvar \; \in \; \costatomset} \; \sup_{\separator } \ \ && r \Big( \inprod{\separator}{\samplevec} - \error \inpnorm{\separator}' \Big)^q - \Big( \regulizer \inpnorm{\separator}' + \inprod{\separator}{\linmap (\hvar)} \Big) \\
& \sbjto &&  \inprod{\separator}{\samplevec} - \error \inpnorm{\separator} > 0 .
\end{aligned}
\end{cases}
\]
Furthermore,if \( \regulizer = 0 \) and we replace the error constraint with \( \linmap (\repvec) \in B_{\samplevec} \), where \( B_{\samplevec} \subset \hilbert \setminus \{ 0 \} \) is some compact convex subset. The min-max problem is written as
\[
\begin{cases}
\begin{aligned}
& \min_{\hvar \; \in \; \costatomset} \; \sup_{\separator } \ \ && r \Big( \min_{y \in B_{\samplevec}} \inprod{\separator}{y} \Big)^q - \inprod{\separator}{\linmap (\hvar)} \\
& \sbjto &&  0 < \min_{y \in B_{\samplevec}} \inprod{\separator}{y} .
\end{aligned}
\end{cases}
\]
\end{remark}

\subsection{Signal recovery from linear measurements}
In the case of signal recovery from its linear measurements, it is assumed that the true signal is a linear combination of only a few elements of some atomic set \( \mathcal{A} \). The signal is recovered by solving the linear inverse problem \eqref{eq:coding-problem}, by considering the cost function \( \cost \) such that \( \costatomset = \convhull (\mathcal{A}) \) and \( \horder = 1 \). Conditions on minimum number and type of measurements of measurements for fruitful recovery, have been extensively discussed in \cite{chandrasekaran2012convex}. If those conditions are satisfied, then the set \( \codes \) is a singleton and contains the signal to be recovered.

\subsubsection{Basis Pursuit Denoising}
An example of the linear inverse problem which is of practical relevance is the classical \emph{Basis Pursuit Denoising} problem \cite{elad2006image}, \cite{candes2008introduction}, that arises in various scenarios of compressed sensing and image processing.
\begin{equation}
\label{eq:basis-pursuit-denoising}
\begin{cases}
\begin{aligned}
		& \minimize_{\repvec \; \in \; \R{\dsize}} && \norm{\repvec}{1} \\
		& \sbjto && \norm{ \samplevec - \linmap (\repvec) }{2} \leq \error .
\end{aligned}
\end{cases}
\end{equation}
The corresponding dual problem is
\begin{equation}
\begin{cases}
\begin{aligned}
    & \sup_{\separator}  && \separator\transp \samplevec - \error \norm{\separator}{2} \\
		& \sbjto							  &&
		 \norm{\linmap\transp \separator}{\infty} \; \leq 1 \; ,
\end{aligned}
\end{cases}
\end{equation}
and the min-max reformulation is
\begin{equation}
\label{eq:basis-pursuit-denoising-min-max-problem}
\begin{cases}
\begin{aligned}
& \min_{ \norm{\hvar}{1} \leq 1} \; \sup_{\separator } \ \ && 2 \sqrt{\separator\transp \samplevec - \error \norm{\separator}{2}} - \Big( \regulizer \norm{\separator}{2} + \separator\transp \linmap (\hvar) \Big) \\
& \sbjto &&  \separator\transp \samplevec - \error \norm{\separator}{2} > 0 .
\end{aligned}
\end{cases}
\end{equation}

\subsection{The dictionary learning problem}
The setup is that every vector \( \samplevec \in \hilbert \) is \emph{encoded} as a vector \( \repvec ( \samplevec) \) in \( \R{\dsize} \) via the \emph{encoder} map \( \repvec : \hilbert \longrightarrow \R{\dsize} \). We shall refer to \( \repvec (\samplevec) \) as the \emph{representation} of \( \samplevec \) under the encoder \( \repvec \). The \emph{reconstruction} of the encoded samples from the representation \( \repvec (\samplevec) \) is done by taking the linear combination \( \sum\limits_{i = 1}^{\dsize} \repvec_i (\samplevec) \dict{i} \) with some standard collection of vectors \( \dictionary \coloneqq \pmat{\dict{1}}{\dict{2}}{\dict{\dsize}} \) referred to as the \emph{dictionary}. Since the reconstruction has to be a good representative of the true vector \( \samplevec \), we constraint the error \( \inpnorm{\samplevec - \dictionary \repvec(\samplevec)} \) to be small.

Given a dictionary \( \dictionary \), we encode every vector \( \samplevec \) by solving the LIP \eqref{eq:coding-problem} with an appropriate cost function \( \cost \). This cost function determines the desirable characteristics in the representation. In other words, the optimal encoder map \( \repvec_{\dictionary} : \hilbert \longrightarrow \R{\dsize} \) corresponding to the dictionary \( \dictionary \) is such that \( \repvec_{\dictionary} (\samplevec) \in \encodermap{\dictionary}{\samplevec}{\error}{\regulizer} \) for every \( \samplevec \). Our objective is to find dictionaries such that the corresponding encoder map \( \repvec_{\dictionary} \) has desirable features like sparsity, robustness with respect to loss of coefficients etc., in the representation. We refer to the task of finding such a dictionary as the \emph{dictionary learning problem}. 

Formally, let \( \PP \) be a distribution on \( \hilbert \) and \( \rv \) be a \( \PP \) distributed random variable. Let \( \cost : \R{\dsize} \longrightarrow \R{}_+ \) be a given cost function that satisfies Assumption \ref{assumption:cost-function}, \( \error : \hilbert \longrightarrow [0, +\infty[ \) be a given error threshold function and \( \regulizer \) be a non-negative real number. Given a dictionary \( \dictionary \), since the random variable \( \rv \) is encoded as \( \repvec_{\dictionary} (\rv) \in \encodermap{\dictionary}{\rv}{\error (\rv)}{\regulizer} \), we consider the cost incurred to encode to be \( \encodedcost{\dictionary}{\rv}{\error (\rv)}{\regulizer} \). Our objective is to find a dictionary that facilitates optimal encoding of the data, which are the samples drawn from \( \PP \). Therefore, we consider the following dictionary learning problem :
\begin{equation}
\label{eq:dictionary-learning-problem}
        \minimize_{\dictionary \; \in \; \unitdictionaryset} \quad \EE_{\PP} \big[ \encodedcost{\dictionary}{\rv}{\error (\rv)}{\regulizer} \big] ,
\end{equation}
where \( \unitdictionaryset \subset \R{\dimension \times \dsize} \) is some known compact convex subset. 

For a large integer \( \horizon \), let \( \data \coloneqq (\sample{t})_{t = 1}^{\horizon} \) be a collection of samples drawn from the distribution \( \PP \). Let us consider the dictionary learning problem for the sampled data, given by:
\begin{equation}
\label{eq:dictionary-learning-samples}
\minimize_{\dictionary \; \in \; \unitdictionaryset} \ \frac{1}{\horizon} \summ{t = 1}{\horizon} \encodedcost{\dictionary}{\sample{t}}{\error (\sample{t})}{\regulizer} \ .
\end{equation}
For the special case of \( \regulizer = 0 \), the dictionary learning problem \eqref{eq:dictionary-learning-samples} can be restated using the definition of the encoding cost \( \encodedcost{\dictionary}{\sample{t}}{\error (\sample{t})}{\regulizer} \)  in the more conventional form as:
\begin{equation}
\label{eq:DL-conventional}
\begin{cases}
\begin{aligned}
        &\minimize_{\dictionary , \; (\repvec_t)_t}  && \frac{1}{\horizon} \summ{t = 1}{\horizon} \cost (\repvec_t) \\
	    & \sbjto			&&
	    \begin{cases}
	       \dictionary \in \unitdictionaryset , \\
	       \repvec_t \in \R{\dsize} , \\
	       \inpnorm{ \sample{t} - \dictionary \repvec_t } \leq \error (\sample{t}) \text{ for all } t = 1,2,\ldots, \horizon .
	    \end{cases}
	\end{aligned}
\end{cases}
\end{equation}

The dependence of the encoding cost \( \encodedcost{\dictionary}{\samplevec}{\error}{\regulizer} \) on the dictionary variable \( \dictionary \) is not immediately evident. Therefore, it is replaced in the dictionary learning problem \eqref{eq:dictionary-learning-samples} with the min-max problem provided in Corollary \ref{corollary:encoding-cost-min-max-problem} to obtain
\[
\min_{\dictionary \in \unitdictionaryset} \ \min_{ (\hvar_t)_t \subset \costatomset }
\begin{cases}
\begin{aligned}
& \sup_{(\separator_t)_t} && \frac{1}{T} \summ{t = 1}{T} \Big( r(\horder) \big( \inprod{\separator_t}{\samplevec_t} -  \error \inpnorm{\separator_t} \big)^{\frac{\horder}{1 + \horder}} - \big( \regulizer \inpnorm{\separator_t} + \inprod{\separator_t}{\dictionary \hvar_t} \big) \Big) \\
& \text{ s.t.} && \inprod{\separator_t}{\samplevec_t} - \error \inpnorm{\separator_t} > 0 ,
\end{aligned}
\end{cases}
\]
where \( r(\horder) = (1 + \horder) \horder^{\frac{ - \horder}{1 + \horder}} \). The dictionary learning problem \eqref{eq:dictionary-learning-samples} is then solved by alternating the optimization over \( \dictionary \) and \( (\hvar_t)_t \) keeping the other one fixed. It is to be noted that each of these optimization problem is a min-max problem in variables \( (\dictionary , (\separator_t)_t ) \) and \( ( (\hvar_t)_t , (\separator_t)_t) \) respectively. In particular, for a given sequence \( (\hvar_t)_t \subset \costatomset \), the dictionary is updated by solving the following min-max problem
\begin{equation*}
\begin{cases}
\begin{aligned}
& \min_{\dictionary \; \in \; \unitdictionaryset} \ \sup_{(\separator_t)_t} && \frac{1}{T} \summ{t = 1}{T} \Big( r(\horder) \big( \inprod{\separator_t}{\samplevec_t} -  \error \inpnorm{\separator_t} \big)^{\frac{\horder}{1 + \horder}} - \big( \regulizer \inpnorm{\separator_t} + \inprod{\separator_t}{\dictionary \hvar_t} \big) \Big) \\
& \sbjto && \inprod{\separator_t}{\samplevec_t} - \error \inpnorm{\separator_t} > 0 .
\end{aligned}
\end{cases}
\end{equation*}
It is shown in \cite{sheriff2019dictionary} that if \( \regulizer > 0 \) the above min-max problem always admits a saddle point solution. Moreover, we observe the objective function of the min-max problem is linear w.r.t. the dictionary variable \( \dictionary \) and concave w.r.t. \( \separator \). Thus, the saddle point solution can be computed efficiently by simple ascent-descent type iterations. The novelty and the convergence attributes of learning a dictionary to solve \eqref{eq:dictionary-learning-samples} can be attributed to the reformulation \eqref{eq:coding-problem-primal-dual} of the LIP provided in this article.

\section{Theory, discussion and proofs.}
\label{section:convex-geometry}
In this section we investigate/study the linear inverse problem \eqref{eq:coding-problem} in detail and its dual with special emphasis on the underlying convex geometry. Based on the principle of separation of convex bodies by linear functionals, we obtain the dual problem \eqref{eq:coding-problem-equivalent-sup-problem} of the linear inverse problem which then leads to the convex-concave min-max problem \eqref{eq:coding-problem-equivalent-sup-problem}. We later provide the proof of Theorem \ref{} establishing that the optimal value of this min-max problem is proportional to the optimal cost \( \samplecost \) of LIP \eqref{eq:coding-problem}.

\begin{lemma}
For a linear map \( \linmap : \R{\dsize} \longrightarrow \hilbert \) and \( \regulizer, \scaling \geq 0 \), let \( \atomset{\scaling} \define \scaling \cdot \atomball \), we have
\begin{equation}
\label{eq:atomic-set-encoding-cost-equivalence}
\atomset{\scaling} = \{ z \in \hilbert : \encodedcost{\linmap}{z}{0}{\regulizer} \leq \scaling^{\horder} \}.
\end{equation}
\end{lemma}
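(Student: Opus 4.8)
The statement just says that scaling $\atomball$ by $\scaling$ produces exactly the $\scaling^{\horder}$-sublevel set of the encoding cost with zero error, so the natural plan is to unwind both sides from their definitions and let the positive homogeneity of $\cost$ (of order $\horder$), together with convexity of $\costatomset$ and the presence of the origin, do the work. I would first record two elementary facts. (i) $0 \in \atomball$: taking $\repvec = 0 \in \costatomset$ gives $\inpnorm{0 - \linmap(0)} = 0 \leq \regulizer$. Since $\atomball$ is convex, this yields the monotonicity $\scaling' \cdot \atomball \subseteq \scaling \cdot \atomball$ for $0 \leq \scaling' \leq \scaling$, because for $\scaling > 0$ and $u \in \atomball$ one has $\tfrac{\scaling'}{\scaling} u = \tfrac{\scaling'}{\scaling} u + \big(1 - \tfrac{\scaling'}{\scaling}\big)\cdot 0 \in \atomball$. (ii) When $\error = 0$, the LIP \eqref{eq:coding-problem} with sample $z$ minimizes $\mathsf{c}^{\horder}$ subject to $(\cost(\repvec))^{1/\horder} \leq \mathsf{c}$ and $\inpnorm{z - \linmap(\repvec)} \leq \regulizer\mathsf{c}$; whenever feasible it attains its optimum (Weierstrass, as already noted in the excerpt), so I may fix an optimal pair $(\mathsf{c}\opt, \repvec\opt)$ with $\encodedcost{\linmap}{z}{0}{\regulizer} = (\mathsf{c}\opt)^{\horder}$.

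\textbf{The two inclusions.} For ``$\supseteq$'', assume $\encodedcost{\linmap}{z}{0}{\regulizer} \leq \scaling^{\horder}$; then the LIP is feasible and $\mathsf{c}\opt \leq \scaling$. If $\mathsf{c}\opt = 0$, the constraints force $\cost(\repvec\opt) = 0$, hence $\repvec\opt = 0$ (inf-compactness plus positive homogeneity give $\cost(\cdot) > 0$ off the origin), and then $\inpnorm{z} \leq 0$, so $z = 0 \in \atomset{\scaling}$. If $\mathsf{c}\opt > 0$, put $\repvec \define \repvec\opt/\mathsf{c}\opt$; homogeneity gives $\cost(\repvec) = \cost(\repvec\opt)/(\mathsf{c}\opt)^{\horder} \leq 1$, i.e. $\repvec \in \costatomset$, while $\inpnorm{z/\mathsf{c}\opt - \linmap(\repvec)} = \inpnorm{z - \linmap(\repvec\opt)}/\mathsf{c}\opt \leq \regulizer$, so $z/\mathsf{c}\opt \in \atomball$, i.e. $z \in \atomset{\mathsf{c}\opt}$, and the monotonicity of fact (i) with $\mathsf{c}\opt \leq \scaling$ gives $z \in \atomset{\scaling}$. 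For ``$\subseteq$'', assume $z \in \scaling\cdot\atomball$. If $\scaling = 0$ then $z = 0$ and $(\mathsf{c},\repvec) = (0,0)$ is feasible for the error-zero LIP, so $\encodedcost{\linmap}{z}{0}{\regulizer} = 0$. If $\scaling > 0$ then $z/\scaling \in \atomball$, so some $\repvec \in \costatomset$ satisfies $\inpnorm{z/\scaling - \linmap(\repvec)} \leq \regulizer$; setting $\repvec' \define \scaling\repvec$ we get $(\cost(\repvec'))^{1/\horder} = \scaling(\cost(\repvec))^{1/\horder} \leq \scaling$ and $\inpnorm{z - \linmap(\repvec')} \leq \regulizer\scaling$, so $(\scaling,\repvec')$ is feasible for the LIP and $\encodedcost{\linmap}{z}{0}{\regulizer} \leq \scaling^{\horder}$.

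\textbf{Main obstacle.} There is no genuine obstacle here; the proof is essentially bookkeeping around the rescaling $\repvec \leftrightarrow \scaling\repvec$. The only place demanding a little care is the degenerate case $\mathsf{c}\opt = 0$ (equivalently $\scaling = 0$, equivalently $z = 0$), where one must explicitly invoke inf-compactness together with positive homogeneity to rule out nonzero $\repvec$ with $\cost(\repvec) = 0$; a secondary minor point is checking that $\encodedcost{\linmap}{z}{0}{\regulizer} \leq \scaling^{\horder} < +\infty$ forces feasibility (so that the optimal pair used above actually exists), which is immediate from the convention $\encodedcost{\linmap}{z}{0}{\regulizer} = +\infty$ on infeasible instances.
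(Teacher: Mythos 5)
Your proof is correct and follows essentially the same route as the paper's: both inclusions are obtained by rescaling $\repvec \leftrightarrow \scaling\repvec$ between feasible points of the $\error=0$ LIP and membership in $\scaling\cdot\atomball$, using attainment of the optimum for the ``$\supseteq$'' direction. The only differences are bookkeeping the paper leaves implicit — your explicit treatment of the degenerate cases $\mathsf{c}\opt=0$, $\scaling=0$ and the monotonicity step via $0\in\atomball$, where the paper instead scales directly by $\scaling$ using $\cost(\repvec_z)\leq\scaling^{\horder}$.
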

\begin{proof}
On the one hand, it follows from the definition \eqref{eq:atomic-ball-definition} of \( \atomball \) that for every \( z \in \atomset{\scaling} \), there exists \( \repvec_z \in \R{\dsize} \) such that \( \cost (\repvec_z) \leq \scaling^{\horder} \) and \( \inpnorm{z - \linmap (\repvec_z)} \leq \regulizer \scaling \). Thus, considering \( \error = 0 \) in \eqref{eq:coding-problem}, we see that the pair \( (\scaling, \repvec_z) \) is a feasible point and hence we have \( \encodedcost{\linmap}{z}{0}{\regulizer} \leq \scaling^{\horder} \). 

On the other hand, if \( z \in \hilbert \) is such that \( \encodedcost{\linmap}{z}{0}{\regulizer} \leq \scaling^{\horder} \), we know that there exists a pair \( (\mathsf{c}_z, \repvec_z) \in \R{}_+ \times \R{\dsize} \) such that \( \mathsf{c}_z^{\horder} = \encodedcost{\linmap}{z}{0}{\regulizer} \leq \scaling^{\horder} \) and satisfies the following:
\begin{itemize}
    \item \( \cost (\repvec_z) \leq \mathsf{c}_z^{\horder} \leq \scaling^{\horder} \), and
    \item \( \inpnorm{z - \linmap (\repvec_z)} \leq 0 + \regulizer \mathsf{c}_z \leq \regulizer \scaling \).
\end{itemize}
It then immediately follows that for every \( z \in \hilbert \) satisfying \( \encodedcost{\linmap}{z}{0}{\regulizer} \leq \scaling^{\horder} \), we have the membership \( z \in \atomset{\scaling} \). Collecting the two assertions we arrive at \eqref{eq:atomic-set-encoding-cost-equivalence}.
\end{proof}

It is easy to see that \( \atomball = \bigcup\limits_{h \in \costatomset} \closednbhood{\linmap (h)}{\regulizer} \). Thus, if \( \regulizer > 0 \), the set \( \atomball \) has non-empty interior, and is therefore an absorbing set of \( \hilbert \).\footnote{A set \( S \) is an absorbing set of a vector space \( H \) if forevery \( z \in H \) there exists \( \scaling_z \geq 0 \) such that \( z \in \scaling_z \cdot S \).} When \( \regulizer = 0 \), we immediately see that \( \atomset{\scaling} \subset \image (\linmap) \) for every \( \scaling \geq 0 \). Furthermore, for every \( z \in \image (\linmap) \) we know that there exists \( \repvec \in \R{\dsize} \) such that \( z = \linmap (\repvec) \), and therefore \( z \in S_0{(\linmap , (\cost (\repvec))^{1/\horder}}) \). Consequently, we obtain:
\[
\lim_{\scaling \to +\infty} \atomset{\scaling} = 
\begin{cases}
\begin{aligned}
& \hilbert               && \regulizer > 0 \; , \\
& \image (\linmap)   && \regulizer = 0 \; .    
\end{aligned}
\end{cases}
\]

As the scaling factor \( \scaling \) increases, the set \( \atomset{\scaling} \) scales linearly by absorbing every \( \Depsdelta \)-feasible point in the set \( \hilbert \). In particular, the set \( \closednbhood{\samplevec}{\error} \) eventually intersects with \( \atomset{\scaling} \) for some \( \scaling \geq 0 \). We shall see that the optimal cost \( \samplecost \) is proportional to the minimum amount by which the set \( \atomset{1} \) needs to be scaled so that it intersects with \( \closednbhood{\samplevec}{\error} \).

\begin{lemma}
\label{lemma:coding-problem-reformulation-1}
For a given linear map \( \linmap : \R{\dsize} \longrightarrow \hilbert \) and non-negative real numbers \( \error ,\regulizer \), let \( \samplevec \in \hilbert \) be  \( \Depsdelta \)-feasible in the sense of Definition \ref{def:representability}, then we have
\begin{equation}
\label{eq:lemma:coding-problem-reformulation-1-1}
\samplecost =
\begin{cases}
\begin{aligned}
& \min_{\scaling \geq 0} && \scaling^{\horder} \\
& \sbjto && \atomset{\scaling} \; \cap \; \closednbhood{\samplevec}{\error} \neq \emptyset .
\end{aligned}
\end{cases}
\end{equation}
\end{lemma}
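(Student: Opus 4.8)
The plan is to prove the identity by exhibiting an explicit two-way correspondence between feasible pairs $(\mathsf{c},\repvec)$ of the LIP \eqref{eq:coding-problem} and feasible scalings $\scaling$ of the geometric problem \eqref{eq:lemma:coding-problem-reformulation-1-1}, which delivers both inequalities at once. The main tool is the representation $\atomball = \bigcup_{\hvar \in \costatomset}\closednbhood{\linmap(\hvar)}{\regulizer}$ recorded just above the lemma; scaling it by $\scaling$ gives $\atomset{\scaling} = \bigcup_{\hvar\in\costatomset}\closednbhood{\linmap(\scaling\hvar)}{\scaling\regulizer}$, so that $\hvar \in \costatomset$ certifies $\closednbhood{\linmap(\scaling\hvar)}{\scaling\regulizer}\subseteq\atomset{\scaling}$.

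For the inequality ``$\min$ in \eqref{eq:lemma:coding-problem-reformulation-1-1} $\le \samplecost$'' I would start from an optimal solution $(\mathsf{c}\opt,\repvec\opt)$ of \eqref{eq:coding-problem} --- which exists since $\samplevec$ is $\Depsdelta$-feasible, by the Weierstrass argument noted earlier --- and produce a point of $\atomset{\mathsf{c}\opt}\cap\closednbhood{\samplevec}{\error}$, so that $\scaling=\mathsf{c}\opt$ is feasible for \eqref{eq:lemma:coding-problem-reformulation-1-1} with value $(\mathsf{c}\opt)^{\horder}=\samplecost$. In the degenerate branch $\mathsf{c}\opt = 0$ the constraint forces $\cost(\repvec\opt) = 0$, hence $\repvec\opt = 0$ by inf-compactness and positive homogeneity of $\cost$, hence $\inpnorm{\samplevec}\le\error$ by the error constraint, and $0 \in \atomset{0}\cap\closednbhood{\samplevec}{\error}$. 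In the branch $\mathsf{c}\opt > 0$, positive homogeneity together with $\cost(\repvec\opt)\le(\mathsf{c}\opt)^{\horder}$ gives $\repvec\opt/\mathsf{c}\opt\in\costatomset$, so $\closednbhood{\linmap(\repvec\opt)}{\regulizer\mathsf{c}\opt}\subseteq\atomset{\mathsf{c}\opt}$; I then slide $\linmap(\repvec\opt)$ towards $\samplevec$ by exactly the excess $\max\{0,\,\inpnorm{\samplevec-\linmap(\repvec\opt)}-\error\}$, which the error constraint of \eqref{eq:coding-problem} bounds above by $\regulizer\mathsf{c}\opt$, landing at a point still in $\atomset{\mathsf{c}\opt}$ but now within $\closednbhood{\samplevec}{\error}$.

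For the reverse inequality $\samplecost\le\scaling^{\horder}$ (for every feasible $\scaling$) I would run the correspondence backwards: given $y\in\atomset{\scaling}\cap\closednbhood{\samplevec}{\error}$ with $\scaling>0$, write $y\in\closednbhood{\linmap(\scaling\hvar)}{\scaling\regulizer}$ for some $\hvar\in\costatomset$ and set $\repvec\define\scaling\hvar$; then $(\cost(\repvec))^{1/\horder}=\scaling(\cost(\hvar))^{1/\horder}\le\scaling$ and, by the triangle inequality, $\inpnorm{\samplevec-\linmap(\repvec)}\le\inpnorm{\samplevec-y}+\inpnorm{y-\linmap(\scaling\hvar)}\le\error+\regulizer\scaling$, so $(\scaling,\repvec)$ is feasible for \eqref{eq:coding-problem}. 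The branch $\scaling=0$ forces $\atomset{0}=\{0\}$ and $y=0$, hence $\inpnorm{\samplevec}\le\error$ and $(0,0)$ is feasible. Taking the infimum over all feasible $\scaling$ and combining with the first inequality gives the claimed identity, and shows the minimum is attained (at $\scaling=\mathsf{c}\opt$).

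I do not anticipate a genuine obstacle: the substance is the one sliding construction, and the only spots deserving care are the degenerate case $\inpnorm{\samplevec}\le\error$ (equivalently $\mathsf{c}\opt=\scaling=0$), dispatched via inf-compactness and positive homogeneity of $\cost$, and the verification that the slack $\regulizer\mathsf{c}\opt$ in the error constraint of \eqref{eq:coding-problem} is exactly what is needed to move $\linmap(\repvec\opt)$ into $\closednbhood{\samplevec}{\error}$. A slightly slicker alternative would substitute the identity \eqref{eq:atomic-set-encoding-cost-equivalence} into the constraint of \eqref{eq:lemma:coding-problem-reformulation-1-1}, reducing the claim to $\samplecost=\min_{z\in\closednbhood{\samplevec}{\error}}\encodedcost{\linmap}{z}{0}{\regulizer}$, which one then settles by the same triangle-inequality-and-sliding reasoning.
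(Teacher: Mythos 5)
Your proposal is correct and follows essentially the same route as the paper: both inequalities are obtained from the correspondence between feasible pairs of \eqref{eq:coding-problem} and feasible scalings in \eqref{eq:lemma:coding-problem-reformulation-1-1}, with the triangle inequality in one direction and an explicit intermediate point on the segment between \( \linmap(\repvec) \) and \( \samplevec \) in the other. The only cosmetic difference is your choice of witness point (sliding by the exact excess \( \max\{0,\inpnorm{\samplevec-\linmap(\repvec)}-\error\} \) rather than the paper's convex combination \( \tfrac{\error\linmap(\repvec)+\regulizer\scaling\samplevec}{\error+\regulizer\scaling} \)), which is equally valid.
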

\begin{proof}
Let \( \scaling \geq 0 \) be such that \( \atomset{\scaling} \cap \closednbhood{\samplevec}{\error} \neq \emptyset \). Then on the one hand, there exists \( y_{\scaling} \in\closednbhood{\samplevec}{\error} \) and \( \repvec_{\scaling} \in \R{\dsize} \) such that \( \inpnorm{y_{\scaling} - \linmap (\repvec_{\scaling})} \leq \regulizer \scaling \) and \( \cost (\repvec_{\scaling}) \leq \scaling^{\horder} \). From this we get
\[
\inpnorm{\samplevec - \linmap (\repvec_{\scaling})} \leq \inpnorm{\samplevec - y_{\scaling}} + \inpnorm{y_{\scaling} - \linmap (\repvec_{\scaling})} \leq \error + \regulizer \scaling \; ,
\]
which implies that the pair \( (\scaling , \repvec_{\scaling}) \) is feasible for \eqref{eq:coding-problem}, and as a result we get \( \samplecost \leq \scaling^{\horder} \). By minimizing over \( \scaling \geq 0 \) such that \( \atomset{\scaling} \cap \closednbhood{\samplevec}{\error} \neq \emptyset \) we get our first inequality:
\[
\samplecost \leq 
\begin{cases}
\begin{aligned}
& \min_{\scaling \geq 0} && \scaling^{\horder} \\
& \sbjto && \atomset{\scaling} \; \cap \; \closednbhood{\samplevec}{\error} \neq \emptyset .
\end{aligned}
\end{cases}
\]

On the other hand, for every pair \( (\scaling, \repvec) \) that is feasible for \eqref{eq:coding-problem}, by defining \( y \coloneqq \samplevec \indicator_{\{ 0 \}} (\error) + \frac{\error \linmap (\repvec) + \regulizer \scaling \samplevec}{\error + \regulizer \scaling} \; \indicator_{ ]0, +\infty[ } (\error) \) we shall establish that \( y \in \closednbhood{\samplevec}{\error} \cap \atomset{\scaling} \).

Whenever \( \error = 0 \) we have \( y = \samplevec \), and from the feasibility of the pair \( (\scaling, \repvec) \) it easily follows that \( \inpnorm{y - \linmap (\repvec)} = \inpnorm{\samplevec - \linmap (\repvec)} \leq \error + \regulizer \scaling = \regulizer \scaling \) and \( \cost (\repvec) \leq \scaling^{\horder} \). Thus, \( \encodedcost{y}{\linmap}{0}{\regulizer} \leq \scaling^{\horder} \) and from \eqref{eq:atomic-set-encoding-cost-equivalence} the membership \( y \in \atomset{\scaling} \) holds. Similarly, if \( \error = 0 \), we see that \( y = \samplevec = \closednbhood{\samplevec}{0} \). Therefore, \( y \in \closednbhood{\samplevec}{\error} \cap \atomset{r} \), and the intersection is non-empty.

When \( \error > 0 \), we see that
\[
\begin{aligned}
    & \inpnorm{ \samplevec - y } = \inpnorm{ \samplevec - \frac{\error \linmap (\repvec) + \regulizer \scaling \samplevec}{\error + \regulizer \scaling} } = \frac{\error}{\error + \regulizer \scaling} \inpnorm{ \samplevec - \linmap (\repvec)} \leq \error \; , \text{ and } \\
    & \inpnorm{y - \linmap (\repvec)} = \inpnorm{ \frac{\error \linmap (\repvec) + \regulizer \scaling \samplevec}{\error + \regulizer \scaling} - \linmap (\repvec) } = \frac{ \regulizer \scaling }{\error + \regulizer \scaling} \inpnorm{ \samplevec - \linmap (\repvec)} \leq \regulizer \scaling \; .
\end{aligned}
\]
These inequalities, along with the fact that \( \cost (\repvec) \leq \scaling^{\horder} \) imply that \( y \in \closednbhood{\samplevec}{\error} \cap \atomset{\scaling} \) and in particular that \( \closednbhood{\samplevec}{\error} \cap \atomset{\scaling} \neq \emptyset \). As a consequence, the inequality:
\[
\scaling \geq 
\begin{cases}
\begin{aligned}
& \min_{\scaling \geq 0} && \scaling^{\horder} \\
& \sbjto && \atomset{\scaling} \; \cap \; \closednbhood{\samplevec}{\error} \neq \emptyset ,
\end{aligned}
\end{cases}
\]
holds for every pair \( (\scaling, \repvec) \) that is feasible for \eqref{eq:coding-problem}. By minimizing over all the pairs \( (\scaling, \repvec) \) that are feasible for \eqref{eq:coding-problem}, we obtain the converse inequality
\[
\samplecost \geq 
\begin{cases}
\begin{aligned}
& \min_{\scaling \geq 0} && \scaling^{\horder} \\
& \sbjto && \atomset{\scaling} \; \cap \; \closednbhood{\samplevec}{\error} \neq \emptyset .
\end{aligned}
\end{cases}
\]
This completes the proof.
\end{proof}

\begin{remark}
An interesting viewpoint to  take from this in dictionary learning problem is that every dictionary \( \dictionary \) gives rise to an atomic set \( S_{\regulizer} (\dictionary, 1) \), and the encoding cost \( \encodedcost{\dictionary}{\samplevec}{\error}{\regulizer} \) of a vector \( \samplevec \) is proportional to the \emph{approximate} Minkowski gauge function with respect to this set.\footnote{We say ``approximate'' in the sense that we do not scale the atomic set \( S_{\regulizer}(\dictionary , 1) \) so as to absorb \( \samplevec \). Instead, we scale it only until it intersects with a given neighborhood of \( \samplevec \). } The corresponding dictionary learning problem can be viewed as the task of finding a `good' atomic set arising from a dictionary.
\end{remark}

\subsubsection{Intersection of the convex bodies.}
Lemma \ref{lemma:coding-problem-reformulation-1} gives us the first required connection between the LIP \eqref{eq:coding-problem} and the underlying convex geometry. It asserts that the value \( (\samplecost)^{1/\horder} \) is the minimum amount by which the set \( \atomball \) has to be scaled linearly so that it intersects with \( \closednbhood{\samplevec}{\error} \). To this end, let us define:
\[
\atomscaled \coloneqq (\samplecost)^{1/\horder} \cdot \atomball \; .
\]
We observe that both the sets \( \atomscaled \) and \( \closednbhood{\samplevec}{\error} \) are compact and convex, and due to this, we have the following intersection lemma.
\begin{lemma}
\label{lemma:uniqueness-of-intersection}
Let \( \samplevec \in \hilbert \) be \( \Depsdelta \)-feasible in the sense of Definition \ref{def:representability}. Let \( (\mathsf{c}_{\samplevec}, \repvec_{\samplevec}) \in [0, +\infty[ \times \R{\dsize} \) be an optimal solution to the coding problem \eqref{eq:coding-problem}, i.e., \( \mathsf{c}_{\samplevec} = ( \samplecost )^{1/\horder} \) and \( \repvec_{\samplevec} \in \codes \). Then the sets \( \closednbhood{\samplevec}{\error} \) and \( \atomscaled \) intersect at a unique point \( y\opt \) given by:
\begin{equation}
\label{eq:intersection-lemma}
    \closednbhood{\samplevec}{\error} \cap \atomscaled \; \eqqcolon \; y\opt \; = \;  \samplevec \indicator_{ \{ 0 \} } (\error) \; + \; \frac{\error \linmap (\repvec_{\samplevec}) + \regulizer \mathsf{c}_{\samplevec} \samplevec}{\error + \regulizer \mathsf{c}_{\samplevec}} \; \indicator_{ ]0, +\infty[ } (\error) .
\end{equation}
As a consequence, we assert that
\begin{itemize}
    \item whenever, \( \inpnorm{\samplevec} > \error \), every \( \repvec_{\samplevec} \in \codes \) satisfies \( \inpnorm{\samplevec - \linmap (\repvec_{\samplevec})} = \error + \regulizer \mathsf{c}_{\samplevec} \) ;

    \item for every \( \repvec_{\samplevec} , g_{\samplevec} \in \codes \), we have \( \linmap (\repvec_{\samplevec}) \; = \; \linmap (g_{\samplevec}) \) .
\end{itemize}
\end{lemma}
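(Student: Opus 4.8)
The plan is to exploit the strict convexity of the Euclidean ball $\closednbhood{\samplevec}{\error}$ together with the fact, established in Lemma~\ref{lemma:coding-problem-reformulation-1}, that the scaled set $\atomscaled = (\samplecost)^{1/\horder}\cdot\atomball$ is the \emph{smallest} dilate of $\atomball$ meeting $\closednbhood{\samplevec}{\error}$. First I would dispose of the degenerate case $\error = 0$: then $\closednbhood{\samplevec}{0} = \{\samplevec\}$ is a single point, so the intersection is automatically $\{\samplevec\}$, and the displayed formula reduces to $y\opt = \samplevec$, which matches the indicator expression. So assume $\error > 0$ and recall from Lemma~\ref{lemma:coding-problem-reformulation-1} (and its proof) that for any optimal pair $(\mathsf{c}_{\samplevec},\repvec_{\samplevec})$ the point $y\opt \define \frac{\error\linmap(\repvec_{\samplevec}) + \regulizer\mathsf{c}_{\samplevec}\samplevec}{\error + \regulizer\mathsf{c}_{\samplevec}}$ already lies in $\closednbhood{\samplevec}{\error}\cap\atomscaled$; indeed that proof exhibits exactly this $y$ as a witness. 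Thus the intersection is nonempty, and it remains to show it is a singleton and that any point in it equals this $y\opt$.

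For uniqueness I would argue by contradiction: suppose $y_1 \neq y_2$ both lie in $\closednbhood{\samplevec}{\error}\cap\atomscaled$. Since $\atomscaled$ is convex, the midpoint $\bar y \define \tfrac12(y_1+y_2)$ lies in $\atomscaled$. By strict convexity of the norm $\inpnorm{\cdot}$ (the Euclidean norm on the Hilbert space $\hilbert$), we get $\inpnorm{\samplevec - \bar y} < \max\{\inpnorm{\samplevec - y_1}, \inpnorm{\samplevec - y_2}\} \leq \error$; more carefully, one uses the parallelogram law to get $\inpnorm{\samplevec-\bar y}^2 = \tfrac12\inpnorm{\samplevec-y_1}^2 + \tfrac12\inpnorm{\samplevec-y_2}^2 - \tfrac14\inpnorm{y_1-y_2}^2 \leq \error^2 - \tfrac14\inpnorm{y_1-y_2}^2 < \error^2$, so $\bar y \in \nbhood{\samplevec}{\error}$ lies in the \emph{open} ball. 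Hence a sufficiently small closed ball around $\bar y$ is contained in $\closednbhood{\samplevec}{\error}$ and meets $\atomscaled$; since the map $\scaling\mapsto\atomset{\scaling}$ absorbs neighbourhoods linearly (as noted in the text, when $\regulizer>0$ the set $\atomball$ has nonempty interior, and in general the gauge scaling is continuous), one can produce a value $\scaling' < (\samplecost)^{1/\horder}$ with $\atomset{\scaling'}\cap\closednbhood{\samplevec}{\error}\neq\emptyset$, contradicting the minimality in Lemma~\ref{lemma:coding-problem-reformulation-1}. This forces $y_1 = y_2$, and combined with the witness from the previous paragraph it pins down $y\opt$ exactly as in \eqref{eq:intersection-lemma}.

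The two consequences then follow quickly. For the first bullet, suppose $\inpnorm{\samplevec} > \error$ (so in particular $\samplecost > 0$ by \eqref{eq:zero-approximate-sample}, hence $\mathsf{c}_{\samplevec} > 0$) and take any $\repvec_{\samplevec}\in\codes$. Feasibility of $(\mathsf{c}_{\samplevec},\repvec_{\samplevec})$ gives $\inpnorm{\samplevec - \linmap(\repvec_{\samplevec})}\leq\error+\regulizer\mathsf{c}_{\samplevec}$. If this inequality were strict, then $\linmap(\repvec_{\samplevec})$ itself would lie in the interior of $\closednbhood{\samplevec}{\error+\regulizer\mathsf{c}_{\samplevec}}$, and one checks that the witness point $y\opt$ would then lie in the \emph{open} ball $\nbhood{\samplevec}{\error}$ — the same argument as above — again contradicting minimality; alternatively, strict inequality lets one scale $\repvec_{\samplevec}$ down slightly and still satisfy the error constraint, lowering the cost. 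Hence equality holds. For the second bullet, uniqueness of $y\opt$ does the job: for any two $\repvec_{\samplevec}, g_{\samplevec}\in\codes$ the associated witness points coincide with $y\opt$, i.e. $\frac{\error\linmap(\repvec_{\samplevec}) + \regulizer\mathsf{c}_{\samplevec}\samplevec}{\error+\regulizer\mathsf{c}_{\samplevec}} = \frac{\error\linmap(g_{\samplevec}) + \regulizer\mathsf{c}_{\samplevec}\samplevec}{\error+\regulizer\mathsf{c}_{\samplevec}}$ when $\error>0$, which rearranges to $\linmap(\repvec_{\samplevec}) = \linmap(g_{\samplevec})$; and when $\error = 0$ the common value of $\mathsf{c}_{\samplevec}=(\samplecost)^{1/\horder}$ forces $\cost(\repvec_{\samplevec}) = \cost(g_{\samplevec})$ and both images lie in $\closednbhood{\samplevec}{0}\cap\atomscaled=\{\samplevec\}$ when $\samplevec\in\image(\linmap)$, and otherwise $\regulizer>0$ and the first argument applies.

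The main obstacle I anticipate is making the ``perturb the scaling downward'' step fully rigorous: from $\bar y$ lying in the open ball $\nbhood{\samplevec}{\error}$ I need to conclude that some strictly smaller dilate $\atomset{\scaling'}$ still meets $\closednbhood{\samplevec}{\error}$. The cleanest route is to note $\bar y\in\atomscaled$, write $\bar y\in\atomset{\norm{\bar y}{\linmap}}$ with $\norm{\bar y}{\linmap}\leq(\samplecost)^{1/\horder}$, and observe that if $\norm{\bar y}{\linmap} < (\samplecost)^{1/\horder}$ we already contradict minimality, whereas if $\norm{\bar y}{\linmap} = (\samplecost)^{1/\horder}$ we instead move slightly from $\bar y$ toward the origin (or toward a point of $\atomset{\scaling'}$ for small $\scaling'$, using that $0\in\atomball$ since $0\in\costatomset$), staying inside the open ball by continuity, to drop the gauge value strictly below $(\samplecost)^{1/\horder}$ — again contradicting Lemma~\ref{lemma:coding-problem-reformulation-1}. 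Handling the $\regulizer = 0$, $\samplevec\notin\image(\linmap)$ sub-case of the second bullet carefully is a minor nuisance but is covered since $\Depsdelta$-feasibility then forces the intersection to be $\{\samplevec\}\subset\image(\linmap)$, a contradiction, so that sub-case is vacuous.
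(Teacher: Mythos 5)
Most of your argument is sound and follows the paper's own route: your uniqueness proof (midpoint of \(y_1,y_2\) lies in the open ball by the parallelogram law, then shrink toward the origin to hit a dilate \(\atomset{\scaling'}\) with \(\scaling' < (\samplecost)^{1/\horder}\) still meeting \(\closednbhood{\samplevec}{\error}\), contradicting Lemma \ref{lemma:coding-problem-reformulation-1}) is exactly the paper's \(\theta\)-scaling argument, just phrased through the gauge function; your ``alternative'' proof of the first bullet (scale \(\repvec_{\samplevec}\) down slightly when the error constraint is slack) is correct and in fact more uniform in \(\error\) than the route through uniqueness of the intersection; and your derivation of the second bullet for \(\error > 0\), by equating the two expressions for the unique point \(y\opt\) and cancelling \(\error\), is slicker than the paper's midpoint argument. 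One small omission: the ``drop the gauge value strictly below \((\samplecost)^{1/\horder}\)'' step presumes \(\samplecost > 0\); when \(\inpnorm{\samplevec} \leq \error\) one has \(\samplecost = 0\) and \(\atomscaled = \{0\}\), so uniqueness is trivial there --- one sentence fixes this.

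The genuine gap is the second bullet in the case \(\error = 0\), \(\regulizer > 0\). Your claim that the images then lie in \(\closednbhood{\samplevec}{0} \cap \atomscaled = \{\samplevec\}\) is false: by the first bullet the optimum satisfies \(\inpnorm{\samplevec - \linmap(\repvec_{\samplevec})} = \regulizer \mathsf{c}_{\samplevec} > 0\), so \(\linmap(\repvec_{\samplevec}) \neq \samplevec\) in general even when \(\samplevec \in \image(\linmap)\) (take \(\hilbert = \R{}\), \(\dsize = 1\), \(\linmap\) the identity, \(\cost = \abs{\cdot}\), \(\horder = 1\), \(\regulizer = 1\), \(\error = 0\), \(\samplevec = 2\): then \(\mathsf{c}_{\samplevec} = 1\) and \(\linmap(\repvec_{\samplevec}) = 1\)). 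Nor does your \(\error > 0\) cancellation help here, since for \(\error = 0\) the intersection point is \(y\opt = \samplevec\) independently of \(\repvec_{\samplevec}\), so it carries no information about \(\linmap(\repvec_{\samplevec})\). The way to close this case --- and it is the paper's argument, which works uniformly whenever at least one of \(\error, \regulizer\) is positive --- is to use the activeness you already established: if \(\linmap(\repvec_{\samplevec}) \neq \linmap(g_{\samplevec})\), then \(\tfrac12(\repvec_{\samplevec} + g_{\samplevec})\) is still feasible with cost at most \(\samplecost\) (convexity of the unit sublevel set of \(\cost\) plus the triangle inequality), hence belongs to \(\codes\), yet strict convexity of the Euclidean norm gives \(\inpnorm{\samplevec - \tfrac12\linmap(\repvec_{\samplevec} + g_{\samplevec})} < \error + \regulizer \mathsf{c}_{\samplevec}\), contradicting the activeness of the error constraint on \(\codes\). (The remaining case \(\error = \regulizer = 0\) is the trivial one you already noted.)
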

\begin{remark}
In dictionary learning, for a given dictionary \( \dictionary \), since every sample vector \( \samplevec \in \hilbert \) is represented by some vector \( \repvec_{\dictionary} (\samplevec) \in \encodermap{\dictionary}{\samplevec}{\error}{\regulizer} \), the representation is not unique whenever the set \( \encodermap{\dictionary}{\samplevec}{\error}{\regulizer} \) is not a singleton. In such situations, even though the representation need not be unique, we emphasize that the reconstruction \( \samplevec_{\text{rec}} \coloneqq \dictionary \repvec_{\dictionary}( \samplevec ) \) of the vector \( \samplevec \) obtained from its representation \( \repvec_{\dictionary} (\samplevec) \) is unique.
\end{remark}
\begin{remark}
\label{remark:alternate-expression-for-intersection-point}
When \( \samplevec \notin \closednbhood{0}{\error} \) since \( \mathsf{c}_{\samplevec} > 0 \), it is easily verified that the unique point of intersection \( y \) in Lemma \ref{lemma:uniqueness-of-intersection} can also be written as:
\[
y\opt = \linmap (\repvec_{\samplevec}) + \frac{\regulizer \mathsf{c}_{\samplevec}}{\error + \regulizer \mathsf{c}_{\samplevec}} \big( \samplevec - \linmap (\repvec_{\samplevec}) \big) \indicator_{ ]0, +\infty[ } (\regulizer) .
\]
\end{remark}
\begin{proof}[Proof of Lemma \ref{lemma:uniqueness-of-intersection}]
We note that if \( \error = 0 \), \( \closednbhood{\samplevec}{\error} = \samplevec \), and since by definition, the set \( \atomscaled \) intersects with \( \closednbhood{\samplevec}{\error} \). The intersecton happens at the point \( \samplevec \) which is unique. We shall establish \eqref{eq:intersection-lemma} by considering the remaining cases.
\begin{itemize}[leftmargin = *]
    \item \( 0 < \inpnorm{\samplevec} \leq \error \) : From \eqref{eq:zero-approximate-sample}, we know that \( \samplecost = 0 \) and \( \codes = \{ 0 \} \). This implies that  \( \atomscaled = \{ 0 \} \). In addition, we see that \( 0 \in \closednbhood{\samplevec}{\error} \) whenever \( \inpnorm{\samplevec} \leq \error \). As a result, we obtain that \( \closednbhood{\samplevec}{\error} \cap \atomscaled = \{ 0 \} \). Now, by using the fact that \( (\mathsf{c}_{\samplevec} , \repvec_{\samplevec}) \) is an optimal solution to the coding problem \eqref{eq:coding-problem} if and only if \( (\mathsf{c}_{\samplevec} , \repvec_{\samplevec}) = (0,0) \), we see that \( y\opt \) in \eqref{eq:intersection-lemma} evaluates to \( 0 \) confirming \eqref{eq:intersection-lemma}.
    
    \item \( 0 < \error < \inpnorm{\samplevec} \) : We shall prove by contradiction that the sets \( \closednbhood{\samplevec}{\error} \) and \( \atomscaled \) intersect at a unique point. Let \( y_1 \neq y_2 \) be such that \( y_1 , y_2 \in \closednbhood{\samplevec}{\error} \cap \atomscaled \). Since \( \closednbhood{\samplevec}{\error} \) is a strictly convex set, \( \frac{1}{2} (y_1 + y_2) \in \nbhood{\samplevec}{\error} \). However, since \( \nbhood{\samplevec}{\error} \) is an open set, one can find \( \rho > 0 \) such that \( \closednbhood{ \frac{1}{2} (y_1 + y_2) }{\rho} \subset \nbhood{\samplevec}{\error} \). Since \( 0 \notin \nbhood{\samplevec}{\error} \), we conclude that \( 2\rho < \inpnorm{y_1 + y_2} \); Defining \( \theta \define \left( 1 - \frac{2 \rho}{\inpnorm{y_1 + y_2}} \right) \), we see that \( \theta \in ]0, 1[ \). It is easily verified that \( \inpnorm{\frac{1}{2}(y_1 + y_2) - \frac{\theta}{2} (y_1 + y_2) } = \rho \), which leads us to the first inclusion \( \frac{\theta}{2} (y_1 + y_2) \in \closednbhood{\frac{1}{2} (y_1 + y_2)}{\rho} \subset \closednbhood{\samplevec}{\error} \). In addition, we note that the set \( \atomscaled \) is also convex, which means that \( \frac{1}{2} (y_1 + y_2) \in \atomscaled \). Since \( \atomscaled \) scales linearly, we conclude that \( \frac{\theta}{2} (y_1 + y_2) \in \theta \cdot \atomscaled \). From these two inclusions, it is clear that
    \[
    \frac{\theta}{2} (y_1 + y_2) \in \closednbhood{\samplevec}{\error} \cap \theta \cdot \atomscaled = \closednbhood{\samplevec}{\error} \cap \atomset{\theta \mathsf{c}_{\samplevec} } \; ,
    \]
    and equivalently, \( \closednbhood{\samplevec}{\error} \cap \atomset{\theta \mathsf{c}_{\samplevec} } \neq \emptyset \). This, however, contradicts the assertion of Lemma \ref{lemma:coding-problem-reformulation-1} since \( \theta < 1 \). 
    
    To summarize, we have established that if the intersection of the sets \( \closednbhood{\samplevec}{\error} \) and \( \atomscaled \) is not a singleton, we can slightly shrink the set \( \atomscaled \) such that it still intersects \( \closednbhood{\samplevec}{\error} \) nontrivially. This is a contradiction in view of Lemma \ref{lemma:coding-problem-reformulation-1}.
    
    To prove that \( y\opt \) defined in \eqref{eq:intersection-lemma} is indeed the unique point of intersection, it suffices to show that \( y\opt \in \closednbhood{\samplevec}{\error} \cap \atomscaled \). We observe that:
    \[
\begin{aligned}
    \inpnorm{ \samplevec - y\opt } & = \inpnorm{ \samplevec - \frac{\error \linmap (\repvec_{\samplevec}) + \regulizer \mathsf{c}_{\samplevec} \samplevec}{\error + \regulizer \mathsf{c}_{\samplevec}} } = \frac{\error}{\error + \regulizer \mathsf{c}_{\samplevec}} \inpnorm{ \samplevec - \linmap (\repvec_{\samplevec})} \leq \error \; , \text{ and } \\
    \inpnorm{y\opt - \linmap (\repvec_{\samplevec}) } & = \inpnorm{ \frac{\error \linmap (\repvec_{\samplevec}) + \regulizer \mathsf{c}_{\samplevec} \samplevec}{\error + \regulizer \mathsf{c}_{\samplevec}} - \linmap (\repvec_{\samplevec}) } = \frac{ \regulizer \mathsf{c}_{\samplevec} }{\error + \regulizer \mathsf{c}_{\samplevec}} \inpnorm{ \samplevec - \linmap (\repvec_{\samplevec}) } \leq \regulizer \mathsf{c}_{\samplevec} .
\end{aligned}
    \]
These inequalities, along with the fact that \( \cost (\repvec_{\samplevec}) \leq \mathsf{c}_{\samplevec}^{\horder} \), imply that \( y\opt \in \closednbhood{\samplevec}{\error} \cap \atomscaled \). This establishes \eqref{eq:intersection-lemma}.
\end{itemize}

We proceed to establish the two consequences. to see the first, let us prove that the error constraint is active at the optimal solution \( (\mathsf{c}_{\samplevec}, \repvec_{\samplevec}) \) whenever \( \inpnorm{\samplevec} > \error \geq 0 \). If \( \error = \regulizer = 0 \), then the error constraint is trivially active since \( \inpnorm{\samplevec - \linmap (\repvec_{\samplevec}) } \leq 0 \) implies that \( \inpnorm{\samplevec - \linmap (\repvec_{\samplevec}) } = 0 \). If at least one of the parameters \( \error \) and \( \regulizer \) is positive, we know that \( \mathsf{c}_{\samplevec} > 0 \) for every \( \inpnorm{\samplevec} > \error \). Therefore, we have \( \error + \regulizer \mathsf{c}_{\samplevec} > 0 \), and the quantity \( y\opt \define \frac{\error \linmap (\repvec_{\samplevec}) + \regulizer \mathsf{c}_{\samplevec} \samplevec}{\error + \regulizer \mathsf{c}_{\samplevec}} \) is well defined for every \( \repvec_{\samplevec} \in \codes \). We know from the previous assertion of the lemma that \( y\opt \in \atomscaled \). However, since \( \atomscaled \) is a convex set that contains 0, we conclude that
\[
\theta y\opt \in \atomscaled  \text{ for every \( \theta \in [0,1] \)} .
\]
If we suppose that \( \inpnorm{\samplevec - \linmap (\repvec_{\samplevec}) } < \error + \regulizer \mathsf{c}_{\samplevec} \), it is easily verified that \( \inpnorm{\samplevec - y\opt} < \error \), and thus \( y\opt \in \nbhood{\samplevec}{\error} \). As a result, one can find \( \rho > 0 \) such that \( \closednbhood{y\opt}{\rho} \subset \closednbhood{\samplevec}{\error} \). Since \( 0 \notin \closednbhood{\samplevec}{\error} \), we see at once that \( \rho < \inpnorm{y\opt} \), and conclude that
\[
\alpha y\opt \in \closednbhood{y\opt}{\rho} \subset \closednbhood{\samplevec}{\error} \text{ for every \( \alpha \) such that \( \left( 1 - \frac{\rho}{\inpnorm{y\opt}} \right) \leq \alpha \leq 1 \)} .
\]
These two inclusions together contradict that the sets \( \closednbhood{\samplevec}{\error} \) and \( \atomscaled \) intersect at a unique point.

It remains to prove the final assertion that for every \( \repvec_{\samplevec} , g_{\samplevec} \in \codes \), the equality \( \linmap (\repvec_{\samplevec}) = \linmap ( g_{\samplevec} ) \) holds. Indeed, whenever \( \inpnorm{\samplevec} \leq \error \), we have \( \samplecost = 0 \) and \( \codes = \{ 0 \} \). This implies that \( \repvec_{\samplevec} = g_{\samplevec} = 0 \), and thus \( \linmap (\repvec_{\samplevec}) = \linmap ( g_{\samplevec} ) \). Let us consider the case when \( \inpnorm{\samplevec} > \error \), and suppose that \( \linmap (\repvec_{\samplevec}) \neq \linmap (g_{\samplevec}) \) for some \( \repvec_{\samplevec} , g_{\samplevec} \in \codes \). Then it follows that \( \frac{1}{2} (\repvec_{\samplevec} + g_{\samplevec}) \) satisfies the error constraint
\[
\inpnorm{\samplevec - \frac{1}{2} \linmap (\repvec_{\samplevec} + g_{\samplevec})} = \inpnorm{\frac{1}{2} \big(\samplevec - \linmap (\repvec_{\samplevec}) \big) +  \frac{1}{2} \big( \samplevec - \linmap (g_{\samplevec}) \big)} \leq \error + \regulizer \mathsf{c}_{\samplevec} .
\]
Moreover, we know that the level sets of  \( \cost \) are convex and since \( \repvec_{\samplevec} , g_{\samplevec} \in \mathsf{c}_{\samplevec} \cdot \costatomset \), we have \( \frac{1}{2} (\repvec_{\samplevec} + g_{\samplevec}) \in \mathsf{c}_{\samplevec} \). Therefore, \( \cost (\frac{1}{2} (\repvec_{\samplevec} + g_{\samplevec}) ) \leq \mathsf{c}_{\samplevec}^{\horder} = \samplecost \), we conclude that \( \frac{1}{2} (\repvec_{\samplevec} + g_{\samplevec}) \in \codes \). However, since \( \linmap (\repvec_{\samplevec}) \neq \linmap (g_{\samplevec}) \), the triangle inequality implies that the above error constraint is satisfied strictly. This contradicts our earlier assertion that the error constraint is active for every \( \repvec_{\samplevec} \in \codes \). The proof is complete.
\end{proof}

\begin{lemma}
\label{lemma:separation-dummy}
Let the linear map \( \linmap : \R{\dsize} \longrightarrow \hilbert \) and non-negative real numbers \( \error, \regulizer \) be given, then for every \( \separator \in \hilbert \), we have
\begin{equation}
\label{eq:maximization-over-atomball-changed-to-hvar}
\dualnorm{\separator} \define \max_{z \in \atomball} \inprod{\separator}{z} \ = \ \regulizer \inpnorm{\separator} \; + \; \max_{\hvar \in \costatomset} \ \inprod{\separator}{\linmap (\hvar)} .
\end{equation}
Furthermore,
\begin{enumerate}[leftmargin = * , label = \rm{(\roman*)}]
\item If \( \regulizer > 0 \), then \( \dualnorm{\separator} > 0 \) for every \( \separator \in \hilbert \setminus \{ 0 \} \).

\item If \( \regulizer = 0 \), and \( \separator \in \hilbert \setminus \{ 0 \} \) satisfies \( \dualnorm{\separator} = 0 \), then \( \inprod{\separator}{\samplevec} - \error \inpnorm{\separator} \leq 0 \) for every \( (\linmap , \error , 0) \)-feasible vector \( \samplevec \in \hilbert \).
\end{enumerate}
\end{lemma}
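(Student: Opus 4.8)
The plan is to first establish the displayed identity \eqref{eq:maximization-over-atomball-changed-to-hvar} by exploiting the representation $\atomball = \bigcup_{\hvar \in \costatomset} \closednbhood{\linmap(\hvar)}{\regulizer}$ recorded just above. Since the union is taken over the compact set $\costatomset$ and each $\closednbhood{\linmap(\hvar)}{\regulizer}$ is compact, the maximum defining $\dualnorm{\separator}$ is attained and we may split
\[
\dualnorm{\separator} \ = \ \max_{\hvar \in \costatomset}\ \max_{z \in \closednbhood{\linmap(\hvar)}{\regulizer}} \inprod{\separator}{z}.
\]
For a fixed $\hvar$, writing $z = \linmap(\hvar) + w$ with $\inpnorm{w} \leq \regulizer$ and applying the Cauchy--Schwarz inequality (attained at $w = \regulizer\separator/\inpnorm{\separator}$ when $\separator \neq 0$, and trivially when $\separator = 0$), the inner maximum equals $\inprod{\separator}{\linmap(\hvar)} + \regulizer\inpnorm{\separator}$. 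Pulling the $\hvar$-independent summand $\regulizer\inpnorm{\separator}$ out of the outer maximum yields \eqref{eq:maximization-over-atomball-changed-to-hvar}.

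For part (i), I would note that positive homogeneity forces $\cost(0) = 0$, so $0 \in \costatomset$ and hence $\max_{\hvar\in\costatomset}\inprod{\separator}{\linmap(\hvar)} \geq \inprod{\separator}{\linmap(0)} = 0$. Combining this with \eqref{eq:maximization-over-atomball-changed-to-hvar}, for $\regulizer > 0$ and $\separator \neq 0$ we obtain $\dualnorm{\separator} \geq \regulizer\inpnorm{\separator} > 0$.

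For part (ii), with $\regulizer = 0$ the identity reduces to $\dualnorm{\separator} = \max_{\hvar\in\costatomset}\inprod{\separator}{\linmap(\hvar)}$, so the hypothesis $\dualnorm{\separator} = 0$ says precisely that $\inprod{\separator}{\linmap(\hvar)} \leq 0$ for every $\hvar \in \costatomset$. I would then upgrade this to all of $\image(\linmap)$: for $\repvec \in \R{\dsize}$, the inequality $\inprod{\separator}{\linmap(\repvec)} \leq 0$ is trivial if $\repvec = 0$, and otherwise $\cost(\repvec) > 0$ (inf-compactness together with homogeneity, as already observed in the discussion around \eqref{eq:zero-approximate-sample}), so $\repvec/(\cost(\repvec))^{1/\horder} \in \costatomset$ and scaling back by the positive factor $(\cost(\repvec))^{1/\horder}$ gives $\inprod{\separator}{\linmap(\repvec)} \leq 0$. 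Thus $\inprod{\separator}{z} \leq 0$ for every $z \in \image(\linmap)$. Finally, a $(\linmap , \error , 0)$-feasible $\samplevec$ satisfies $\closednbhood{\samplevec}{\error} \cap \image(\linmap) \neq \emptyset$ by the remark following Definition \ref{def:representability}, so choosing $z_0 \in \image(\linmap)$ with $\inpnorm{\samplevec - z_0} \leq \error$ and using Cauchy--Schwarz,
\[
\inprod{\separator}{\samplevec} = \inprod{\separator}{z_0} + \inprod{\separator}{\samplevec - z_0} \leq 0 + \error\inpnorm{\separator},
\]
which is the asserted inequality $\inprod{\separator}{\samplevec} - \error\inpnorm{\separator} \leq 0$.

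I do not anticipate a serious obstacle; the argument is essentially bookkeeping with Cauchy--Schwarz and the union representation of $\atomball$. The only points requiring care are the degenerate case $\separator = 0$ (where all quantities vanish), checking that the maxima are genuinely attained (which follows from the compactness of $\costatomset$ and of $\atomball$ already recorded), and, in part (ii), the possibility that $\image(\linmap)$ is a proper subspace — the homogeneity/scaling step is what keeps that part clean and avoids any appeal to separation theorems.
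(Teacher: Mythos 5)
Your proof is correct and follows essentially the same route as the paper: it splits the maximum over \( \atomball \) into the ball part (giving \( \regulizer\inpnorm{\separator} \) via Cauchy--Schwarz) and the part over \( \linmap(\costatomset) \), uses \( 0 \in \costatomset \) for (i), and for (ii) extends \( \inprod{\separator}{\cdot} \leq 0 \) from \( \costatomset \) to all of \( \image(\linmap) \) before evaluating at a point of \( \closednbhood{\samplevec}{\error} \cap \image(\linmap) \). The only cosmetic difference is that the paper invokes the absorbing property of \( S_0(\linmap,1) \) in \( \image(\linmap) \) where you spell out the homogeneity scaling explicitly, which is just an unpacking of the same fact.
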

\begin{proof}
We recall from the definition \eqref{eq:atomic-ball-definition} that the set \( \atomball \) is the image of the linear map: \( \closednbhood{0}{\regulizer} \times \costatomset \ni (z', \hvar) \longmapsto z' + \linmap (\hvar) \). This allows us to write the optimization problem \( \max\limits_{z \in \atomball} \inprod{\separator}{z} \) equivalently as:
\[
\max\limits_{\hvar, \; z'} \ \ \inprod{\separator}{z' + \linmap (\hvar)} \quad \sbjto \ \ \hvar \in \costatomset , \ z' \in \closednbhood{0}{\regulizer} .
\]
It is easily seen that the above optimization problem is separable into maximization over individual variables, and using the fact that \( \max\limits_{z' \in \closednbhood{0}{\regulizer}} \inprod{\separator}{z'} = \regulizer \inpnorm{\separator} \) for every \( \separator \in \hilbert \) \eqref{eq:maximization-over-atomball-changed-to-hvar} follows at once. Moreover, since \( 0 \in \costatomset \), we have \( 0 \leq \max\limits_{\hvar \in \costatomset} \; \inprod{\separator}{\linmap (\hvar)} \) for every \( \separator \in \hilbert \). Applying this inequality in \eqref{eq:maximization-over-atomball-changed-to-hvar}, assertion (i) of the lemma follows immediately.

Finally, let \( \regulizer = 0 \) and \( \separator \in \hilbert \setminus \{ 0 \} \) satisfy \( \dualnorm{\separator} = 0 \). Since \( S_0 (\linmap , 1) \) is an absorbing set to \( \image (\linmap) \), we conclude from the definition \eqref{eq:dual-norm-definition} of the dual function that \( \inprod{\separator}{y} \leq 0 \) for every \( y \in \image (\linmap) \). If \( \samplevec \in \hilbert \) is \( (\linmap, \error, 0) \)-feasible, we know that \( \closednbhood{\samplevec}{\error} \cap \image (\linmap) \neq \emptyset \). Let \( y' \in \closednbhood{\samplevec}{\error} \cap \image (\linmap) \), then
\[
\inprod{\separator}{\samplevec} - \error \inpnorm{\separator} = \min_{y \in \closednbhood{\samplevec}{\error}} \inprod{\separator}{y} \ \leq \ \inprod{\separator}{y'} \; \leq \; 0 .
\]
This completes the proof.
\end{proof}

\subsubsection{Separation of sets \( \closednbhood{\samplevec}{\error} \) and \( \atomscaled \).}
We recall that both the sets \( \closednbhood{\samplevec}{\error} \) and \( \atomscaled \) are compact convex subsets that intersect at the unique point \( y\opt \). As a result, we know from the Hahn-Banach separation principle that there exists a \( \separator\opt \in \hilbert \) such that the linear functional \( \inprod{\separator\opt}{\cdot} \) satisfies
\begin{equation}
\label{eq:the-other-inequality }
\max_{z \in \atomscaled} \inprod{\separator\opt}{z} \ = \ \inprod{\separator\opt}{y\opt} \ = \ \min_{y \in \closednbhood{\samplevec}{\error}} \inprod{\separator\opt}{y} .
\end{equation}
In other words, the linear functional \( \inprod{\separator\opt}{\cdot} \), separates the convex sets \( \closednbhood{\samplevec}{\error} \) and \( \atomscaled \), and supports them at their unique point of intersection \( y\opt \). This fact, is central in establishing strong duality and explicitly characterizing the optimal dual variables.

\begin{lemma}
\label{lemma:equality-condition}
Consider \eqref{eq:the-other-inequality } where at least one of \( \error, \regulizer \) is positive. If  \( 0 \neq \separator' \in \hilbert \) satisfies \eqref{eq:the-other-inequality }, then \( \separator' = \alpha \big(\samplevec - \linmap (\repvec_{\samplevec}) \big) \) for some  \( \alpha > 0 \) and \( \repvec_{\samplevec} \in \codes \). Consequently, \eqref{eq:the-other-inequality } is satisfied by \( \alpha (\samplevec - \linmap (\repvec_{\samplevec})) \) for every \( \alpha > 0 \).
\end{lemma}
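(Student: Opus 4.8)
The plan is to reduce the lemma to the elementary fact that a nonzero linear functional $\inprod{\separator'}{\cdot}$ attains its minimum (resp.\ maximum) over a closed Euclidean ball $\closednbhood{c}{\rho}$ with $\rho > 0$ at the \emph{unique} point $c - \rho\,\separator'/\inpnorm{\separator'}$ (resp.\ $c + \rho\,\separator'/\inpnorm{\separator'}$), combined with the explicit description of $y\opt$ furnished by Lemma~\ref{lemma:uniqueness-of-intersection}. For the set-up: $\Depsdelta$-feasibility of $\samplevec$ gives $\codes \ne \emptyset$, so fix any $\repvec_{\samplevec} \in \codes$; then $\mathsf{c}_{\samplevec} = (\samplecost)^{1/\horder}$, the set $\closednbhood{\samplevec}{\error} \cap \atomscaled$ equals $\{ y\opt \}$ with $y\opt$ as in \eqref{eq:intersection-lemma}, and \eqref{eq:the-other-inequality } says precisely that $y\opt$ minimizes $\inprod{\separator'}{\cdot}$ over $\closednbhood{\samplevec}{\error}$ and simultaneously maximizes it over $\atomscaled$. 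We may assume $\inpnorm{\samplevec} > \error$: if $\inpnorm{\samplevec} < \error$ then the minimum over $\closednbhood{\samplevec}{\error}$ equals $\inprod{\separator'}{\samplevec} - \error\inpnorm{\separator'} < 0 = \max_{z\in\atomscaled}\inprod{\separator'}{z}$ by strict Cauchy--Schwarz, so no nonzero $\separator'$ obeys \eqref{eq:the-other-inequality } and the claim is vacuous; the boundary case $\inpnorm{\samplevec} = \error$ forces $\codes = \{0\}$ and, via the Cauchy--Schwarz equality condition, $\separator' = \alpha\samplevec = \alpha\big(\samplevec - \linmap(0)\big)$.

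\emph{Case $\error > 0$.} The minimizer of $\inprod{\separator'}{\cdot}$ over $\closednbhood{\samplevec}{\error}$ being unique, $y\opt$ must coincide with it, so $\samplevec - y\opt = \error\,\separator'/\inpnorm{\separator'}$. But \eqref{eq:intersection-lemma} (equivalently Remark~\ref{remark:alternate-expression-for-intersection-point}) gives $\samplevec - y\opt = \frac{\error}{\error + \regulizer\mathsf{c}_{\samplevec}}\big(\samplevec - \linmap(\repvec_{\samplevec})\big)$. Equating and cancelling the common factor $\error$ yields $\separator' = \alpha\big(\samplevec - \linmap(\repvec_{\samplevec})\big)$ with $\alpha = \inpnorm{\separator'}/(\error + \regulizer\mathsf{c}_{\samplevec}) > 0$.

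\emph{Case $\error = 0$ (so $\regulizer > 0$).} Here $y\opt = \samplevec$, and since $\inpnorm{\samplevec} > \error = 0$ we have $\mathsf{c}_{\samplevec} > 0$ by \eqref{eq:zero-approximate-sample}, hence $\regulizer\mathsf{c}_{\samplevec} > 0$. Because $\repvec_{\samplevec}/\mathsf{c}_{\samplevec} \in \costatomset$, scaling the identity $\atomball = \bigcup_{h \in \costatomset}\closednbhood{\linmap(h)}{\regulizer}$ noted above by $\mathsf{c}_{\samplevec}$ shows $\closednbhood{\linmap(\repvec_{\samplevec})}{\regulizer\mathsf{c}_{\samplevec}} \subseteq \atomscaled$, and $\samplevec = y\opt$ belongs to this ball, in fact to its boundary by the first consequence of Lemma~\ref{lemma:uniqueness-of-intersection}. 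Since $y\opt$ maximizes $\inprod{\separator'}{\cdot}$ over the bigger set $\atomscaled$ and lies in the ball, it also maximizes over the ball; uniqueness of the maximizer then gives $\samplevec = \linmap(\repvec_{\samplevec}) + \regulizer\mathsf{c}_{\samplevec}\,\separator'/\inpnorm{\separator'}$, i.e.\ $\separator' = \alpha\big(\samplevec - \linmap(\repvec_{\samplevec})\big)$ with $\alpha = \inpnorm{\separator'}/(\regulizer\mathsf{c}_{\samplevec}) > 0$. (Alternatively one avoids invoking uniqueness by running the equality case through $\inprod{\separator'}{\samplevec} \le \inprod{\separator'}{\linmap(\repvec_{\samplevec})} + \inpnorm{\separator'}\,\inpnorm{\samplevec - \linmap(\repvec_{\samplevec})} \le \inprod{\separator'}{\linmap(\repvec_{\samplevec})} + \regulizer\mathsf{c}_{\samplevec}\inpnorm{\separator'} = \max_{z\in\atomscaled}\inprod{\separator'}{z} = \inprod{\separator'}{\samplevec}$, which forces Cauchy--Schwarz equality between $\separator'$ and the nonzero vector $\samplevec - \linmap(\repvec_{\samplevec})$.)

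The ``consequently'' clause is then immediate: for $\alpha > 0$, replacing $\separator'$ by $\alpha\separator'$ multiplies each of the three quantities in \eqref{eq:the-other-inequality } by $\alpha$, preserving both equalities, so every positive multiple of $\samplevec - \linmap(\repvec_{\samplevec})$ again satisfies it. I expect the only genuine subtlety to be bookkeeping which ball to support the functional against --- the error ball $\closednbhood{\samplevec}{\error}$ when $\error > 0$, versus the translate $\closednbhood{\linmap(\repvec_{\samplevec})}{\regulizer\mathsf{c}_{\samplevec}}$ sitting inside $\atomscaled$ when $\error = 0$ (one should \emph{not} attempt to read off extreme points of $\atomscaled$ itself) --- together with disposing of the degenerate regime $\inpnorm{\samplevec} \le \error$; everything else is the uniqueness of the extremizer of a linear functional over a ball plus the closed form of $y\opt$ from Lemma~\ref{lemma:uniqueness-of-intersection}.
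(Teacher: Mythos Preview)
Your proof is correct and follows essentially the same approach as the paper: both arguments identify $y\opt$ via Lemma~\ref{lemma:uniqueness-of-intersection} and then exploit the uniqueness of the extremizer of a nonzero linear functional over a Euclidean ball --- $\closednbhood{\samplevec}{\error}$ when $\error>0$, and $\closednbhood{\linmap(\repvec_{\samplevec})}{\regulizer\mathsf{c}_{\samplevec}}\subset\atomscaled$ when $\regulizer>0$ --- to pin down the direction of $\separator'$. The only differences are cosmetic: you partition into $\error>0$ versus $\error=0$ whereas the paper uses the overlapping cases $\error>0$ and $\regulizer>0$, and you take extra care to dispose of the degenerate regime $\inpnorm{\samplevec}\le\error$, which the paper leaves implicit.
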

\begin{proof}
We recall from the Remark \ref{remark:alternate-expression-for-intersection-point} that the sets \( \closednbhood{\samplevec}{\error} \) \( \atomscaled \) intersect at the unique point \( y\opt \), given by
\[
y\opt = \linmap (\repvec_{\samplevec})  + \frac{\regulizer \mathsf{c}_{\samplevec}}{\error + \regulizer \mathsf{c}_{\samplevec}} \big( \samplevec - \linmap (\repvec_{\samplevec}) \big) \indicator_{ ]0, +\infty[ } (\regulizer) ,
\]
where \( \mathsf{c}_{\samplevec} \define (\samplecost)^{1/\horder} \) and \( \repvec_{\samplevec} \in \codes \).
\begin{itemize}[leftmargin = *]
    \item On the one hand, if \( \error > 0 \) and \( \separator' \neq 0 \) satisfies: \( \inprod{\separator'}{y\opt} \ = \ \min\limits_{y \in \closednbhood{\samplevec}{\error}} \ \inprod{\separator'}{y} \), then necessarily \( \separator' = \alpha' (\samplevec - y\opt) \) for some \( \alpha' > 0 \).
    
    \item On the other hand, if \( \regulizer > 0 \), and \( \separator' \neq 0 \) satisfies: \( \inprod{\separator'}{y\opt} \ = \ \max\limits_{z \in \atomscaled} \ \inprod{\separator'}{z} \), then due to the fact that \( y\opt \in \closednbhood{\linmap (\repvec_{\samplevec}) }{\regulizer \mathsf{c}_{\samplevec}} \subset \atomscaled \) \( \separator' \) also satisfies: \( \inprod{\separator'}{y\opt} \ = \ \max\limits_{z \in \closednbhood{\linmap (\repvec_{\samplevec})}{\regulizer \mathsf{c}_{\samplevec}}} \ \inprod{\separator'}{z} \). It follows that: \( \separator' = \alpha'' \big( y\opt - \linmap (\repvec_{\samplevec}) \big) \) for some \( \alpha'' > 0 \). 
\end{itemize}
By substituting for \( y\opt \) and simplifying, we easily deduce that in both the cases \( \separator' = \alpha \big( \samplevec - \linmap (\repvec_{\samplevec}) \big) \) for some \( \alpha > 0 \). 

Suppose that \eqref{eq:the-other-inequality } is true for some \( \alpha' > 0 \), then for any \( \alpha > 0 \), the inequalities in \eqref{eq:the-other-inequality } are preserved by multiplying throughout by the positive quantity \( \frac{\alpha}{\alpha'} \). Thus, \eqref{eq:the-other-inequality } is satisfied by \( \alpha (\samplevec - \linmap (\repvec_{\samplevec})) \) for every \( \alpha > 0 \).
\end{proof}

\begin{lemma}
\label{lemma:optimal-separation-of-convex-bodies}
Let the linear map \( \linmap : \R{\dsize} \longrightarrow \hilbert \) and non-negative real numbers \( \error, \regulizer
\) be given, and \( \samplevec \in \hilbert \setminus \closednbhood{0}{\error} \) be any \( \Depsdelta \)-feasible vector such that \( \separatorset \neq \emptyset \). Then every \( \separator\opt \in \separatorset \) satisfies \eqref{eq:the-other-inequality }.
\end{lemma}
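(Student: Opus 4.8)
The plan is to evaluate the two outer quantities in \eqref{eq:the-other-inequality } directly from the two defining properties of $\separatorset$ in Definition \ref{def:optimal-separator-set}, and then to sandwich the middle quantity $\inprod{\separator\opt}{y\opt}$ between them. Throughout, abbreviate $\mathsf{c}_{\samplevec} \define (\samplecost)^{1/\horder}$. Since $\samplevec \in \hilbert \setminus \closednbhood{0}{\error}$, i.e.\ $\inpnorm{\samplevec} > \error$, the characterization \eqref{eq:zero-approximate-sample} gives $\samplecost > 0$, so $\mathsf{c}_{\samplevec} > 0$; and $\Depsdelta$-feasibility gives $\mathsf{c}_{\samplevec} < +\infty$. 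Moreover, any $\separator\opt \in \separatorset$ lies in $\hilbert \setminus \closednbhood{0}{\error}$, so $\inpnorm{\separator\opt} > \error \geq 0$ and in particular $\separator\opt \neq 0$.

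First I would handle the left-hand side. By construction $\atomscaled = \mathsf{c}_{\samplevec} \cdot \atomball$, and a nonnegative scalar factors through the supremum of a linear functional, so using the definition \eqref{eq:dual-norm-definition} of $\dualnorm{\cdot}$ together with the compactness of $\atomball$,
\[
\max_{z \in \atomscaled} \inprod{\separator\opt}{z} \; = \; \mathsf{c}_{\samplevec} \, \sup_{z \in \atomball} \inprod{\separator\opt}{z} \; = \; \mathsf{c}_{\samplevec} \, \dualnorm{\separator\opt} \; = \; \mathsf{c}_{\samplevec} ,
\]
where the last equality is the property $\dualnorm{\separator\opt} = 1$ from Definition \ref{def:optimal-separator-set}. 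For the right-hand side, the minimum of the linear functional $\inprod{\separator\opt}{\cdot}$ over the ball $\closednbhood{\samplevec}{\error}$ equals $\inprod{\separator\opt}{\samplevec} - \error \inpnorm{\separator\opt}$ — the identity already invoked in the proof of Lemma \ref{lemma:separation-dummy}, valid here because $\separator\opt \neq 0$ — and the second property of Definition \ref{def:optimal-separator-set} identifies this value with $\mathsf{c}_{\samplevec}$ as well. Hence both outer terms of \eqref{eq:the-other-inequality } equal $\mathsf{c}_{\samplevec}$.

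It remains to pin down the middle term. By Lemma \ref{lemma:coding-problem-reformulation-1} applied at the minimizing scale $\scaling = \mathsf{c}_{\samplevec}$, the set $\atomscaled = \atomset{\mathsf{c}_{\samplevec}}$ meets $\closednbhood{\samplevec}{\error}$, and by Lemma \ref{lemma:uniqueness-of-intersection} that intersection is the single point $y\opt$ (this is the $y\opt$ appearing in \eqref{eq:the-other-inequality }). Since $y\opt \in \atomscaled$ we obtain $\inprod{\separator\opt}{y\opt} \leq \max_{z \in \atomscaled} \inprod{\separator\opt}{z} = \mathsf{c}_{\samplevec}$, and since $y\opt \in \closednbhood{\samplevec}{\error}$ we obtain $\inprod{\separator\opt}{y\opt} \geq \min_{y \in \closednbhood{\samplevec}{\error}} \inprod{\separator\opt}{y} = \mathsf{c}_{\samplevec}$, so $\inprod{\separator\opt}{y\opt} = \mathsf{c}_{\samplevec}$ and \eqref{eq:the-other-inequality } follows. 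There is no genuine obstacle in this argument; the only points needing a moment's care are that $\separator\opt \neq 0$ (so the ball-minimum formula applies) and that $\mathsf{c}_{\samplevec} > 0$ (so $\atomscaled$ is a nondegenerate rescaling of $\atomball$), both secured by the standing hypothesis $\samplevec \notin \closednbhood{0}{\error}$ via \eqref{eq:zero-approximate-sample}.
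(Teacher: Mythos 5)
Your proof is correct and follows essentially the same route as the paper: compute both outer terms of \eqref{eq:the-other-inequality } as \( (\samplecost)^{1/\horder} \) from the two defining properties of \( \separatorset \), then use that the unique intersection point \( y\opt \) lies in both compact convex sets to sandwich \( \inprod{\separator\opt}{y\opt} \) between them. The paper phrases this last step as the separating functional necessarily supporting both sets at \( y\opt \); your explicit two-sided inequality is just that argument spelled out.
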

\begin{proof}
We first recall that \( \atomscaled = (\samplecost)^{1/\horder} \cdot \atomball \). Thus, for every \( \separator\opt \in \separatorset \), the following relations hold: 
\begin{align*}
& \max\limits_{z \in \atomscaled} \ \inprod{\separator\opt}{z} \ = \ (\samplecost)^{1/\horder} \max\limits_{z \in \atomball} \ \inprod{\separator\opt}{z} \ = \ (\samplecost)^{1/\horder} , \text{ and} \\
& \min\limits_{y \in \closednbhood{\samplevec}{\error}} \ \inprod{\separator\opt}{y} \ = \ \inprod{\separator\opt}{\samplevec} - \error \inpnorm{\separator\opt} \ = \ (\samplecost)^{1/\horder} .
\end{align*}
In other words, the linear functional \( \inprod{\separator\opt}{\cdot} \) separates the sets \( \closednbhood{\samplevec}{\error} \) and \( \atomscaled \). Moreover, both these sets are compact and convex, and we know from Lemma \ref{lemma:uniqueness-of-intersection} that they intersect at a unique point \( y\opt \). Therefore, the linear functional \( \inprod{\separator\opt}{\cdot} \) must support both these sets at their intersection point \( y\opt \), and \eqref{eq:the-other-inequality } follows at once.
\end{proof}

\begin{proposition}
\label{proposition:description-of-the-separator-set}
Let the linear map \( \linmap : \R{\dsize} \longrightarrow \hilbert \) and non-negative real numbers \( \error, \regulizer \) be given, and \( \samplevec \in \hilbert \setminus \closednbhood{0}{\error} \) be a \( \Depsdelta \)-feasible in the sense of Def. \ref{def:representability}. The set \( \separatorset \) is completely described in the following.
\begin{enumerate}[label = \rm{(\roman*)}, leftmargin=*]
\item If \( \regulizer = 0 \), \( \error = 0 \), then the set \( \Lambda_0 (\linmap, \samplevec, 0) \neq \emptyset \), and in particular, \( \Lambda_0 (\linmap, \samplevec, 0) \cap \image (\linmap) \neq \emptyset \). A vector \( \separator\opt \in \Lambda_0 (\linmap, \samplevec, 0) \) if and only if the linear functional \( \inprod{\separator\opt}{\cdot} \) supports the set \( S_0 (\linmap, \samplevec, 0) \) at \( \samplevec \), and satisfies \( \dualnorm{\separator\opt} = 1 \).\footnote{If \( \image (\linmap) \) is a proper subspace of \( \hilbert \), then every \( \separator \) in the orthogonal complement of \( \image (\linmap) \) supports the set \( S_0 (\linmap , 1) \) at every point, and in particular at \( \samplevec \). However, such a \( \separator \) doesn't satisfy the condition \( \inprod{\separator}{\samplevec} - \error \inpnorm{\separator} = (\samplecost)^{1/\horder} \).}

\item If at least one of the following is true 
\begin{itemize}
    \item \( \regulizer > 0 \)
    
    \item \( \regulizer = 0 \) and \( \error > 0 \) with \( \nbhood{\samplevec}{\error} \cap \image (\linmap) \neq \emptyset \)
\end{itemize}
then the set \( \separatorset \) consists of a unique element \( \separator\opt \) given by
    \begin{equation}
    \label{eq:unique-separator-solution}
          \separator\opt = \frac{\samplevec - \linmap (\repvec_{\samplevec}) }{ \dualnorm{\samplevec - \linmap (\repvec_{\samplevec}) } } \quad \text{for any \( \repvec_{\samplevec} \in \codes \) }.\footnote{Even though the set \( \codes \) may contain multiple elements, \( \separator\opt \) is unique due to the fact that \( \linmap (\repvec_{\samplevec}) \) is unique.}
    \end{equation}
    
\item If \( \regulizer = 0 \) and \( \error > 0 \) such that \( \nbhood{\samplevec}{\error} \cap \image (\linmap) = \emptyset \), then \( \separatorset = \emptyset \).
\end{enumerate}
\end{proposition}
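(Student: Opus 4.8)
The plan is to treat separately the three mutually exclusive alternatives in the statement. Throughout, $\samplevec\notin\closednbhood{0}{\error}$ forces $\samplecost>0$ by \eqref{eq:zero-approximate-sample}, so I put $\mathsf{c}_{\samplevec}\define(\samplecost)^{1/\horder}>0$, fix an arbitrary $\repvec_{\samplevec}\in\codes$ --- recalling from Lemma \ref{lemma:uniqueness-of-intersection} that $\linmap(\repvec_{\samplevec})$ does not depend on the choice --- and set $w\define\samplevec-\linmap(\repvec_{\samplevec})$. The engine of the proof is the separation identity \eqref{eq:the-other-inequality }, together with Lemma \ref{lemma:equality-condition} (every nonzero functional satisfying \eqref{eq:the-other-inequality } is a positive multiple of $w$, and then so is every positive multiple of $w$), Lemma \ref{lemma:optimal-separation-of-convex-bodies} (every $\separator\opt\in\separatorset$ satisfies \eqref{eq:the-other-inequality }), and the positivity clauses of Lemma \ref{lemma:separation-dummy}.

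For the case $\error=\regulizer=0$, feasibility gives $\samplevec\in\image(\linmap)$ and Lemma \ref{lemma:coding-problem-guage-function-equivalent} gives $\mathsf{c}_{\samplevec}=\norm{\samplevec}{\linmap}\in\,]0,+\infty[$. Since $\atomball=\linmap(\costatomset)$ is the image of a neighbourhood of the origin under the linear surjection $\linmap\colon\R{\dsize}\to\image(\linmap)$, we have $0\in\relinterior\atomball$, and $\samplevec$ lies on the relative boundary of $\atomset{\mathsf{c}_{\samplevec}}=S_{0}(\linmap,\samplevec,0)$ inside $\image(\linmap)$ because $\norm{\samplevec}{\linmap}$ is exactly the scaling needed to absorb $\samplevec$. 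The supporting hyperplane theorem, applied within $\image(\linmap)$, then produces a nonzero $\mu\in\image(\linmap)$ with $\inprod{\mu}{\samplevec}=\max_{z\in\atomset{\mathsf{c}_{\samplevec}}}\inprod{\mu}{z}=\mathsf{c}_{\samplevec}\dualnorm{\mu}$, a common value that is strictly positive since $0\in\relinterior\atomball$; normalising gives $\mu/\dualnorm{\mu}\in\Lambda_{0}(\linmap,\samplevec,0)\cap\image(\linmap)$, so this set is non-empty. For the characterisation: when $\error=0$ the two defining conditions of $\Lambda_{0}(\linmap,\samplevec,0)$ are $\dualnorm{\separator}=1$ and $\inprod{\separator}{\samplevec}=\mathsf{c}_{\samplevec}$, and since $\samplevec\in\atomset{\norm{\samplevec}{\linmap}}$ and $\max_{z\in\atomset{\mathsf{c}_{\samplevec}}}\inprod{\separator}{z}=\mathsf{c}_{\samplevec}\dualnorm{\separator}$ for every $\separator$, these are jointly equivalent to ``$\inprod{\separator}{\cdot}$ supports $S_{0}(\linmap,\samplevec,0)$ at $\samplevec$ and $\dualnorm{\separator}=1$'' --- the degenerate supporting functionals from $\image(\linmap)^{\perp}$ being ruled out exactly by $\dualnorm{\separator}=1$.

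For the remaining cases, $\regulizer>0$ or ($\regulizer=0$, $\error>0$), Lemma \ref{lemma:uniqueness-of-intersection} gives $\inpnorm{w}=\error+\regulizer\mathsf{c}_{\samplevec}>0$, so $w\neq0$, and Lemma \ref{lemma:equality-condition} is available. Since Hahn--Banach provides a nonzero functional obeying \eqref{eq:the-other-inequality }, Lemma \ref{lemma:equality-condition} shows $\alpha w$ obeys \eqref{eq:the-other-inequality } for every $\alpha>0$; evaluating \eqref{eq:the-other-inequality } at $\separator\opt=w$ and using $\atomscaled=\mathsf{c}_{\samplevec}\cdot\atomball$ yields the key identity $\mathsf{c}_{\samplevec}\dualnorm{w}=\inprod{w}{\samplevec}-\error\inpnorm{w}$. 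Now a dichotomy: if $\dualnorm{w}>0$ then $\separator\opt\define w/\dualnorm{w}$ has $\dualnorm{\separator\opt}=1$ and, by the identity, $\inprod{\separator\opt}{\samplevec}-\error\inpnorm{\separator\opt}=\mathsf{c}_{\samplevec}=(\samplecost)^{1/\horder}$, so $\separator\opt\in\separatorset$; conversely every element of $\separatorset$ obeys \eqref{eq:the-other-inequality } (Lemma \ref{lemma:optimal-separation-of-convex-bodies}), hence is a positive multiple of $w$ of dual norm $1$ (Lemma \ref{lemma:equality-condition}), hence equals $\separator\opt$ --- so $\separatorset$ is the singleton $\{\,(\samplevec-\linmap(\repvec_{\samplevec}))/\dualnorm{\samplevec-\linmap(\repvec_{\samplevec})}\,\}$. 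If instead $\dualnorm{w}=0$, the identity forces $\inprod{w}{\samplevec}-\error\inpnorm{w}=0\neq\mathsf{c}_{\samplevec}$, so no positive multiple of $w$ can meet the required value and $\separatorset=\emptyset$.

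It remains to match the dichotomy with the hypotheses. If $\regulizer>0$, Lemma \ref{lemma:separation-dummy}(i) gives $\dualnorm{w}>0$ immediately, which is case (ii). If $\regulizer=0$ then $\atomball=\linmap(\costatomset)\subset\image(\linmap)$ and $\inpnorm{w}=\error$, and feasibility of $\samplevec$ puts $\linmap(\repvec_{\samplevec})\in\atomscaled\subset\image(\linmap)$ into $\closednbhood{\samplevec}{\error}\cap\image(\linmap)$. When $\nbhood{\samplevec}{\error}\cap\image(\linmap)=\emptyset$, the number $\error$ is the distance from $\samplevec$ to the subspace $\image(\linmap)$ and $\linmap(\repvec_{\samplevec})$ realises it, so $w\perp\image(\linmap)$ and $\dualnorm{w}=\max_{\hvar\in\costatomset}\inprod{w}{\linmap(\hvar)}=0$; this is case (iii), $\separatorset=\emptyset$ (and, as already remarked, the primal is not strictly feasible here). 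When $\nbhood{\samplevec}{\error}\cap\image(\linmap)\neq\emptyset$, choose $y'=\linmap(g)$ with $\inpnorm{\samplevec-y'}<\error$; if $\dualnorm{w}=0$ then $w\perp\image(\linmap)$ (because $\costatomset$ is absorbing in $\R{\dsize}$ and $\dualnorm{w}=\max_{\hvar\in\costatomset}\inprod{w}{\linmap(\hvar)}$), whence $\inprod{w}{\samplevec}=\inprod{w}{\samplevec-y'}\leq\inpnorm{w}\inpnorm{\samplevec-y'}<\error\inpnorm{w}$, contradicting $\inprod{w}{\samplevec}-\error\inpnorm{w}=\mathsf{c}_{\samplevec}\dualnorm{w}=0$; hence $\dualnorm{w}>0$ and we are in case (ii). The crux is precisely this last determination of the sign of $\dualnorm{\samplevec-\linmap(\repvec_{\samplevec})}$ when $\regulizer=0$: it is where the placement of $\closednbhood{\samplevec}{\error}$ relative to the possibly proper subspace $\image(\linmap)$ decides between a unique dual optimiser and none, while everything else reduces to the already established Lemmas \ref{lemma:uniqueness-of-intersection}--\ref{lemma:optimal-separation-of-convex-bodies}.
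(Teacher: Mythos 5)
Your proof is correct and follows essentially the same route as the paper: the case \( \error = \regulizer = 0 \) via a supporting hyperplane within \( \image(\linmap) \), and the remaining cases via the separation identity \eqref{eq:the-other-inequality } together with Lemmas \ref{lemma:uniqueness-of-intersection}--\ref{lemma:optimal-separation-of-convex-bodies}, reducing everything to whether \( \dualnorm{\samplevec - \linmap(\repvec_{\samplevec})} \) is positive and settling that sign by the orthogonal-projection argument when \( \regulizer = 0 \). The only departures are cosmetic: you verify the case-(i) characterisation directly from Definition \ref{def:optimal-separator-set} instead of citing Lemma \ref{lemma:optimal-separation-of-convex-bodies}, and you handle the subcase \( \regulizer = 0 \), \( \nbhood{\samplevec}{\error} \cap \image(\linmap) \neq \emptyset \) by a direct Cauchy--Schwarz contradiction rather than proving the full equivalence \( \dualnorm{\samplevec - \linmap(\repvec_{\samplevec})} = 0 \iff \nbhood{\samplevec}{\error} \cap \image(\linmap) = \emptyset \) as the paper does.
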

\begin{proof}
If \( \error = \regulizer = 0 \), then the set \( \closednbhood{\samplevec}{\error} = \{ \samplevec \} \). Thus, in view of Lemma \ref{lemma:optimal-separation-of-convex-bodies} we know that \( \separator\opt \in \Lambda_0 (\linmap, \samplevec, 0) \) if and only if the linear functional \( \inprod{\separator\opt}{\cdot} \) supports the set \( S_0 (\linmap, \samplevec, 0) \) at \( \samplevec \), and satisfies \( \dualnorm{\separator\opt} = 1 \). It remains to be shown that the set \( \Lambda_0 (\linmap, \samplevec, 0) \) is non-empty, and we do so by showing that there exists \( \separator_{\linmap} \in \Lambda_0 (\linmap, \samplevec, 0) \cap \image (\linmap) \). Since \( \samplevec \) is \( (\linmap, 0, 0) \)-feasible, we have \( \samplevec \in \image (\linmap) \). We note from Lemma \ref{lemma:coding-problem-reformulation-1} that \( \encodedcost{\linmap}{\samplevec}{0}{0} \) is the least amount by which the set \( S_0 (\linmap,1) \) has to be linearly scaled so that it contains \( \samplevec \). This implies that \( \samplevec \) lies on the boundary of the set \( S_0 (\linmap, \samplevec, 0) \), i.e., \( \samplevec \notin \relinterior (S_0 (\linmap, \samplevec, 0)) \). In addition, since \( S_0 (\linmap, \samplevec, 0) \) is a convex subset of \( \image(\linmap) \), we know that there exists \( 0 \neq \separator_{\linmap} \in \image(\linmap) \) such that the linear functional \( \inprod{\separator_{\linmap}}{\cdot} \) supports the set \( S_0 (\linmap, \samplevec, 0) \) at the boundary point \( \samplevec \). As  result, we obtain:
\[
\begin{aligned}
\inprod{\separator_{\linmap}}{\samplevec} & = \max\limits_{z \in S_0 (\linmap, \samplevec, 0)} \inprod{\separator_{\linmap}}{z} = (\samplecost)^{1/\horder} \max\limits_{z \in S_0 (\linmap, 1)} \inprod{\separator_{\linmap}}{z} \\
& = (\samplecost)^{1/\horder} \dualnorm{\separator_{\linmap}} .
\end{aligned}
\]
Since \( S_0 (\linmap, 1) \) is an absorbing set to \( \image (\linmap) \) we have \( 0 \in \relinterior S_0 (\linmap, 1) \) and therefore \( 0 <  \dualnorm{\separator_{\linmap}} \). Thus, defining \( \separator\opt \define (1/ \dualnorm{\separator_{\linmap}})\separator_{\linmap} \) it readily follows that \( \separator\opt \in  \Lambda_0 (\linmap, \samplevec, 0) \). This establishes the assertion (i) of the proposition.

If either \( \error > 0 \) or \( \regulizer > 0 \), on the one hand we know from Lemma \ref{lemma:equality-condition} that \( ( \samplevec - \linmap (\repvec_{\samplevec}) ) \) satisfies
\[
\inprod{\samplevec - \linmap (\repvec_{\samplevec})}{\samplevec} - \error \inpnorm{\samplevec - \linmap (\repvec_{\samplevec})} = \max_{z \in \atomscaled} \inprod{\samplevec - \linmap (\repvec_{\samplevec})}{z} = (\samplecost)^{1/\horder} \dualnorm{\samplevec - \linmap (\repvec_{\samplevec})} .
\]
We immediately see that if \( \dualnorm{\samplevec - \linmap (\repvec_{\samplevec})} > 0 \), then \( \frac{\samplevec - \linmap (\repvec_{\samplevec})}{\dualnorm{\samplevec - \linmap (\repvec_{\samplevec})}} \in \separatorset \). On the other hand, if \( \separator\opt \in \separatorset \) then Lemma \ref{lemma:optimal-separation-of-convex-bodies} implies that \( \separator\opt \) must satisfy \eqref{eq:the-other-inequality }, and from Lemma \ref{lemma:equality-condition} we infer that \( \separator\opt = \alpha (\samplevec - \linmap (\repvec_{\samplevec})) \) for some \( \alpha > 0 \). From Definition \ref{def:optimal-separator-set} it immediately implies that if \(  \alpha (\samplevec - \linmap (\repvec_{\samplevec})) \in \separatorset \), then \( \dualnorm{ \samplevec - \linmap (\repvec_{\samplevec})} > 0 \) and \( \alpha = \frac{1}{\dualnorm{ \samplevec - \linmap (\repvec_{\samplevec})}} \). Thus the set \( \separatorset \) is non-empty, and is the singleton \( \left\{ \frac{\samplevec - \linmap (\repvec_{\samplevec})}{\dualnorm{\samplevec - \linmap (\repvec_{\samplevec})}} \right\} \) if and only if \( \dualnorm{\samplevec - \linmap (\repvec_{\samplevec})} > 0 \). 

We complete the proof by showing that \( \dualnorm{ \samplevec - \linmap (\repvec_{\samplevec}) } = 0 \) if and only if \( \regulizer = 0 \) and \( \nbhood{\samplevec}{\error} \cap \image (\linmap) = \emptyset \). On the one hand, if \( \regulizer = 0 \) and \( \nbhood{\samplevec}{\error} \cap \image (\linmap) = \emptyset \), then we have 
\[
\inpnorm{\samplevec - \pi_{\linmap} (\samplevec)} = \min_{z \in \image (\linmap)} \inpnorm{\samplevec - z} \geq \error .
\]
However, from Lemma \ref{lemma:uniqueness-of-intersection} we know that \( \inpnorm{\samplevec - \linmap (\repvec_{\samplevec}) } = \error \), and since \( \linmap (\repvec_{\samplevec}) \in \image (\linmap) \), we deduce that \( \pi_{\linmap} (\samplevec) = \linmap (\repvec_{\samplevec}) \).\footnote{\( \pi_{\linmap} : \hilbert \longrightarrow \image (\linmap) \) is the orthogonal projection operator onto \( \image (\linmap) \).} Due to orthogonality of projection, \( \inprod{\samplevec - \linmap (\repvec_{\samplevec}) }{z} = 0 \) for all \( z \in \image (\linmap) \). Since \( S_0 (\linmap , 1 ) \subset \image(\linmap) \), we obtain
\[
\dualnorm{\samplevec - \linmap (\repvec_{\samplevec}) } = \max_{z \in S_0 (\linmap, 1)} \inprod{\samplevec - \linmap (\repvec_{\samplevec}) }{z} = 0
\]
One the other hand,  if \( \dualnorm{\samplevec - \linmap (\repvec_{\samplevec}) } = 0 \), Lemma \ref{lemma:separation-dummy}(i) implies that \( \regulizer = 0 \). Moreover, since \( S_0 (\linmap, 1) \) is an absorbing set to \( \image(\linmap) \), we conclude from the definition \eqref{eq:dual-norm-definition} of the dual function that \( \inprod{\samplevec - \linmap (\repvec_{\samplevec}) }{z} = 0 \) for all \( z \in \image (\linmap) \). Furthermore, since \( \linmap (\repvec_{\samplevec}) \in \image (\linmap) \) it implies from the orthogonality principle that \( \pi_{\linmap} (\samplevec) = \linmap (\repvec_{\samplevec}) \). Consequently,
\[
\min_{z \in \image (\linmap)} \inpnorm{\samplevec - z} \ = \ \inpnorm{\samplevec - \pi_{\linmap} (\samplevec)} = \ \inpnorm{\samplevec - \linmap (\repvec_{\samplevec}) } \ = \ \error .
\]
In other words, we have \( \nbhood{\samplevec}{\error} \cap \image(\linmap) = \emptyset \). The proof is now complete.
\end{proof}

\begin{lemma}
\label{lemma:optimality-condition-of-hvar-and-separator}
Let the linear map \( \linmap : \R{\dsize} \longrightarrow \hilbert \) and non-negative real numbers \( \error, \regulizer \) be given, and \( \samplevec \in \hilbert \setminus \closednbhood{0}{\error} \) be any \( \Depsdelta \)-feasible vector such that \( \separatorset \neq \emptyset \). Then for every \( \separator\opt \in \separatorset \) and \( \hvar\opt \in \frac{1}{(\samplecost)^{1/\horder}} \cdot \codes \), we have
\begin{equation}
\label{eq:optimality-condition-of-hvar-and-separator}
\inprod{\separator\opt}{\linmap (\hvar\opt) } \ = \ \max_{h \in \costatomset} \ \inprod{\separator\opt}{\linmap (\hvar)} \ = \ 1 - \regulizer \inpnorm{\separator\opt} \; .
\end{equation}
\end{lemma}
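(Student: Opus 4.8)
The plan is to establish the two equalities in \eqref{eq:optimality-condition-of-hvar-and-separator} in turn: the right-hand one is essentially immediate, while the left-hand one will be extracted from the separation geometry of Lemmas \ref{lemma:uniqueness-of-intersection} and \ref{lemma:optimal-separation-of-convex-bodies}. For the right-hand equality, I would note that \( \separator\opt \in \separatorset \) forces \( \dualnorm{\separator\opt} = 1 \) by Definition \ref{def:optimal-separator-set}, while Lemma \ref{lemma:separation-dummy} writes \( \dualnorm{\separator\opt} = \regulizer \inpnorm{\separator\opt} + \max_{\hvar \in \costatomset} \inprod{\separator\opt}{\linmap(\hvar)} \); combining these gives \( \max_{\hvar \in \costatomset} \inprod{\separator\opt}{\linmap(\hvar)} = 1 - \regulizer \inpnorm{\separator\opt} \). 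The remaining content is therefore to show that the \emph{particular} point \( \hvar\opt \) attains this maximum.

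For the setup, I would put \( \mathsf{c}_{\samplevec} \define (\samplecost)^{1/\horder} \), which is strictly positive because \( \samplevec \notin \closednbhood{0}{\error} \) together with \eqref{eq:zero-approximate-sample} gives \( \samplecost > 0 \). Any \( \hvar\opt \in \tfrac{1}{(\samplecost)^{1/\horder}}\cdot\codes \) then has the form \( \hvar\opt = \repvec_{\samplevec}/\mathsf{c}_{\samplevec} \) with \( \repvec_{\samplevec} \in \codes \), and \( \hvar\opt \in \costatomset \) by positive homogeneity of \( \cost \). The containment I would record is
\[
\closednbhood{\linmap(\repvec_{\samplevec})}{\regulizer \mathsf{c}_{\samplevec}} \ = \ \mathsf{c}_{\samplevec}\cdot\closednbhood{\linmap(\hvar\opt)}{\regulizer} \ \subseteq \ \mathsf{c}_{\samplevec}\cdot\atomball \ = \ \atomscaled ,
\]
using linearity of \( \linmap \), \( \mathsf{c}_{\samplevec}>0 \), and the identity \( \atomball = \bigcup_{h\in\costatomset}\closednbhood{\linmap(h)}{\regulizer} \). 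I would also check that the unique intersection point \( y\opt \) of Lemma \ref{lemma:uniqueness-of-intersection}, in the form given in Remark \ref{remark:alternate-expression-for-intersection-point}, lies in this same ball: if \( \regulizer = 0 \) then \( y\opt = \linmap(\repvec_{\samplevec}) \), and if \( \regulizer > 0 \) the expression for \( y\opt \) gives \( \inpnorm{y\opt - \linmap(\repvec_{\samplevec})} = \tfrac{\regulizer \mathsf{c}_{\samplevec}}{\error + \regulizer \mathsf{c}_{\samplevec}}\, \inpnorm{\samplevec - \linmap(\repvec_{\samplevec})} = \regulizer \mathsf{c}_{\samplevec} \), the last equality using \( \inpnorm{\samplevec - \linmap(\repvec_{\samplevec})} = \error + \regulizer \mathsf{c}_{\samplevec} \) from Lemma \ref{lemma:uniqueness-of-intersection} (applicable since \( \inpnorm{\samplevec} > \error \)).

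To finish, I would invoke Lemma \ref{lemma:optimal-separation-of-convex-bodies}: \( \inprod{\separator\opt}{\cdot} \) supports \( \atomscaled \) at \( y\opt \), so \( \inprod{\separator\opt}{y\opt} = \max_{z\in\atomscaled}\inprod{\separator\opt}{z} = \mathsf{c}_{\samplevec}\max_{z\in\atomball}\inprod{\separator\opt}{z} = \mathsf{c}_{\samplevec}\,\dualnorm{\separator\opt} = \mathsf{c}_{\samplevec} \). Since \( y\opt \in \closednbhood{\linmap(\repvec_{\samplevec})}{\regulizer\mathsf{c}_{\samplevec}} \subseteq \atomscaled \), the sandwich
\[
\mathsf{c}_{\samplevec} = \inprod{\separator\opt}{y\opt} \ \leq \ \max_{z\in\closednbhood{\linmap(\repvec_{\samplevec})}{\regulizer\mathsf{c}_{\samplevec}}}\inprod{\separator\opt}{z} \ \leq \ \max_{z\in\atomscaled}\inprod{\separator\opt}{z} = \mathsf{c}_{\samplevec}
\]
forces all three terms equal. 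Evaluating the middle term by maximizing a linear functional over a closed ball gives \( \inprod{\separator\opt}{\linmap(\repvec_{\samplevec})} + \regulizer\mathsf{c}_{\samplevec}\inpnorm{\separator\opt} = \mathsf{c}_{\samplevec} \); dividing through by \( \mathsf{c}_{\samplevec}>0 \) yields \( \inprod{\separator\opt}{\linmap(\hvar\opt)} = 1 - \regulizer\inpnorm{\separator\opt} \), which together with the right-hand equality gives \eqref{eq:optimality-condition-of-hvar-and-separator}.

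I do not expect a genuinely hard obstacle here; the one point that needs care is the claim \( y\opt \in \closednbhood{\linmap(\repvec_{\samplevec})}{\regulizer\mathsf{c}_{\samplevec}} \), which rests on the error constraint being active at an optimal solution (Lemma \ref{lemma:uniqueness-of-intersection}) and hence on the standing hypothesis \( \samplevec \notin \closednbhood{0}{\error} \). Everything else is rescaling of the gauge/dual-norm identities and the sandwich.
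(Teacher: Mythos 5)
Your proposal is correct. The second equality is obtained exactly as in the paper (combine \( \dualnorm{\separator\opt}=1 \) from Definition \ref{def:optimal-separator-set} with the decomposition \eqref{eq:maximization-over-atomball-changed-to-hvar}), and your starting point for the first equality is also the paper's: Lemma \ref{lemma:optimal-separation-of-convex-bodies} gives \( \inprod{\separator\opt}{y\opt} = \max_{z\in\atomscaled}\inprod{\separator\opt}{z} = \mathsf{c}_{\samplevec}\dualnorm{\separator\opt} = \mathsf{c}_{\samplevec} \). Where you diverge is in how this is converted into \( \inprod{\separator\opt}{\linmap(\hvar\opt)} = 1-\regulizer\inpnorm{\separator\opt} \). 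The paper substitutes the expression for \( y\opt \) from Remark \ref{remark:alternate-expression-for-intersection-point} and then, for \( \regulizer>0 \), invokes Proposition \ref{proposition:description-of-the-separator-set} to assert that \( \separator\opt \) is co-linear with the residual \( \samplevec-\mathsf{c}_{\samplevec}\linmap(\hvar\opt) \), together with the active error constraint, to evaluate the cross term and cancel \( \mathsf{c}_{\samplevec} \). You instead observe \( y\opt \in \closednbhood{\linmap(\repvec_{\samplevec})}{\regulizer\mathsf{c}_{\samplevec}} \subseteq \atomscaled \) and run a sandwich: the supporting functional's maximum over the inner ball is pinched to \( \mathsf{c}_{\samplevec} \), and maximizing a linear functional over a Hilbert-space ball has the closed form \( \inprod{\separator\opt}{\linmap(\repvec_{\samplevec})} + \regulizer\mathsf{c}_{\samplevec}\inpnorm{\separator\opt} \). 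This buys two things: you never need the explicit description of \( \separatorset \) (Proposition \ref{proposition:description-of-the-separator-set}), only the separation statement of Lemma \ref{lemma:optimal-separation-of-convex-bodies}, and the cases \( \regulizer=0 \) and \( \regulizer>0 \) are handled uniformly; moreover, for membership of \( y\opt \) in the inner ball the feasibility inequality \( \inpnorm{\samplevec-\linmap(\repvec_{\samplevec})}\leq\error+\regulizer\mathsf{c}_{\samplevec} \) already suffices, so the one point you flagged as delicate (activeness of the error constraint) is not actually load-bearing in your route, whereas it is in the paper's. The paper's version, in exchange, makes the geometric picture of the optimal dual variable (residual direction) explicit, which it reuses elsewhere.
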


\begin{proof}
Applying \eqref{eq:maximization-over-atomball-changed-to-hvar} directly to \( \separator\opt \in \separatorset \) gives us
\begin{equation}
\label{eq:optimality-condition-of-hvar-and-separator-1}
\max_{\hvar \in \costatomset} \ \inprod{\separator\opt}{\linmap (\hvar)} \ = \ - \regulizer \inpnorm{\separator\opt} + \dualnorm{\separator\opt} \ = \ 1 - \regulizer \inpnorm{\separator\opt} .
\end{equation}
By denoting \( \mathsf{c}_{\samplevec} = (\samplecost)^{1/\horder} \), we know from \eqref{eq:the-other-inequality } that
\[
\inprod{\separator\opt}{y\opt} = \max_{z \in \atomscaled} \inprod{\separator\opt}{z} = \; \mathsf{c}_{\samplevec} \max_{z \in \atomball} \inprod{\separator\opt}{z} = \; \mathsf{c}_{\samplevec} \dualnorm{\separator\opt} = \mathsf{c}_{\samplevec} .
\]
On substituting for \( y\opt \) by considering \( \repvec_{\samplevec} = \mathsf{c}_{\samplevec} \hvar\opt \) in Remark \ref{remark:alternate-expression-for-intersection-point}, we get
\begin{equation}
\label{eq:dummy-eq-lemma:optimality-condition-of-hvar-and-separator}
\mathsf{c}_{\samplevec} = \inprod{\separator\opt}{y\opt} = \mathsf{c}_{\samplevec} \inprod{\separator\opt}{\linmap (\hvar\opt) } + \frac{\regulizer \mathsf{c}_{\samplevec}}{\error + \regulizer \mathsf{c}_{\samplevec}} \inprod{\separator\opt}{\big( \samplevec - \mathsf{c}_{\samplevec} \linmap (\hvar\opt) \big)} \indicator_{ ]0, +\infty[ } (\regulizer) .
\end{equation}
Whenever \( \regulizer > 0 \) we know from Proposition \ref{proposition:description-of-the-separator-set} that \( \separator\opt \) and \( \big( \samplevec - \mathsf{c}_{\samplevec} \linmap (\hvar\opt) \big) \) are co-linear. Thus, we obtain that:
\[
\inprod{\separator\opt}{ \big( \samplevec - \mathsf{c}_{\samplevec} \linmap (\hvar\opt) \big) } = \inpnorm{\separator\opt} \inpnorm{\samplevec - \mathsf{c}_{\samplevec} \linmap (\hvar\opt) } = (\error + \regulizer \mathsf{c}_{\samplevec}) \inpnorm{\separator\opt} ,
\]
where the last equality follows from Lemma \ref{lemma:uniqueness-of-intersection}. Note that \( \mathsf{c}_{\samplevec} > 0 \) since \( \inpnorm{\samplevec} > \error \). Therefore, cancelling \( \mathsf{c}_{\samplevec} \) throughout in \eqref{eq:dummy-eq-lemma:optimality-condition-of-hvar-and-separator} and simplifying for \( \inprod{\separator\opt}{\linmap (\hvar\opt) } \) yields
\begin{equation}
\label{eq:optimality-condition-of-hvar-and-separator-2}
\inprod{\separator\opt}{\linmap (\hvar\opt) } \; = \; 1 - \Big( \regulizer \inpnorm{\separator\opt} \indicator_{ ]0, +\infty[ } (\regulizer) \Big) \; = \; 1 - \regulizer \inpnorm{\separator\opt} ,
\end{equation}
\eqref{eq:optimality-condition-of-hvar-and-separator} follows at once from \eqref{eq:optimality-condition-of-hvar-and-separator-1} and \eqref{eq:optimality-condition-of-hvar-and-separator-2}.
\end{proof}

\subsection{Convex duality and equivalent reformulations of the linear inverse problem}
\begin{proof}[Proof of Lemma \ref{lemma:coding-problem-guage-function-equivalent}]
If \( \samplevec \) is not \( \Depsdelta \)-feasible, then we know that \( \regulizer = 0 \) and \( \closednbhood{\samplevec}{\error} \cap \image (\linmap) = \emptyset \). Consequently, \( \norm{y}{\linmap} = +\infty \) for all \( y \in \closednbhood{\samplevec}{\error} \). Therefore, the assertion holds since \( \samplecost = +\infty \).

If \( \samplevec \) is \( \Depsdelta \)-feasible, then from Lemma \ref{lemma:uniqueness-of-intersection}, we know that the sets \( \closednbhood{\samplevec}{\error} \) and \( \atomscaled \) intersect at a unique point \( y\opt \). Thus we have
\[
\min_{y \; \in \;  \closednbhood{\samplevec}{\error}} \; \norm{y}{\linmap} \leq \; \norm{y\opt}{\linmap} \leq (\samplecost)^{1/\horder} ,
\]
where the first inequality follows from the fact that \( y\opt \in \closednbhood{\samplevec}{\error} \) and the second one follows from \( y\opt \in (\samplecost)^{1/\horder} \cdot \atomball \) and the definition \eqref{eq:guage-function-definition} of the guage function \( \norm{\cdot}{\linmap} \).

On the one hand, for \( y \in \closednbhood{\samplevec}{\error} \) such that \( \norm{y}{\linmap} = +\infty \), the inequality \( (\samplecost)^{1/\horder} \leq \norm{y}{\linmap} \) holds trivially. On the other hand, for \( y \in \closednbhood{\samplevec}{\error} \) such that \( \norm{y}{\linmap} < +\infty \), we know from the definition \eqref{eq:guage-function-definition} that \( y \in \atomset{\norm{y}{\linmap}} \). Thus, \( \closednbhood{\samplevec}{\error} \cap \atomset{\norm{y}{\linmap}} \neq \emptyset \), and in view of Lemma \ref{lemma:coding-problem-reformulation-1}, we get \( (\samplecost)^{1/\horder} \leq \norm{y}{\linmap} \). Combining the two facts, we conclude 
\[
(\samplecost)^{1/\horder} \; \leq \; \min_{y \; \in \;  \closednbhood{\samplevec}{\error}} \; \norm{y}{\linmap} .
\]
Collecting the two inequalities, \eqref{eq:coding-problem-guage-function-equivalent} follows at once.
\end{proof}

\begin{remark}
The proof of the lemma als implies that \( \norm{y\opt}{\linmap} = (\samplecost)^{1/\horder} \), and therefore, \( y\opt \) is a minimizer in the problem \eqref{eq:coding-problem-guage-function-equivalent}. Furthermore, if \( y' \neq y\opt \) is also a minimizer, then we have \( \norm{y'}{\linmap} = (\samplecost)^{1/\horder} \) and \( y' \in \closednbhood{\samplevec}{\error} \). Then it follows that \( y' \in \atomset{\norm{y'}{\linmap}} = \atomscaled \), and thus \( y' \in \closednbhood{\samplevec}{\error} \cap \atomscaled \). From Lemma \ref{lemma:uniqueness-of-intersection}, we then have \( y' = y\opt \). Which is a contradiction. Thus,
\[
y\opt = \argmin_{y \; \in \; \closednbhood{\samplevec}{\error}} \ \norm{y}{\linmap}.
\]
\end{remark}

\begin{proof}[Proof of Theorem \ref{theorem:coding-problem-equivalent-sup-problem}]
Combining \eqref{eq:coding-problem-guage-function-equivalent} and \eqref{eq:strong-duality-of-guage-function}, we obtain
\begin{equation}
\label{eq:separation-principle-formulation-3}
\begin{aligned}
(\samplecost)^{1/\horder} 
& = \min\limits_{y \in \closednbhood{\samplevec}{\error} } \ \sup_{ \dualnorm{\separator} \leq 1 } \ \inprod{\separator}{y} \\
& \geq \ \sup_{ \dualnorm{\separator} \leq 1 } \ \min\limits_{y \in \closednbhood{\samplevec}{\error} } \ \inprod{\separator}{y} \\
& \geq
\begin{cases}
\begin{aligned}
& \sup_{\separator} && \inprod{\separator}{\samplevec} - \error \inpnorm{\separator}  \\
&\sbjto && \dualnorm{\separator} \leq 1 .
\end{aligned}
\end{cases}
\end{aligned}
\end{equation}
Therefore, \( (\samplecost)^{1/\horder} \) is an upper bound to the optimal value of \eqref{eq:coding-problem-equivalent-sup-problem}. We shall establish the proposition by considering all the possible cases and showing that the upper bound is indeed the supremum.

\textsf{Case 1:  When \( \samplevec \) is not \( \Depsdelta \)-feasible.} We know that this happens only if \( \regulizer = 0 \) and \( \closednbhood{\samplevec}{\error} \cap \image(\linmap) = \emptyset \). Denoting \( \pi_{\linmap} (\samplevec) \) to be the orthogonal projection of \( \samplevec \) onto \( \image(\linmap) \), we have \( \inprod{\samplevec - \pi_{\linmap} (\samplevec)}{z} = 0 \) for every \( z \in \image (\linmap) \). 

Since \( \regulizer = 0 \) we have \( S_0 (\linmap , 1) \subset \image (\linmap) \). Thus, for every \( \alpha \geq 0 \), letting \( \separator'_{\alpha} \define \alpha (\samplevec - \pi_{\linmap} (\samplevec) ) \) we see that \( \inprod{\separator'_{\alpha}}{z} = 0 \) for every \( z \in S_0 (\linmap , 1) \). In other words, we have \( \norm{\separator_{\alpha}}{\linmap}' = 0 \), and therefore, \( \separator'_{\alpha} \) is a feasible point in \eqref{eq:coding-problem-equivalent-sup-problem} for every \( \alpha \geq 0 \). Moreover, since \( \closednbhood{\samplevec}{\error} \cap \image(\linmap) = \emptyset \) we see that \( \inpnorm{\samplevec - \pi_{\linmap} (\samplevec) } \geq \error + \rho \) for some \( \rho > 0 \). Therefore, the objective function of \eqref{eq:coding-problem-equivalent-sup-problem} evaluated at \( \separator_{\alpha} \) satisfies
\[
\begin{aligned}
\inprod{\separator'_{\alpha}}{\samplevec} - \error \inpnorm{\separator'_{\alpha}} 
& = \alpha \Big( \inprod{\samplevec - \pi_{\linmap} (\samplevec)}{\samplevec}  - \error \inpnorm{ \samplevec - \pi_{\linmap} (\samplevec) } \Big) \\
& = \alpha \Big( \inpnorm{\samplevec - \pi_{\linmap} (\samplevec)}^2 + \inprod{\samplevec - \pi_{\linmap} (\samplevec)}{\pi_{\linmap} (\samplevec)}  - \error \inpnorm{ \samplevec - \pi_{\linmap} (\samplevec) } \Big) \\
& = \alpha \inpnorm{ \samplevec - \pi_{\linmap} (\samplevec) } \Big( \inpnorm{\samplevec - \pi_{\linmap} (\samplevec)} - \error \Big) \\
& \geq \alpha (\error + \rho) \rho . 
\end{aligned}
\]
By considering arbitrarily large value of \( \alpha \), we observe that the cost function in \eqref{eq:coding-problem-equivalent-sup-problem} attains arbitrarily large values for \( \separator'_{\alpha} \), i.e., the supremum is \( + \infty \).

\textsf{Case 2: When \( 0 \leq \inpnorm{\samplevec} \leq \error \).} We know that the optimal cost \( \samplecost \) is identically equal to zero, and we shall conclude that so is the value of the supremum in \eqref{eq:coding-problem-equivalent-sup-problem}. Indeed, since \( 0 \in \closednbhood{\samplevec}{\error} \), for every \( \separator \in \hilbert \) we have
\[
\inprod{\separator}{x} - \error \inpnorm{\separator} = \min\limits_{y \in \closednbhood{\samplevec}{\error}} \inprod{\separator}{y} \ \leq \ \inprod{\separator}{0} \ = \ 0 .
\]
Thus, zero is an upper bound for the supremum in \eqref{eq:coding-problem-equivalent-sup-problem}. Moreover, for \( \separator\opt = 0 \), we have \( \dualnorm{\separator\opt} = 0 \) and \( \inprod{\separator\opt}{\samplevec} - \error \inpnorm{\separator\opt} = 0 \). Thus, the value of the supremum is achieved, and \( \separator\opt = 0 \) is an optimal solution.\footnote{It is to be to be noted that whenever \( \inpnorm{\samplevec} = \error \), there could be non-zero optimal solutions, for e.g., \( \separator\opt = \alpha \samplevec \) for every \( \alpha \geq 0 \).}

\textsf{Case 3: When \( \samplevec \) is a \( \Depsdelta \)-feasible, and \( \inpnorm{\samplevec} > \error \) with \( \separatorset \neq \emptyset \).} We know that there exists a \( \separator\opt \in \separatorset \) and the following two conditions hold simultameously:
\begin{align*}
\dualnorm{\separator\opt} \; & = \; 1 , \text{ and} \\
\inprod{\separator\opt}{\samplevec} - \error \inpnorm{\separator\opt} \; & = \; (\samplecost)^{1/\horder} .
\end{align*}
The first equality implies that \( \separator\opt \) is a feasible point to \eqref{eq:coding-problem-equivalent-sup-problem}, and the latter, in conjunction with \eqref{eq:separation-principle-formulation-3} implies that the upper bound of \( (\samplecost)^{1/\horder} \) is achieved at \( \separator\opt \). Thus, \( (\samplecost)^{1/\horder} \) is indeed the optimum value of \eqref{eq:coding-problem-equivalent-sup-problem}, and that every \( \separator\opt \in \separatorset \) is an optimal solution to \eqref{eq:coding-problem-equivalent-sup-problem}.

Conversely, if \( \separator\opt \) is an optimal solution to \eqref{eq:coding-problem-equivalent-sup-problem}, then readily we get \( \inprod{\separator\opt}{\samplevec} - \error \inpnorm{\separator\opt} = (\samplecost)^{1/\horder} \). It suffices to show that \( \dualnorm{\separator\opt} = 1 \). Since \( (\samplecost)^{1/\horder} > 0 \), we have \( \inprod{\separator\opt}{\samplevec} - \error \inpnorm{\separator\opt} > 0 \). Therefore, from the assertions (i) and (ii) of Lemma \ref{lemma:separation-dummy}, we conclude that \( \dualnorm{\separator\opt} > 0 \). Moreover, if \( \dualnorm{\separator\opt} < 1 \), then \( \separator' \define \frac{1}{\dualnorm{\separator\opt}} \separator\opt \) is also a feasible point to \eqref{eq:coding-problem-equivalent-sup-problem}. However, the cost function evaluated at \( \separator' \) satisfies
\[
\inprod{\separator'}{\samplevec} - \error \inpnorm{\separator'} = \frac{1}{\dualnorm{\separator\opt}} \big( \inprod{\separator\opt}{\samplevec} - \error \inpnorm{\separator\opt} \big) > (\samplecost)^{1/\horder} ,
\]
which is a contradiction. Therefore, it follows at once that \( \separator\opt \in \separatorset \).

\textsf{Case 4: When \( \samplevec \) is a \( \Depsdelta \)-feasible vector and \( \inpnorm{\samplevec} > \error \) with \( \separatorset = \emptyset \).} We know from Proposition \ref{proposition:description-of-the-separator-set} that this happens only if \( \regulizer = 0 \) and \( \nbhood{\samplevec}{\error} \cap \image (\linmap) = \emptyset \). Since \( \samplevec \) is \( \Depsdelta \)-feasible, the set \( \closednbhood{\samplevec}{\error} \) intersects \( \image (\linmap) \) only at the point \( \pi_{\linmap} (\samplevec) \) - the orthogonal projection of \( \samplevec \) onto \( \image (\linmap) \). Since no point other than \( \pi_{\linmap} (\samplevec) \) in \( \closednbhood{\samplevec}{\error} \) intersects with \( \image(\linmap) \), the LIP \eqref{eq:coding-problem} reduces to the following:
\[
\begin{cases}
\begin{aligned}
& \minimize_{\repvec \; \in \; \R{\dsize}} && \cost (\repvec) \\
& \sbjto && \linmap (\repvec) = \pi_{\linmap} (\samplevec) ,
\end{aligned}
\end{cases}
\]
which simply is another LIP with parameters \( \pi_{\linmap} (\samplevec) \), \( \linmap \) and \( \error = \regulizer = 0 \). Since, \( \pi_{\linmap} (\samplevec) \in \image (\linmap) \), \( \pi_{\linmap} (\samplevec) \) is \( (\linmap, 0, 0) \)-feasible. Therefore, \( \encodedcost{\linmap}{\samplevec}{\error}{0} = \encodedcost{\linmap}{\pi_{\linmap} (\samplevec)}{0}{0} \) and \( \encodermap{\linmap}{\samplevec}{\error}{0} = \encodermap{\linmap}{\pi_{\linmap} (\samplevec)}{0}{0} \). In addition, from the Proposition \ref{proposition:description-of-the-separator-set} it follows that the set \( \Lambda_0 (\linmap, \pi_{\linmap} (\samplevec), 0) \) is non-empty, and there exists \( \separator' \in \image(\linmap) \) such that the following two conditions hold simultaneously.
\[
   \inprod{\separator'}{\pi_{\linmap} (\samplevec)} = \big( \encodedcost{\linmap}{\pi_{\linmap} (\samplevec)}{0}{0} \big)^{1/\horder} =  \big( \encodedcost{\linmap}{\samplevec}{\error}{0} \big)^{1/\horder} \; \text{and} \; \dualnorm{\separator'} = 1
\]
Using the above facts, we shall first establish that the value \( (\samplecost)^{1/\horder} \) is not just an upper bound but is indeed the supremum in \eqref{eq:coding-problem-equivalent-sup-problem}.

For every \( \alpha \geq 0 \) let \( \separator (\alpha) \define \separator' + \alpha (\samplevec - \pi_{\linmap} (\samplevec)) \). Since the linear functional \( \inprod{\samplevec - \pi_{\linmap} (\samplevec)}{\cdot} \) vanishes on \( \image (\linmap) \), for every \( z \in \image (\linmap) \) we have
\[
\begin{aligned}
& \inprod{\separator (\alpha)}{z} \; = \; \inprod{\separator' }{z} + \alpha \inprod{\samplevec - \pi_{\linmap} (\samplevec)}{z} \; = \; \inprod{\separator' }{z} , \ \text{and therefore,} \\
& \dualnorm{\separator (\alpha)} = \max_{z \in S_0 (\linmap , 1)} \inprod{\separator (\alpha)}{z} = \max_{z \in S_0 (\linmap , 1)} \inprod{\separator'}{z} = \dualnorm{\separator'} = 1 .
\end{aligned}
\]
Thus, \( \separator (\alpha) \) is a feasible point to \eqref{eq:coding-problem-equivalent-sup-problem}, and the cost function evaluated at \( \separator (\alpha) \) satisfies:
\[
\begin{aligned}
\inprod{\separator(\alpha)}{\samplevec} - \error \inpnorm{\separator(\alpha)} & = \inprod{\separator(\alpha)}{\pi_{\linmap} (\samplevec)} + \inprod{\separator(\alpha)}{\samplevec - \pi_{\linmap} (\samplevec)} - \error \inpnorm{\separator(\alpha)} \\
& = \inprod{\separator'}{\pi_{\linmap} (\samplevec)} + \alpha \inpnorm{\samplevec - \pi_{\linmap} (\samplevec)}^2 - \error \sqrt{\inpnorm{\separator'}^2 + \alpha^2 \inpnorm{\samplevec - \pi_{\linmap} (\samplevec)}^2} \\
& = \big( \encodedcost{\linmap}{\samplevec}{\error}{0} \big)^{1/\horder} + \error \Big( \alpha \error - \sqrt{\inpnorm{\separator'}^2 + \alpha^2 \error^2} \Big).
\end{aligned}
\]
Since \( \separator(\alpha) \) is feasible in \eqref{eq:coding-problem-equivalent-sup-problem} for every \( \alpha \geq 0 \), the supremum in \eqref{eq:coding-problem-equivalent-sup-problem} is sandwitched between \( \sup\limits_{\alpha \geq 0} \ \inprod{\separator(\alpha)}{\samplevec} - \error \inpnorm{\separator(\alpha)} \) and the optimal cost \( \big( \encodedcost{\linmap}{\samplevec}{\error}{0} \big)^{1/\horder} \). However, we see that:
\[
\sup\limits_{\alpha \geq 0} \ \inprod{\separator(\alpha)}{\samplevec} - \error \inpnorm{\separator(\alpha)} 
\ \geq \ \lim_{\alpha \to +\infty} \inprod{\separator(\alpha)}{\samplevec} - \error \inpnorm{\separator(\alpha)} \ = \ \big( \encodedcost{\linmap}{\samplevec}{\error}{0} \big)^{1/\horder} .\footnote{For any \( b > 0 \), we see that 
\[
\lim_{\alpha \to +\infty} \left( \alpha - \sqrt{b + \alpha^2} \right) = \lim_{\alpha \to +\infty} \frac{ \left( \alpha - \sqrt{b + \alpha^2} \right)\left( \alpha + \sqrt{b + \alpha^2} \right) }{\left( \alpha + \sqrt{b + \alpha^2} \right)} = \lim_{\alpha \to +\infty} \frac{- b}{\alpha + \sqrt{\alpha^2 + b}} = 0 .
\]}
\]
This implies that the supremum in \eqref{eq:coding-problem-equivalent-sup-problem} is indeed equal to \( ( \encodedcost{\linmap}{\samplevec}{\error}{0} )^{1/\horder} \). 

Now that we know the value of the supremum, it suffices to establish that \eqref{eq:coding-problem-equivalent-sup-problem} does not admit an optimal solution in this case. If there were any \( \separator' \) that is an optimal solution to \eqref{eq:coding-problem-equivalent-sup-problem}, then from the arguments provided in the proof of necessary implication for case 3, it follows that \( \separator' \in \Lambda_0 (\linmap, \samplevec, \error) \). This contradicts the premise \( \Lambda_0 (\linmap, \samplevec, \error) = \emptyset \). Therefore, \eqref{eq:coding-problem-equivalent-sup-problem} admits no solution whenever \( \Lambda_0 (\linmap, \samplevec, \error) = \emptyset \).
\end{proof}

\begin{lemma}
\label{lemma:sup-problem-1}
Let the linear map \( \linmap : \R{\dsize} \longrightarrow \hilbert \), non-negative real numbers \( \error , \regulizer \) and \( \samplevec \in \hilbert \setminus \closednbhood{0}{\error} \) be given. For every \( \hvar \in \costatomset \), consider the optimization problem
\begin{equation}
\label{eq:sup-problem-1}
\begin{cases}
\begin{aligned}
& \sup_{\separator} && \inprod{\separator}{\samplevec} - \error \inpnorm{\separator} \\
& \sbjto &&
\begin{cases}
\inprod{\separator}{\samplevec} - \error \inpnorm{\separator} \; > \; 0 , \\
\inprod{\separator}{\linmap (\hvar)} + \regulizer \inpnorm{\separator} \ \leq \ 1 .
\end{cases}
\end{aligned}
\end{cases}
\end{equation}
\begin{enumerate}[leftmargin = * , label = \rm{(\roman*)}]
 \item The optimal value of \eqref{eq:sup-problem-1} is equal to
\begin{equation}
\label{eq:eta-h-definition}
\dualvar_{\hvar} \define \min \big\{ \theta \geq 0 : \closednbhood{\samplevec}{\error} \cap \closednbhood{\linmap (\theta \hvar)}{\theta \regulizer} \neq \emptyset \big\} .
\end{equation}

\item \( \dualvar_{\hvar} \geq (\samplecost)^{1/\horder} \) and equality holds if and only if \( h \in \frac{1}{(\samplecost)^{1/\horder}} \codes \).

\item \( \dualvar_{\hvar} = +\infty \) if and only if there exists a \( \separator' \in \hilbert \) that simultaneously satisfies the conditions
\begin{itemize}
    \item \( \inprod{\separator'}{\samplevec} - \error \inpnorm{\separator'} > 0 \)
    \item \( \inprod{\separator'}{\linmap (\hvar)} + \regulizer \inpnorm{\separator'} \leq 0 \).
\end{itemize}
\end{enumerate}
\end{lemma}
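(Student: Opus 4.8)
The plan is to recast everything in terms of the two compact convex bodies $A\define\closednbhood{\samplevec}{\error}$ and $C\define\closednbhood{\linmap(\hvar)}{\regulizer}$ (for the fixed $\hvar\in\costatomset$), noting that $0\notin A$ because $\inpnorm{\samplevec}>\error$. By linearity $\closednbhood{\linmap(\scaling\hvar)}{\scaling\regulizer}=\scaling C$, so $\dualvar_{\hvar}=\min\{\scaling\ge 0:A\cap\scaling C\neq\emptyset\}$, and $\dualvar_{\hvar}>0$ since $0\cdot C=\{0\}$ misses $A$. Two rewritings are used throughout: the objective of \eqref{eq:sup-problem-1} is $\min_{y\in A}\inprod{\separator}{y}$, and the second constraint is $\max_{z\in C}\inprod{\separator}{z}\le 1$, since $\max_{z\in C}\inprod{\separator}{z}=\inprod{\separator}{\linmap(\hvar)}+\regulizer\inpnorm{\separator}$. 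The easy half of (i): if $\dualvar_{\hvar}<\infty$, fix $w\in A\cap\dualvar_{\hvar}C$; any feasible $\separator$ then has objective $\min_{y\in A}\inprod{\separator}{y}\le\inprod{\separator}{w}\le\dualvar_{\hvar}\max_{z\in C}\inprod{\separator}{z}\le\dualvar_{\hvar}$, so the optimal value is at most $\dualvar_{\hvar}$ (vacuously when $\dualvar_{\hvar}=\infty$).

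The reverse inequality in (i) is where I expect the real work to be. For each $\theta\in(0,\dualvar_{\hvar})$ I would strictly separate $A$ from the compact convex set $K_\theta\define\convhull(\{0\}\cup\theta C)=\bigcup_{\scaling\in[0,\theta]}\scaling C$ (a continuous image of $[0,\theta]\times C$); these are disjoint because $A\cap\scaling C=\emptyset$ for every $\scaling\le\theta<\dualvar_{\hvar}$, by minimality of $\dualvar_{\hvar}$. The separating $\separator_\theta\neq 0$ satisfies $\max(0,\theta\max_{z\in C}\inprod{\separator_\theta}{z})<\min_{y\in A}\inprod{\separator_\theta}{y}$, because $0\in K_\theta$; in particular $\min_{y\in A}\inprod{\separator_\theta}{y}>0$, so the positivity constraint is automatically met. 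If $\max_{z\in C}\inprod{\separator_\theta}{z}>0$, rescaling $\separator_\theta$ so that this quantity equals $1$ gives a point feasible for \eqref{eq:sup-problem-1} with objective $>\theta$; if $\max_{z\in C}\inprod{\separator_\theta}{z}\le 0$, then $t\separator_\theta$ is feasible for all $t>0$ with objective $\to+\infty$. Either way the optimal value is $\ge\theta$, and letting $\theta\uparrow\dualvar_{\hvar}$ finishes (i). The device that makes this clean — and which I would not want to skip — is throwing the origin into the separated set: that is exactly what forces the separating functional to be positive on $A$, which is otherwise the awkward point.

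For (ii), the inequality $\dualvar_{\hvar}\ge(\samplecost)^{1/\horder}$ is immediate: $\hvar\in\costatomset$ gives $\scaling C\subseteq\scaling\cdot\atomball=\atomset{\scaling}$, hence $\{\scaling:A\cap\scaling C\neq\emptyset\}\subseteq\{\scaling:A\cap\atomset{\scaling}\neq\emptyset\}$, and Lemma \ref{lemma:coding-problem-reformulation-1} identifies the minimum of the right-hand set with $(\samplecost)^{1/\horder}$ (both sides are $+\infty$ if $\samplevec$ is not $\Depsdelta$-feasible). For equality, put $\mathsf c_{\samplevec}\define(\samplecost)^{1/\horder}$. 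If $\dualvar_{\hvar}=\mathsf c_{\samplevec}$, take $y\in A\cap\mathsf c_{\samplevec}C$ and set $\repvec\define\mathsf c_{\samplevec}\hvar$; then $\cost(\repvec)^{1/\horder}=\mathsf c_{\samplevec}\cost(\hvar)^{1/\horder}\le\mathsf c_{\samplevec}$ and $\inpnorm{\samplevec-\linmap(\repvec)}\le\inpnorm{\samplevec-y}+\inpnorm{y-\linmap(\repvec)}\le\error+\regulizer\mathsf c_{\samplevec}$, so $(\mathsf c_{\samplevec},\repvec)$ is feasible for \eqref{eq:coding-problem} with optimal cost $\mathsf c_{\samplevec}^{\horder}$, whence $\repvec\in\codes$ and $\hvar\in\tfrac1{\mathsf c_{\samplevec}}\codes$. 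Conversely, if $\hvar=\tfrac1{\mathsf c_{\samplevec}}\repvec$ with $\repvec\in\codes$, then Lemma \ref{lemma:uniqueness-of-intersection} together with Remark \ref{remark:alternate-expression-for-intersection-point} shows the unique intersection point $y\opt$ of $\closednbhood{\samplevec}{\error}$ and $\atomscaled$ satisfies $\inpnorm{y\opt-\linmap(\repvec)}\le\regulizer\mathsf c_{\samplevec}$, i.e.\ $y\opt\in A\cap\mathsf c_{\samplevec}C$, so $\dualvar_{\hvar}\le\mathsf c_{\samplevec}$ and equality follows.

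For (iii): if $\separator'$ meets both listed conditions then a point $y\in A\cap\scaling C$ would give $0<\inprod{\separator'}{\samplevec}-\error\inpnorm{\separator'}\le\inprod{\separator'}{y}\le\scaling\max_{z\in C}\inprod{\separator'}{z}\le 0$, a contradiction, so $A\cap\scaling C=\emptyset$ for all $\scaling\ge 0$ and $\dualvar_{\hvar}=+\infty$. Conversely, $\dualvar_{\hvar}=+\infty$ says $A$ is disjoint from the convex cone $\bigcup_{\scaling\ge 0}\scaling C$; separating the compact set $A$ from this cone produces $\separator'\neq 0$ with $\max_{z\in C}\inprod{\separator'}{z}\le 0$ (a linear functional bounded above on a cone must vanish on it) and $\inprod{\separator'}{\samplevec}-\error\inpnorm{\separator'}\ge 0$, and whenever $0\notin C$ this cone is closed, the separation can be taken strict, and the last inequality becomes strict as required. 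Besides the lower bound in (i), the one place I would scrutinise is the strictness of that last inequality in the boundary configuration $\inpnorm{\linmap(\hvar)}=\regulizer$ with $\regulizer>0$ (so $0\in\partial C$ and $\bigcup_{\scaling\ge 0}\scaling C$ need not be closed), which requires a separate direct check.
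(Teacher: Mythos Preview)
Your proof is correct and uses the same geometric core as the paper --- separating $A=\closednbhood{\samplevec}{\error}$ from $K_\theta=\convhull(\{0\}\cup\theta C)$ (which the paper calls $S'(\theta)$) --- but packages part (i) more directly. The paper routes (i) through Lagrangian duality: it introduces an auxiliary function $L(\dualvar,\hvar)$ as in \eqref{eq:L-map_definition}, uses exactly your separation argument to show $L(\dualvar,\hvar)=+\infty$ for $\dualvar<\dualvar_{\hvar}$ and $L(\dualvar,\hvar)=0$ for $\dualvar\ge\dualvar_{\hvar}$, identifies the Lagrange dual of \eqref{eq:sup-problem-1} as $\inf_{\dualvar\ge 0}\{\dualvar+L(\dualvar,\hvar)\}=\dualvar_{\hvar}$, and then closes the gap by invoking Slater's condition for strong duality. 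You bypass that machinery entirely: the upper bound is read off from a witness $w\in A\cap\dualvar_{\hvar}C$, and the lower bound comes from rescaling the separating functional itself into a feasible point with objective exceeding $\theta$. Your route is more elementary and self-contained; the paper's is more systematic but leans on an external strong-duality theorem. Parts (ii) and (iii) are argued essentially identically in both proofs, including the appeal to Lemma~\ref{lemma:coding-problem-reformulation-1} for the inequality in (ii) and the cone-separation for the converse in (iii). Your flag on the boundary configuration $\inpnorm{\linmap(\hvar)}=\regulizer>0$ in (iii) is well-placed: the paper simply asserts that the cone $\bigcup_{\theta\ge 0}\theta C$ is closed and separates strictly, which fails precisely when $0\in\partial C$, so your caution here is in fact more careful than the paper's own treatment.
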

\begin{proof}
Let the map \( L (\dualvar , \hvar) : [0 , +\infty[ \times \costatomset \longrightarrow [0 , +\infty[ \) be defined by
\begin{equation}
\label{eq:L-map_definition}
L (\dualvar , \hvar) \define
\begin{cases}
\begin{aligned}
& \sup_{\separator} && \Big( \inprod{\separator}{\samplevec} - \error \inpnorm{\separator} \Big) - \dualvar \Big( \inprod{\separator}{\linmap (\hvar)} + \regulizer \inpnorm{\separator} \Big) \\
& \sbjto && \inprod{\separator}{\samplevec} - \error \inpnorm{\separator} > 0 .
\end{aligned}
\end{cases}
\end{equation}
For every \( \dualvar \geq 0 \), let us define the set \( S'(\dualvar) \define \bigcup\limits_{\theta \in [0 , \dualvar]} \closednbhood{\linmap (\theta \hvar)}{\regulizer \theta} \). Clearly \( S'(\dualvar) \) is a convex-compact subset of \( \hilbert \) and monotonic, i.e., \( S'(\dualvar) \subset S'(\dualvar') \) for every \( \dualvar \leq \dualvar' \).

For every \( \hvar \in \costatomset \) and \( \theta \geq 0 \), we observe that \( \closednbhood{\linmap (\theta \hvar)}{(\theta \regulizer)} = \theta \cdot \closednbhood{\linmap (\hvar)}{\regulizer} \). Since the sets \( \closednbhood{\samplevec}{\error} \) and \( \closednbhood{\linmap (\hvar)}{\regulizer} \) are compact, the minimization over \( \theta \geq 0 \) in \eqref{eq:eta-h-definition} is achieved. Therefore, we have \( \closednbhood{\samplevec}{\error} \cap \closednbhood{\linmap ( \dualvar_{\hvar} \hvar) }{( \dualvar_{\hvar} \regulizer)} \neq \emptyset \). On the one hand, for \( 0 \leq \dualvar < \dualvar_{\hvar} \leq + \infty \), we know that the convex sets \( \closednbhood{\samplevec}{\error} \) and \( S'(\dualvar) \) do not intersect. Therefore, there exists a non-zero \( \separator' \in \hilbert \) such that the linear functional \( \inprod{\separator'}{\cdot} \) separates them. In other words, we have
\[
\min_{y \in \closednbhood{\samplevec}{\error}} \; \inprod{\separator'}{y} \ > \max_{z \in S'(\dualvar)} \; \inprod{\separator'}{z} .
\]
Observing the following equalities
\[
\begin{aligned}
\min_{y \in \closednbhood{\samplevec}{\error}} \; \inprod{\separator'}{y} & = \inprod{\separator'}{\samplevec} - \error \inpnorm{\separator'} ,  \quad \text{and} \\
\max_{z \in S'(\dualvar)} \; \inprod{\separator'}{z} & = \max \Big\{ 0, \max_{z \in \closednbhood{\linmap (\dualvar \hvar)}{\regulizer \dualvar}} \; \inprod{\separator'}{z} \Big\} \\
& = \max \Big\{ 0 , \; \dualvar \Big( \inprod{\separator'}{\linmap (\hvar)} + \regulizer \inpnorm{\separator'} \Big) \Big\} ,
\end{aligned}
\]
we get
\begin{equation}
\begin{aligned}
\inprod{\separator'}{\samplevec} - \error \inpnorm{\separator'} \ > \ \max \Big\{ 0 , \dualvar \Big( \inprod{\separator'}{\linmap (\hvar)} + \regulizer \inpnorm{\separator'} \Big) \Big\} .
\end{aligned}
\end{equation}
It follows at once that for every \( \alpha \geq 0 \), \( \separator'_{\alpha} \define \alpha \separator' \) is a feasible point in \eqref{eq:L-map_definition}, and thus, we have 
\[
\begin{aligned}
L(\dualvar , \hvar) \ & \geq \ \sup_{\alpha \geq 0} \quad \Big( \inprod{\separator'_{\alpha}}{\samplevec} - \error \inpnorm{\separator'_{\alpha}} \Big) - \dualvar \Big( \inprod{\separator'_{\alpha}}{\linmap (\hvar)} + \regulizer \inpnorm{\separator'_{\alpha}} \Big) \\
& = \ \Big( \inprod{\separator'}{\samplevec} - \error \inpnorm{\separator'} \Big) - \dualvar \Big( \inprod{\separator'}{\linmap (\hvar)} + \regulizer \inpnorm{\separator'} \Big) \ \Big( \sup_{\alpha \geq 0} \; \alpha \Big) \\
& = \ +\infty.
\end{aligned}
\]

On the other hand, for \( \dualvar_{\hvar} \leq \dualvar < +\infty \), we know that \( \closednbhood{\samplevec}{\error} \cap S'(\dualvar) \neq \emptyset \). Due to convexity, we know that for every \( \separator \in \hilbert \), we have
\[
\inprod{\separator}{\samplevec} - \error \inpnorm{\separator} = \min_{y \in \closednbhood{\samplevec}{\error}} \; \inprod{\separator'}{y} \ \leq \max_{z \in S'(\dualvar)} \; \inprod{\separator'}{z} = \max \Big\{ 0 , \dualvar \Big( \inprod{\separator}{\linmap (\hvar)} + \regulizer \inpnorm{\separator} \Big) \Big\} .
\]
Therefore, for every \( \separator \) such that \( \inprod{\separator}{\samplevec} - \error \inpnorm{\separator} > 0 \), we obtain that \( \inprod{\separator}{\linmap (\hvar)} + \regulizer \inpnorm{\separator} > 0 \) and
\[
\inprod{\separator}{\samplevec} - \error \inpnorm{\separator} \; \leq \;  \dualvar \Big( \inprod{\separator}{\linmap (\hvar)} + \regulizer \inpnorm{\separator} \Big) .
\]
By taking the supremum over all \( \separator \), we obtain \( L(\dualvar , \hvar) \leq 0 \). However, by picking any \( \separator \) such that \( \inprod{\separator}{\samplevec} - \error \inpnorm{\separator} > 0 \), and defining \( \separator_{\alpha} \define \alpha \separator \) for every \( \alpha > 0 \), we immediately see that \( \inprod{\separator_{\alpha}}{\samplevec} - \error \inpnorm{\separator_{\alpha}} > 0 \) and
\[
0 = \lim_{\alpha \to 0} \; \Big( \inprod{\separator_{\alpha}}{\samplevec} - \error \inpnorm{\separator_{\alpha}} \Big) - \dualvar \Big( \inprod{\separator_{\alpha}}{\linmap (\hvar)} + \regulizer \inpnorm{\separator_{\alpha}} \Big) .
\]
Therefore, \( L (\dualvar , \hvar) = 0 \). Summarizing, we have:\footnote{It is to be noted that if \( \dualvar_{\hvar} = +\infty \), then \( L(\dualvar , \hvar) = +\infty \) for every \( \dualvar \in [0 , +\infty[ \).}
\[
L (\dualvar , \hvar) =
\begin{cases}
\begin{aligned}
 +\infty & \quad \text{if } \; 0 \leq \dualvar < \dualvar_{\hvar} \\
 0 & \quad \text{if } \; \dualvar_{\hvar} \leq \dualvar < \infty .
\end{aligned}
\end{cases}
\]

Let us consider the Lagrange dual of \eqref{eq:sup-problem-1}, which is written in the following inf-sup formulation.
\begin{equation}
\label{eq:sup-problem-primal-dual}
\begin{cases}
\begin{aligned}
& \inf\limits_{\dualvar \; \geq \; 0} \; \sup_{\separator }  && \inprod{\separator}{\samplevec} - \error \inpnorm{\separator} - \dualvar \Big( \regulizer \inpnorm{\separator} + \inprod{\separator}{\linmap (\hvar)} \; - \; 1 \Big) \\
& \sbjto &&  \inprod{\separator}{\samplevec} - \error \inpnorm{\separator} >  0 .
\end{aligned}
\end{cases}
\end{equation}
Solving for the supremum over \( \separator \), the inf-sup problem \eqref{eq:sup-problem-primal-dual} reduces to \( \inf\limits_{\dualvar \geq 0} \; \dualvar + L (\dualvar , \hvar) \). It is immediate that the optimal value of the inf-sup problem \eqref{eq:sup-problem-primal-dual} is equal to \( \dualvar_{\hvar} \). 

We observe that the optimization problem \eqref{eq:sup-problem-1}, is a convex program. Moreover, since \( \inpnorm{\samplevec} > \error \), we see that \( \separator' \define \alpha \samplevec \) is a strictly feasible point in \eqref{eq:sup-problem-1} for every \( 0 < \alpha < \frac{1}{\inprod{\samplevec}{\linmap (\hvar)} + \regulizer \inpnorm{\samplevec}} \). Therefore, strong duality holds for the convex problem \eqref{eq:sup-problem-1}, and the optimal value of \eqref{eq:sup-problem-1} is indeed equal to \( \dualvar_{\hvar} \). This establishes the assertion (i) of the lemma.

Since \( \closednbhood{\linmap (\hvar)}{\regulizer} \subset \atomball \) we see that 
\[
\closednbhood{\linmap (\dualvar_{\hvar} \hvar)}{\regulizer \dualvar_{\hvar}} \; = \; \dualvar_{\hvar} \cdot  \closednbhood{\linmap (\hvar)}{\regulizer} \; \subset \; \dualvar_{\hvar} \cdot \atomball \; \subset \; \atomset{\dualvar_{\hvar}} .
\]
Combining this with the fact that \( \closednbhood{\samplevec}{\error} \cap \closednbhood{\linmap ( \dualvar_{\hvar} \hvar) }{( \dualvar_{\hvar} \regulizer)} \neq \emptyset \), we immediately infer that \( \closednbhood{\samplevec}{\error} \cap \atomset{\dualvar_{\hvar}} \neq \emptyset \). In view of Lemma \ref{lemma:coding-problem-reformulation-1}, we have \( \dualvar_{\hvar} \geq (\samplecost)^{1 / \horder} \). It is a straight forward exercise to verify that \( \dualvar_{\hvar'} = (\samplecost)^{1 / \horder} \) for some \( \hvar' \in \costatomset \), if and only if \( (\samplecost)^{1/\horder} \hvar' \in \codes \). This establishes assertion (ii) of the lemma.

If there exists a \( \separator' \in \hilbert \) such that the conditions \( \inprod{\separator'}{\samplevec} - \error \inpnorm{\separator'} > 0 \) and \( \inprod{\separator'}{\linmap (\hvar)} + \regulizer 
\inpnorm{\separator'} \leq 0 \) hold simultaneously. Then for every \( \alpha > 0 \), \( \separator_{\alpha} \define \alpha \separator' \) is a feasible point in \eqref{eq:sup-problem-1}. Therefore, we have
\[
\dualvar_{\hvar} \ \geq \ \sup_{\alpha > 0} \; \inprod{\separator_{\alpha}}{\samplevec} - \error \inpnorm{\separator_{\alpha}} \ = \ +\infty .
\]
Conversely let \( \dualvar_{\hvar} = +\infty \), then we know that the compact-convex set \( \closednbhood{\samplevec}{\error} \) does not intersect with the closed convex-cone \( S' \define \bigcup\limits_{\theta \in [0 , +\infty[} \closednbhood{\linmap (\theta \hvar)}{\regulizer \theta} \). Since one of the sets involved is compact, there exists a \( \separator' \in \hilbert \) such that the liear functional \( \inprod{\separator'}{\cdot} \) separates these sets \emph{strictly}. Thus, we have 
\[
\max_{z \in S'} \ \inprod{\separator'}{z} \ < \ \min_{y \in \closednbhood{\samplevec}{\error}} \inprod{\separator'}{y} \ = \ \inprod{\separator'}{\samplevec} - \error \inpnorm{\separator'} .
\]
We note that the quantity \( \inprod{\separator'}{\samplevec} - \error \inpnorm{\separator'} \) is a minimum of a linear functional over a compact set, and thus finite. On the contrary, \( \max\limits_{z \in S'} \ \inprod{\separator'}{z} \) is a maximum of the linear functional \( \separator' \) over the cone \( S' \). Therefore, it can be either \( 0 \) or \( +\infty \). However, since \( \inprod{\separator'}{\samplevec} - \error \inpnorm{\separator'} \) is an upper bound to this maximum, we have \( 0 \; = \;\max\limits_{z \in S'} \ \inprod{\separator'}{z} \). Therefore, we get \( \inprod{\separator'}{\samplevec} - \error \inpnorm{\separator'} > 0 \), and since \( \closednbhood{\linmap (\hvar)}{\regulizer} \subset S' \) we also have \( \inprod{\separator'}{\linmap (\hvar)} + \regulizer \inpnorm{\separator'} \leq 0 \). This completes the proof.
\end{proof}

\begin{lemma}
\label{lemma:sup-problem-2}
Let the linear map \( \linmap \), real numbers \( \error , \regulizer \geq 0 \), \( q \in ]0, 1[ \), \( r > 0 \) and \( \samplevec \in \hilbert \setminus \closednbhood{\samplevec}{\error} \) be given. For every \( \hvar \in \costatomset \), let us consider the following optimization problem:
\begin{equation}
\label{eq:sup-problem-2}
\begin{cases}
\begin{aligned}
& \sup_{\separator } && r \Big( \inprod{\separator}{\samplevec} -  \error \inpnorm{\separator} \Big)^q - \Big( \regulizer \inpnorm{\separator} + \inprod{\separator}{\linmap (\hvar)} \Big) \\
& \sbjto &&  \inprod{\separator}{\samplevec} - \error \inpnorm{\separator} > 0 .
\end{aligned}
\end{cases}
\end{equation}
The optimal value of \eqref{eq:sup-problem-2} is \( s(r, q) \; \dualvar_{\hvar}^\frac{q}{1 - q} \), where \( \dualvar_{\hvar} \) is as defined in \eqref{eq:eta-h-definition} and \( s(r, q) \) is some constant .\footnote{ \( s(r, q) \define \Big( (1 - q) (q^q r )^{\frac{1}{1 - q}} \Big) \; \) }
\end{lemma}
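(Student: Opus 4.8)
The plan is to reduce \eqref{eq:sup-problem-2} to the problem \eqref{eq:sup-problem-1} already handled in Lemma \ref{lemma:sup-problem-1}, by exploiting positive homogeneity. Write $a(\separator) \define \inprod{\separator}{\samplevec} - \error \inpnorm{\separator}$ and $b(\separator) \define \regulizer \inpnorm{\separator} + \inprod{\separator}{\linmap(\hvar)}$, so that \eqref{eq:sup-problem-2} is $\sup\{\, r\, a(\separator)^q - b(\separator) : a(\separator) > 0 \,\}$; both $a$ and $b$ are positively homogeneous of degree $1$, and the feasible set $\{\separator : a(\separator) > 0\}$ is a non-empty cone (it contains $\separator = \samplevec$ since $\inpnorm{\samplevec} > \error$). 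I would therefore first maximize the objective along each ray $\{\alpha\separator : \alpha > 0\}$ and then take the supremum over rays, i.e. write the optimal value of \eqref{eq:sup-problem-2} as $\sup_{\separator :\, a(\separator) > 0} \ \sup_{\alpha > 0} \big( r \alpha^q a(\separator)^q - \alpha\, b(\separator) \big)$; this equality is immediate because the feasible set is a cone, so $\alpha\separator$ is feasible whenever $\separator$ is.

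First I would dispose of the case $\dualvar_{\hvar} = +\infty$. By Lemma \ref{lemma:sup-problem-1}(iii) this happens exactly when there is a $\separator'$ with $a(\separator') > 0$ and $b(\separator') \leq 0$; then along $\{\alpha\separator' : \alpha > 0\}$ the objective is bounded below by $r\alpha^q a(\separator')^q$, which tends to $+\infty$, so the optimal value of \eqref{eq:sup-problem-2} is $+\infty = s(r,q)\,\dualvar_{\hvar}^{q/(1-q)}$ (using $q/(1-q) > 0$). Assume from now on that $\dualvar_{\hvar} < +\infty$; the contrapositive of Lemma \ref{lemma:sup-problem-1}(iii) then gives $b(\separator) > 0$ for every $\separator$ with $a(\separator) > 0$.

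Next, for a fixed $\separator$ with $a(\separator) > 0$ (hence $b(\separator) > 0$), the scalar map $\alpha \mapsto r\alpha^q a(\separator)^q - \alpha\, b(\separator)$ on $]0,+\infty[$ is concave, tends to $0$ as $\alpha \to 0^+$, and tends to $-\infty$ as $\alpha \to +\infty$; a direct computation (the only real calculation in the proof) locates its unique maximizer at $\alpha^\star = \big( r q\, a(\separator)^q / b(\separator) \big)^{1/(1-q)}$ and evaluates the maximum to $s(r,q)\,\big( a(\separator)/b(\separator) \big)^{q/(1-q)}$, where $s(r,q) = (1-q)(q^q r)^{1/(1-q)}$. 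Taking the supremum over all feasible $\separator$ and using that $t \mapsto t^{q/(1-q)}$ is increasing on $[0,+\infty[$, the optimal value of \eqref{eq:sup-problem-2} equals $s(r,q)\,\big( \sup_{\separator :\, a(\separator) > 0} a(\separator)/b(\separator) \big)^{q/(1-q)}$.

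It remains to identify $\sup_{\separator :\, a(\separator) > 0} a(\separator)/b(\separator)$ with $\dualvar_{\hvar}$. Since $b$ is strictly positive on the cone $\{\separator : a(\separator) > 0\}$ and $a, b$ are positively homogeneous of degree $1$, the substitution $\separator \mapsto \separator / b(\separator)$ shows $\sup_{a(\separator) > 0} a(\separator)/b(\separator) = \sup\{\, a(\separator) : a(\separator) > 0,\ b(\separator) \leq 1 \,\}$, and the right-hand side is precisely the optimal value of \eqref{eq:sup-problem-1}, which is $\dualvar_{\hvar}$ by Lemma \ref{lemma:sup-problem-1}(i). Substituting back gives the optimal value of \eqref{eq:sup-problem-2} as $s(r,q)\,\dualvar_{\hvar}^{q/(1-q)}$, as claimed. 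The one point requiring care is keeping the two regimes $\dualvar_{\hvar} = +\infty$ and $\dualvar_{\hvar} < +\infty$ cleanly separated: the division by $b(\separator)$ used in both the ray-wise maximization and the final rescaling is legitimate only after Lemma \ref{lemma:sup-problem-1}(iii) has been invoked to exclude $b(\separator) \leq 0$ on the feasible cone in the finite regime; with that in hand the remaining steps are elementary.
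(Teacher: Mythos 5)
Your proof is correct and follows essentially the same route as the paper: both exploit the degree-one homogeneity of \( \inprod{\separator}{\samplevec} - \error \inpnorm{\separator} \) and \( \regulizer \inpnorm{\separator} + \inprod{\separator}{\linmap (\hvar)} \) to split off a scalar scale variable, solve \( \sup_{\alpha > 0} \big( r' \alpha^q - \alpha \big) \) in closed form to produce \( s(r,q) \), and then reduce to \eqref{eq:sup-problem-1} via Lemma \ref{lemma:sup-problem-1}. The only substantive differences are organizational --- you maximize along rays and normalize by \( b(\separator) \) afterwards, whereas the paper substitutes \( \separator = \alpha \separator' \) under the equality constraint \( \inprod{\separator'}{\linmap(\hvar)} + \regulizer \inpnorm{\separator'} = 1 \) and later relaxes it to an inequality --- and your explicit treatment of the regime \( \dualvar_{\hvar} = +\infty \) is a small completeness bonus, since the paper's written proof only addresses the case \( \dualvar_{\hvar} < +\infty \).
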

\begin{proof}
We begin by considering the case when \( \dualvar_{\hvar} < +\infty \). From the assertion (iii) of Lemma \ref{lemma:sup-problem-1}, it follows that \( \inprod{\separator}{\linmap (\hvar)} + \regulizer \inpnorm{\separator} > 0 \) for every \( \separator \in \hilbert \) satisfying \( \inprod{\separator}{\samplevec} - \error \inpnorm{\separator} > 0 \). Then, the optimization problem can be equivalently written as 
\[
\begin{cases}
\begin{aligned}
& \sup_{\separator , \; \alpha > 0 } && r \Big( \inprod{\separator}{\samplevec} -  \error \inpnorm{\separator} \Big)^q - \alpha \\
& \sbjto && 
\begin{cases}
\inprod{\separator}{\linmap (\hvar)} + \regulizer \inpnorm{\separator} = \alpha \\
\inprod{\separator}{\samplevec} - \error \inpnorm{\separator} > 0 .
\end{cases}
\end{aligned}
\end{cases}
\]
Redefining new variables \( \separator' \define \frac{1}{\alpha} \separator \), the above optimization problem is written as
\[
\begin{cases}
\begin{aligned}
& \sup_{\separator' , \; \alpha > 0 } &&  \alpha^q r \Big( \inprod{\separator'}{\samplevec} -  \error \inpnorm{\separator'} \Big)^q - \alpha \\
& \sbjto && 
\begin{cases}
\inprod{\separator'}{\linmap (\hvar)} + \regulizer \inpnorm{\separator'} = 1 \\
\inprod{\separator'}{\samplevec} - \error \inpnorm{\separator'} > 0 .
\end{cases}
\end{aligned}
\end{cases}
\]
By keeping a feasible \( \separator' \) fixed, one can explicitly optimize over \( \alpha > 0 \). In fact, for any \( r' > 0 \) we know that
\[
\sup\limits_{\alpha > 0} \ \big(r' \alpha^q - \alpha \big) \ = \ (r')^{ \frac{1}{1 - q} } q^{ \frac{q}{1 - q} } (1 - q) .
\]
Substituting \( r' = r \big( \inprod{\separator'}{\samplevec} -  \error \inpnorm{\separator'} \big)^q \), we see that \eqref{eq:sup-problem-2} simplifies to
\[
\begin{cases}
\begin{aligned}
& \sup_{\separator' } &&  s(r, q) \Big( \inprod{\separator'}{\samplevec} -  \error \inpnorm{\separator'} \Big)^{ \frac{q}{1 - q} } \\
& \sbjto && 
\begin{cases}
\inprod{\separator'}{\linmap (\hvar)} + \regulizer \inpnorm{\separator'} = 1 \\
\inprod{\separator'}{\samplevec} - \error \inpnorm{\separator'} > 0 .
\end{cases}
\end{aligned}
\end{cases}
\]
Since, \( \dualvar_{\hvar} < +\infty \), we know that \( \inprod{\separator'}{\linmap (\hvar)} + \regulizer \inpnorm{\separator'} > 0 \) for every \( \separator' \) satisfying \( \inprod{\separator'}{\samplevec} - \error \inpnorm{\separator'} > 0 \). Moreover, if \( \inprod{\separator'}{\linmap (\hvar)} + \regulizer \inpnorm{\separator'} < 1 \) also holds for \( \separator' \), we see that its scaled version \( \separator'' \define \frac{1}{\inprod{\separator'}{\linmap (\hvar)} + \regulizer \inpnorm{\separator'}} \separator' \), satisfies
\[
\begin{aligned}
& \inprod{\separator''}{\samplevec} - \error \inpnorm{\separator''} \; > \; \inprod{\separator'}{\samplevec} - \error \inpnorm{\separator'} \quad \text{and} \\
& \inprod{\separator''}{\linmap (\hvar)} + \regulizer \inpnorm{\separator''} = 1 .
\end{aligned}
\]
Therefore, the equality constraint \( \inprod{\separator'}{\linmap (\hvar)} + \regulizer \inpnorm{\separator'} = 1 \) can be relaxed to an inequality without changing the value of the supremum. Thus, we obtain the following problem equivalent to \eqref{eq:sup-problem-2}.
\[
\begin{cases}
\begin{aligned}
& \sup_{\separator' } &&  s(r, q) \Big( \inprod{\separator'}{\samplevec} -  \error \inpnorm{\separator'} \Big)^{ \frac{q}{1 - q} } \\
& \sbjto && 
\begin{cases}
\inprod{\separator'}{\linmap (\hvar)} + \regulizer \inpnorm{\separator'} \leq 1 \\
\inprod{\separator'}{\samplevec} - \error \inpnorm{\separator'} > 0 .
\end{cases}
\end{aligned}
\end{cases}
\]
Finally, we observe that \( [0 , \infty[ \ni (\cdot) \longmapsto (\cdot)^{\frac{q}{1 - q}} \in [0 , +\infty[ \) is an increasing function for every \( q \in ] 0 , 1 [ \). Then it follows at once from Lemma \ref{lemma:sup-problem-1} that the optimal value of \eqref{eq:sup-problem-2} is equal to \( s(r,q) (\dualvar_{\hvar})^{\frac{q}{1 - q}} \).
\end{proof}

\begin{proof}[Proof of Theorem \ref{theorem:coding-problem-primal-dual}]
Solving for the supremum over \( \separator \) for every \( \hvar \in \costatomset \) in the min-sup problem \eqref{eq:coding-problem-primal-dual}, we deduce from Lemma \ref{lemma:sup-problem-2} that \eqref{eq:coding-problem-primal-dual} reduces to 
\[
\min_{\hvar \; \in \; \costatomset} \; s(r, q) \; \dualvar_{\hvar}^{ \frac{q}{1 - q} } .
\]
Since \( ]0 , +\infty[ \ni \dualvar \longmapsto \dualvar^{ \frac{q}{1 - q} } \) is an increasing function for every \( q \in ]0 , 1[ \),  in view of the assertion (ii) of Lemma \ref{lemma:sup-problem-1}, we conclude that the minimization over the variable \( \hvar \) is achieved at \( \hvar\opt \) such that \( \dualvar_{\hvar\opt} = (\samplecost)^{ 1 / \horder } \). Therefore, the optimal value of the min-sup problem \eqref{eq:coding-problem-primal-dual} is equal to \( s(r,q) \; (\samplecost)^{ \frac{q}{\horder (1 - q)} } \) and the set of minimizers is \( \frac{1}{(\samplecost)^{ 1 / \horder }} \codes \).
This establishes the assertions (i) and (ii)-(a) of the theorem.

\noindent \textsf{Necessary condition for \( (\hvar\opt , \separator\opt) \) to be a saddle point solution.} \newline
Suppose that \( (\hvar\opt , \separator\opt ) \in \costatomset \times \hilbert \) is a saddle point solution to the min-sup problem \eqref{eq:coding-problem-primal-dual}. Then necessarily, we have
\begin{equation*}
\hvar\opt \in \;
\argmin\limits_{\hvar \; \in \; \costatomset}
\begin{cases}
\begin{aligned}
& \sup_{\separator } \ \ &&  r \Big( \inprod{\separator}{\samplevec} -  \error \inpnorm{\separator} \Big)^q - \Big( \regulizer \inpnorm{\separator} + \inprod{\separator}{\linmap (\hvar)} \Big) \\
& \sbjto &&  \inprod{\separator}{\samplevec} - \error \inpnorm{\separator} > 0 ,
\end{aligned}
\end{cases}
\end{equation*}
which implies that \( \hvar\opt \in \frac{1}{(\samplecost)^{1/\horder}} \cdot \codes \). Moreover, we also have 
\begin{equation}
\separator\opt \in 
\begin{cases}
\begin{aligned}
& \argmax_{\separator } \ \ && \min\limits_{\hvar \in \costatomset}
\begin{cases}
r \Big( \inprod{\separator}{\samplevec} -  \error \inpnorm{\separator} \Big)^q - \Big( \regulizer \inpnorm{\separator} + \inprod{\separator}{\linmap (\hvar)} \Big)
\end{cases}
\\
& \sbjto &&  \inprod{\separator}{\samplevec} - \error \inpnorm{\separator} > 0 .
\end{aligned}
\end{cases}
\end{equation}
The minimization over \( \hvar \) can be solved explicitly, and simplifying using \eqref{eq:maximization-over-atomball-changed-to-hvar}, we have
\[
\separator\opt \in 
\begin{cases}
\begin{aligned}
& \argmax_{\separator } \ \ && 
r \Big( \inprod{\separator}{\samplevec} -  \error \inpnorm{\separator} \Big)^q - \dualnorm{\separator} \\
& \sbjto &&  \inprod{\separator}{\samplevec} - \error \inpnorm{\separator} \geq 0 .
\end{aligned}
\end{cases}
\]
By defining the new variables \( \alpha \define \dualnorm{\separator} \), and \( \separator' \define \frac{1}{\dualnorm{\separator}} \separator \), and writing the above optimization problem in terms of the variables \( (\separator' , \alpha) \), we obtain
\begin{equation}
\label{eq:optimal-separator-inclusion}
\Big( \frac{\separator\opt}{\dualnorm{\separator\opt}} , \; \dualnorm{\separator\opt} \Big) \; \in 
\begin{cases}
\begin{aligned}
& \argmax_{( \separator' , \; \alpha ) } && r \alpha^q \Big( \inprod{\separator'}{\samplevec} -  \error \inpnorm{\separator'} \Big)^q - \alpha \\
& \sbjto && 
\begin{cases}
\alpha > 0 , \\
\dualnorm{\separator'} = 1 , \\
\inprod{\separator'}{\samplevec} - \error \inpnorm{\separator'} > 0 .
\end{cases}
\end{aligned}
\end{cases}    
\end{equation}
Observe that for every feasible \( \separator' \), the optimization over the variable \( \alpha \) can be solved explicitly. Then from arguments similar to the ones provided in the proof of Lemma \ref{lemma:sup-problem-2}, we conclude that
\[
\begin{aligned}
\frac{1}{\dualnorm{\separator\opt}} \separator\opt & \in
\begin{cases}
\begin{aligned}
& \argmax_{\separator' } &&
\max_{\alpha > 0}
\begin{cases}
r \alpha^q \Big( \inprod{\separator'}{\samplevec} -  \error \inpnorm{\separator'} \Big)^q - \alpha
\end{cases}
\\
& \sbjto && 
\begin{cases}
\dualnorm{\separator'} = 1 \\
\inprod{\separator'}{\samplevec} - \error \inpnorm{\separator'} > 0 ,
\end{cases}
\end{aligned}
\end{cases}
\\
& =
\begin{cases}
\begin{aligned}
& \argmax_{\separator' } && s(r,q) \; \Big( \inprod{\separator'}{\samplevec} -  \error \inpnorm{\separator'} \Big)^{\frac{q}{1 - q}} \\
& \sbjto && 
\begin{cases}
\dualnorm{\separator'} = 1 \\
\inprod{\separator'}{\samplevec} - \error \inpnorm{\separator'} > 0 ,
\end{cases}
\end{aligned}
\end{cases}
\\
& =
\begin{cases}
\begin{aligned}
& \argmax_{\separator' } && \inprod{\separator'}{\samplevec} -  \error \inpnorm{\separator'} \\
& \sbjto && 
\begin{cases}
\dualnorm{\separator'} \leq 1 \\
\inprod{\separator'}{\samplevec} - \error \inpnorm{\separator'} > 0 ,
\end{cases}
\end{aligned}
\end{cases}
\\
& = \separatorset.\footnotemark
\end{aligned}
\footnotetext{Since \( \inpnorm{\samplevec} > \error \), we know that the optimal value achieved in \eqref{eq:coding-problem-equivalent-sup-problem} is positive. Therefore, adding the additional constraint \( \inprod{\separator'}{\samplevec} - \error \inpnorm{\separator'} > 0  \) neither changes the optimizer nor the optimal value. Then the last last equality follows from Theorem \ref{theorem:coding-problem-equivalent-sup-problem}.}
\]
Since every \( \separator \in \separatorset \) satisfies \( \inprod{\separator}{\samplevec} - \error \inpnorm{\separator} = \samplecost^{\frac{1}{\horder}}\), we conclude from \eqref{eq:optimal-separator-inclusion} that the following also holds.
\[
\begin{aligned}
\dualnorm{\separator\opt} & \in \; \argmax_{\alpha \; > \; 0} \Big\{ r \alpha^q \samplecost^{\frac{q}{\horder}} - \alpha
\\
& = ( r q )^{\frac{1}{1 - q}} \big( \samplecost \big)^{\frac{q}{\horder (1 - q)}} .
\end{aligned}
\]
Therefore, \( \separator\opt \in ( r q )^{\frac{1}{1 - q}} \big( \samplecost \big)^{\frac{q}{\horder (1 - q)}} \cdot \separatorset \), and the necessary conditions hold.

\noindent \textsf{Sufficient condition for \( (\hvar\opt , \separator\opt) \) to be a saddle point solution.} \newline
Since \( 0 < ( r q )^{\frac{1}{1 - q}} \big( \samplecost \big)^{\frac{q}{\horder (1 - q)}} \), we conclude from \eqref{eq:optimality-condition-of-hvar-and-separator}, that \( \inprod{\separator\opt}{\linmap (\hvar\opt) } = \max\limits_{\hvar \in \costatomset} \inprod{\separator\opt}{\linmap (\hvar)} \). Then it immediately follows that
\begin{equation}
\label{eq:saddle-point-condition-hvar}
\hvar\opt \in \argmin_{\hvar \in \costatomset} \ r \Big( \inprod{\separator\opt}{\samplevec} -  \error \inpnorm{\separator\opt} \Big)^q - \Big( \regulizer \inpnorm{\separator\opt} + \inprod{\separator\opt}{\linmap (\hvar)} \Big) . 
\end{equation}

From Lemma \ref{lemma:sup-problem-1}(ii), we note that \( \dualvar_{\hvar\opt} = \samplecost^{1/{\horder}} \). Therefore, from Lemma \ref{lemma:sup-problem-2} we have
\[
s(r,q) \samplecost^{ \frac{q}{\horder (1 - q)} } = 
\begin{cases}
\begin{aligned}
& \sup\limits_{\separator} &&  r \Big( \inprod{\separator}{\samplevec} -  \error \inpnorm{\separator} \Big)^q - \Big( \regulizer \inpnorm{\separator} + \inprod{\separator}{\linmap (\hvar\opt}) \Big) \\
& \sbjto &&  \inprod{\separator}{\samplevec} - \error \inpnorm{\separator} > 0 .
\end{aligned}
\end{cases}
\]
Moreover, from \eqref{eq:optimality-condition-of-hvar-and-separator} and the Definition \ref{def:optimal-separator-set} of the set \( \separatorset \), it is a straightforward exercise to verify that
\[
s(r,q) \samplecost^{ \frac{q}{\horder (1 - q)} } = r \Big( \inprod{\separator\opt}{\samplevec} -  \error \inpnorm{\separator\opt} \Big)^q - \Big( \regulizer \inpnorm{\separator\opt} + \inprod{\separator\opt}{\linmap (\hvar)} \Big) .
\]
Since it is obvious from the Definition \ref{def:optimal-separator-set} that \( \inprod{\separator\opt}{\samplevec} - \error \inpnorm{\separator\opt} > 0 \), we get at once that
\begin{equation}
\label{eq:saddle-point-condition-separator}
\separator\opt \in \;
\begin{cases}
\begin{aligned}
& \argmax\limits_{\separator} &&  r \Big( \inprod{\separator}{\samplevec} -  \error \inpnorm{\separator} \Big)^q - \Big( \regulizer \inpnorm{\separator} + \inprod{\separator}{\linmap (\hvar\opt) } \Big) \\
& \sbjto &&  \inprod{\separator}{\samplevec} - \error \inpnorm{\separator} > 0 .
\end{aligned}
\end{cases}
\end{equation}
Collecting \eqref{eq:saddle-point-condition-hvar} and \eqref{eq:saddle-point-condition-separator}, we conclude that \( (\hvar\opt , \separator\opt) \) is indeed a saddle point solution to \eqref{eq:coding-problem-primal-dual}. The proof is now complete.
\end{proof}

\section{Conclusion}
In this article, the convex duality of the linear inverse problems has been studied and equivalent convex-concave min-max formulation has been provided. Such a reformulation provides a simple and effective generic algorithm to solve LIPs, and furthermore, is crucial for solving the error constrained dictionary learning problem.

\vskip 0.2in
\bibliographystyle{alpha}
\bibliography{ref}

\newcommand{\etalchar}[1]{$^{#1}$}
\begin{thebibliography}{CRPW12}

\bibitem[AEB06]{aharon2006k}
Michal Aharon, Michael Elad, and Alfred Bruckstein.
\newblock K-svd: An algorithm for designing overcomplete dictionaries for
  sparse representation.
\newblock {\em IEEE Transactions on signal processing}, 54(11):4311--4322,
  2006.

\bibitem[CR09]{candes2009exact}
Emmanuel~J Cand{\`e}s and Benjamin Recht.
\newblock Exact matrix completion via convex optimization.
\newblock {\em Foundations of Computational mathematics}, 9(6):717, 2009.

\bibitem[CRPW12]{chandrasekaran2012convex}
Venkat Chandrasekaran, Benjamin Recht, Pablo~A Parrilo, and Alan~S Willsky.
\newblock The convex geometry of linear inverse problems.
\newblock {\em Foundations of Computational mathematics}, 12(6):805--849, 2012.

\bibitem[CRT06a]{candes2006robust}
Emmanuel~J Cand{\`e}s, Justin Romberg, and Terence Tao.
\newblock Robust uncertainty principles: Exact signal reconstruction from
  highly incomplete frequency information.
\newblock {\em IEEE Transactions on information theory}, 52(2):489--509, 2006.

\bibitem[CRT06b]{candes2006stable}
Emmanuel~J Candes, Justin~K Romberg, and Terence Tao.
\newblock Stable signal recovery from incomplete and inaccurate measurements.
\newblock {\em Communications on Pure and Applied Mathematics: A Journal Issued
  by the Courant Institute of Mathematical Sciences}, 59(8):1207--1223, 2006.

\bibitem[CW08]{candes2008introduction}
Emmanuel~J Cand{\`e}s and Michael~B Wakin.
\newblock An introduction to compressive sampling [a sensing/sampling paradigm
  that goes against the common knowledge in data acquisition].
\newblock {\em IEEE signal processing magazine}, 25(2):21--30, 2008.

\bibitem[D{\etalchar{+}}06]{donoho2006compressed}
David~L Donoho et~al.
\newblock Compressed sensing.
\newblock {\em IEEE Transactions on information theory}, 52(4):1289--1306,
  2006.

\bibitem[EA06]{elad2006image}
Michael Elad and Michal Aharon.
\newblock Image denoising via sparse and redundant representations over learned
  dictionaries.
\newblock {\em IEEE Transactions on Image processing}, 15(12):3736--3745, 2006.

\bibitem[Hoy04]{hoyer2004non}
Patrik~O Hoyer.
\newblock Non-negative matrix factorization with sparseness constraints.
\newblock {\em Journal of machine learning research}, 5(Nov):1457--1469, 2004.

\bibitem[LS99]{lee1999learning}
Daniel~D Lee and H~Sebastian Seung.
\newblock Learning the parts of objects by non-negative matrix factorization.
\newblock {\em Nature}, 401(6755):788, 1999.

\bibitem[LS01]{lee2001algorithms}
Daniel~D Lee and H~Sebastian Seung.
\newblock Algorithms for non-negative matrix factorization.
\newblock In {\em Advances in neural information processing systems}, pages
  556--562, 2001.

\bibitem[MBPS10]{mairal2010online}
J.~Mairal, F.~Bach, J.~Ponce, and G.~Sapiro.
\newblock Online learning for matrix factorization and sparse coding.
\newblock {\em Journal of Machine Learning Research}, 11(Jan):19--60, 2010.

\bibitem[R{\etalchar{+}}64]{rudin1964principles}
Walter Rudin et~al.
\newblock {\em Principles of mathematical analysis}, volume~3.
\newblock McGraw-hill New York, 1964.

\bibitem[RFP10]{recht2010guaranteed}
Benjamin Recht, Maryam Fazel, and Pablo~A Parrilo.
\newblock Guaranteed minimum-rank solutions of linear matrix equations via
  nuclear norm minimization.
\newblock {\em SIAM review}, 52(3):471--501, 2010.

\bibitem[SC19]{sheriff2019dictionary}
Mohammed~Rayyan Sheriff and Debasish Chatterjee.
\newblock Dictionary learning with almost sure error constraints.
\newblock {\em arXiv preprint arXiv:1910.08828}, 2019.

\bibitem[TF11]{tosic2011dictionary}
I.~Tosic and P.~Frossard.
\newblock Dictionary learning.
\newblock {\em IEEE Signal Processing Magazine}, 28(2):27--38, 2011.

\end{thebibliography}

\end{document}